\documentclass[sigconf]{aamas} 
\renewcommand\footnotetextcopyrightpermission[1]{} % removes footnote with conference information in first column
\usepackage{balance} % for balancing columns on the final page
\usepackage{caption, subcaption}
\usepackage{multirow}

\usepackage[noend]{algpseudocode}
\usepackage{algorithm}
\usepackage{float}
\usepackage{amsfonts, amsmath, amsthm}
\usepackage{bm}
\usepackage{dsfont}  % for \mathds{1}
\usepackage{xcolor}         % colors
\usepackage{color}         % colors

\usepackage{multirow} % multirow in tables

\usepackage{thmtools}
\usepackage{thm-restate}

\newtheorem{theorem}{Theorem}[section]
\newtheorem{lemma}{Lemma}[section]

\newtheorem{restatedthminner}{Theorem}
\newenvironment{restatedthm}[1]{%
\renewcommand\therestatedthminner{#1}%
\restatedthminner
}{\endrestatedthminner}

\theoremstyle{definition}
\newtheorem{definition}{Definition}[section]

\theoremstyle{remark}
\newtheorem*{remark}{Remark}
\newtheorem*{assumption}{Assumption}

\usepackage{color}

\setcopyright{ifaamas}
\acmConference[AAMAS '26]{Proc.\@ of the 25th International Conference
on Autonomous Agents and Multiagent Systems (AAMAS 2026)}{May 25 -- 29, 2026}
{Paphos, Cyprus}{C.~Amato, L.~Dennis, V.~Mascardi, J.~Thangarajah (eds.)}
\copyrightyear{2026}
\acmYear{2026}
\acmDOI{}
\acmPrice{}
\acmISBN{}

\acmSubmissionID{\#281}

\title[Safe Reinforcement Learning via Recovery-based Shielding with Gaussian Process]{Safe Reinforcement Learning via Recovery-based Shielding with Gaussian Process Dynamics Models}

\author{Alexander W. Goodall}
\affiliation{
  \institution{Imperial College London, Department of Computing}
  \city{London}
  \country{United Kingdom}}
\email{a.goodall22@imperial.ac.uk}

\author{Francesco Belardinelli}
\affiliation{
  \institution{Imperial College London, Department of Computing}
  \city{London}
  \country{United Kingdom}}
\email{francesco.belardinelli@imperial.ac.uk}

\begin{abstract}
Reinforcement learning (RL) is a powerful framework for optimal decision-making and control but often lacks provable guarantees for safety-critical applications. In this paper, we introduce a novel recovery-based shielding framework that enables safe RL with a provable safety lower bound for unknown and non-linear continuous dynamical systems. The proposed approach integrates a backup policy (shield) with the RL agent, leveraging Gaussian process (GP) based uncertainty quantification to predict potential violations of safety constraints, dynamically recovering to safe trajectories only when necessary. Experience gathered by the `shielded' agent is used to construct the GP models, with policy optimization via internal model-based sampling -- enabling unrestricted exploration and sample efficient learning, without compromising safety. Empirically our approach demonstrates strong performance and strict safety-compliance on a suite of continuous control environments. %\footnote{Full paper including technical appendix at: \url{https://arxiv.org/abs/}} 
\end{abstract}

\keywords{Safe Control; Reinforcement Learning; Gaussian Process}

\newcommand{\BibTeX}{\rm B\kern-.05em{\sc i\kern-.025em b}\kern-.08em\TeX}

\begin{document}

%%% The following commands remove the headers in your paper. For final 
%%% papers, these will be inserted during the pagination process.

\pagestyle{fancy}
\fancyhead{}

%%% The next command prints the information defined in the preamble.

\maketitle 

%%%%%%%%%%%%%%%%%%%%%%%%%%%%%%%%%%%%%%%%%%%%%%%%%%%%%%%%%%%%%%%%%%%%%%%%

\section{Introduction}

Safe reinforcement learning (RL) \cite{garcia2015comprehensive} is an active research area focused on training policies that strictly respect safety constraints during both learning and deployment. In high-stakes domains like robotics, autonomous driving, and healthcare, adhering to safety constraints is crucial, as even a single policy failure can lead to catastrophic consequences \cite{amodei2016concrete}. Provably safe RL \cite{krasowski2023provably} is a promising paradigm for these safety-critical settings, aiming to provide formal guarantees on an agent’s performance. Such guarantees typically fall into two categories: \emph{probabilistic} guarantees or \emph{hard} (deterministic) guarantees. Probabilistic approaches leverage either known stochastic models of the environment \cite{konighofer2021online,wabersich2018linear} or learned models of the environment to verify safety with high confidence \cite{berkenkamp2017safe,wang2023enforcing,gronauer2024reinforcement}, whereas hard-guarantee approaches incorporate strong prior knowledge (e.g. exact system dynamics or formal abstractions) to ensure absolute constraint satisfaction \cite{bastanib2021safe, alshiekh2018safe}. Despite substantial progress, ensuring safety in \emph{unknown} or \emph{uncertain} environments remains highly challenging -- existing methods often rely on idealized assumptions or suffer from scalability issues.

A widely-used paradigm for enforcing safety at runtime is \emph{shielding} \cite{bloem2015shield, alshiekh2018safe}. In this paradigm, each action proposed by the RL agent is checked against safety constraints before execution; if an action is deemed unsafe, it is overridden by a backup (safe) action that keeps the system within a designated safe set. The backup policy is typically a simpler, trusted controller that ensures the state remains in an \emph{operationally safe} region. This scheme allows the learned policy to operate freely when far from danger, and invokes the backup policy only near the boundary of the safe region. An important benefit of shielding is that formal guarantees can be obtained by verifying the backup policy (which is designed for safety) instead of the learned policy, greatly simplifying the verification burden, if for example, the backup policy is simpler (e.g., linear controller).

In the context of continuous control, shielding has been used in many prior works \cite{bastani2021safe,banerjee2024dynamic,wachi2024safe, li2020robust, bastanib2021safe} and even extended to high-dimensional visual-input scenarios \cite{goodall2024leveraging}. 
\emph{Model Predictive Shielding (MPS)} \cite{bastani2021safe} is a classic example of this approach: it combines a stabilizing linear controller with a recovery strategy to verify ``recoverable'' safe states on-the-fly. MPS demonstrated that high-performance RL is achievable with a safety net, but it also highlighted key limitations of prior shielding methods. Notably, MPS (and extensions thereof, e.g., \emph{Robust MPS} \cite{li2020robust}, \emph{Dynamic MPS} \cite{banerjee2024dynamic}) assumes the environment is known and deterministic; or relies on heuristics that break safety guarantees \cite{li2020robust}. For stochastic systems, \emph{Statistical MPS} \cite{bastanib2021safe}, uses samples from a known stochastic model of the environment, for high levels of safety, these sampling based approaches scale poorly and loose any meaningful guarantees. Formal methods based on Hamilton-Jacobi (HJ) reachability analysis \cite{akametalu2014reachability} compute invariant safe sets for continuous control tasks and use them to enforce safety via action replacement. 
%\new{
While HJ-based approaches \cite{fisac2018general, wabersich2023data} provide strong safety guarantees they scale exponentially in the state dimension,
%-- a well known results -- 
making them impractical for high-dimensional systems \cite{mitchell2005time}. Other approaches to safe RL may encode safety as a constraint in the learning objective, e.g., via Constrained Markov Decision Process (CMDP) \cite{altman1999constrained}, however these methods only guarantee constraint satisfaction in expectation and are unsuitable for highly safety-critical domains \cite{voloshin2022policy}.
%}

%The main idea is to integrate the learned RL policy $\widehat \pi$ with a reliable backup policy $\pi_{\text{backup}}$, which can be used to keep the system inside a set of ``operationally safe''' states. Ideally, $\pi_{\text{backup}}$ is to be used only near the boundary of ``operationally safe'' states allowing the learned policy $\widehat \pi$ to be used as frequently as possible for optimal performance. The benefit of this is that provable guarantees can be obtained without having to verify $\widehat \pi$, but rather $\pi_{\text{backup}}$ instead, which is typically easier, as $\pi_{\text{backup}}$ is only required for safety and is usually from a simpler policy class \cite{bastani2021safe}. \emph{Model predictive shielding} (MPS) \cite{bastani2021safe} is a classic example of this idea, that dynamically verifies the set of ``operationally safe'' states, by decomposing the backup policy into a stabilizing linear controller and recovery policy. There are many extensions of MPS that still assume the dynamics are known \cite{li2020robust,bastanib2021safe}, or known and deterministic \cite{banerjee2024dynamic}. Often they rely on heuristics that break safety guarantees \cite{li2020robust}, and even with a known model of the transition dynamics, verification often relies on sampling \cite{li2020robust,bastanib2021safe}, which can scale poorly for very strict lower bounds on the safety probability. 

\paragraph{Contributions} In this paper we deal with some of the limitations highlighted above; we introduce a novel \emph{recovery-based shielding} framework for safe RL with a provable lower bound on safety probability. We summarize our key contributions as follows:

    (1) We combine shielding with online model learning, where the \emph{unknown dynamics} of the system are and modelled directly via \emph{Gaussian process (GP) dynamics models} \cite{deisenroth2011pilco}, and a pre-computed safe backup policy is used to intervene on the agent's actions only when necessary. Our method does not require full knowledge of the system’s dynamics, assuming only an initial backup controller, invariant set and standard smoothness conditions on the dynamics.
    
    (2) We leverage analytic GP uncertainty estimates to predict potential constraint violations in advance, forgoing sampling entirely, and analytically verifying actions \emph{on-the-fly}. This design enables \emph{provable safety guarantees} up to an arbitrary level of safety without additional computational burden.
    
    (3) We provide a concrete proof that our shielded RL agent maintains safety with high probability throughout learning and after proper calibration of the GP dynamics model (which is often quick, both in theory and in practice). In particular, we establish a formal lower bound on the probability of constraint satisfaction at all times, subject to the usual regularity assumptions for GP-based learning.
    
    (4) We evaluate our approach on a suite of continuous control environments. The results show that our method can learn high-performing policies while strictly respecting safety constraints, outperforming a variety of baseline safe RL techniques in terms of both safety (zero constraint violations) and reward optimization.

%}

%}

%Beyond safety, GPs have been used in RL for reward modelling and value function approximation \cite{taylor2009kernelized} and explorations strategies such as upper confidence bound (UCB) \cite{krause2011contextual} and Thompson sampling \cite{russo2018tutorial}.

\section{Preliminaries}
\label{sec:preliminaries}
%TODO: only consider bounded disturbances to mitigate against partial observability
We consider discrete time \emph{non-linear continuous-state dynamical systems}, with both observation noise and bounded disturbances \cite{kochdumper2021aroc},
\begin{equation}
    x(t+1) = f(x(t), u(t), w(t)) \label{eq:differential}
\end{equation}
where $x(t) \in \mathbb{R}^n$ is a \emph{system state}, $u(t) \in \mathbb{R}^m$ is a \emph{control input} or \emph{action}, and $w(t) \in \mathbb{R}^k$ is a \emph{system disturbance}, and $f : \mathbb{R}^{n+m+k} \to \mathbb{R}^n$ is an \emph{unknown} Lipschitz continuous \emph{transition function}. 

We let $\mathcal{X} \subseteq \mathbb{R}^n$ denote the \emph{state constraints}, $\mathcal{U} \subseteq \mathbb{R}^m,$ denote the \emph{control constraints}, and $\mathcal{W} \subseteq \mathbb{R}^k$ denote the set of disturbances. The disturbances are can be drawn from an arbitrary distribution $\mathcal{P}_{\mathcal{W}}$ but the support is assumed to be bounded and known. Furthermore, we denote $\mathcal{X}_0 \subset \mathcal{X}$ as the set of initial system states. The observed system state $\widehat x(t) \in \mathbb{R}^n$ is subject to observation noise $v(t) \in \mathbb{R}^n$, such that, $\widehat x(t)  = x(t) + v(t)$, we assume that that $v(t)$ is drawn from the multivariate Gaussian distribution $\mathcal{P}_{\mathcal{V}} = \mathcal{N}(\mathbf{0}, \Sigma)$, where $\Sigma = \text{diag}([\sigma_1^2, \ldots, \sigma_n^2]) \in \mathbb{R}^{n \times n}$ and $\sigma_1^2, \ldots \sigma_n^2$ are known noise variances for each state dimension (e.g., small sensor noise).

\subsection{Problem setup}

%\new{
\paragraph{Safety semantics} We formalize the safety semantics of our framework using a flexible class of state constraints. Let $\mathcal{X} \subseteq \mathbb{R}^n$ be the state space and $\mathcal{X}_{\text{safe}} \subseteq \mathcal{X}$ the admissible (safe) set. We consider four basic constraint types:

$\bullet$ \textbf{Box constraints.} Each dimension $i$ of the state vector $x \in \mathbb{R}^n$ is bounded by an interval $[a_i, b_i]$:
    \begin{equation}
        \mathcal{X}_{\text{box}} = \{x \in \mathbb{R}^n : a_i \leq x_i \leq b_i, \; \forall i=1,\dots,n\} \label{eq:box}
    \end{equation}
    
$\bullet$ \textbf{Convex hull constraints.} More generally, the safe set may be defined as a convex polytope represented by a finite set of $\ell$ linear inequalities, where $A \in \mathbb{R}^{\ell \times n}$ and $b \in \mathbb{R}^\ell$: %\fb{$l$ or $\ell$?}
    \begin{equation}
        \mathcal{X}_{\text{hull}} = \{x \in \mathbb{R}^n : A x \leq b\} \label{eq:convexhull}
    \end{equation}
    
$\bullet$ \textbf{Inclusion (boundary) constraints.} For either box or convex hull regions, we may require that the state remains inside the region, i.e. $x(t) \in \mathcal{X}_{\text{incl}} \subseteq \mathbb{R}^n$.

$\bullet$ \textbf{Exclusion (obstacle) constraints.} To represent obstacles, we may require that the state avoids certain forbidden sets $\mathcal{X}_{\text{obs}}$, such as boxes or convex hulls. Formally,
    \begin{equation}
        \mathcal{X}_{\text{excl}} = \mathcal{X} \setminus \mathcal{X}_{\text{obs}}
    \end{equation}
The overall safe set $\mathcal{X}_{\text{safe}}$, is then defined as the intersection of inclusion constraints and the complement of exclusion constraints. This naturally yields non-convex feasible regions, e.g. by combining multiple disjoint polytopes with obstacle regions removed. 

%\fb{for clarity, we could in principle provide a BNF.} 
%}

%Our approach handles these general constraint compositions: during shielding, uncertainty sets $\mathcal{E}(t)$ (ellipsoids propagated under the GP dynamics model) are checked against all active inclusion and exclusion constraints. Thus, even highly non-convex safe regions composed of unions and differences of convex sets can be accommodated.

\paragraph{Safe RL objective.} We consider a \emph{deterministic reward function} $R : \mathcal{X} \times \mathcal{U}  \times \mathcal{X} \to \mathbb{R}$ and a \emph{discount factor} $\gamma \in [0, 1)$. Let $\Pi$ be the set of all policies, where $\pi \in \Pi$ is a function from system states to actions $\pi : \mathcal{X} \to \mathcal{U}$. The goal is to maximize an objective function $J(\pi)$, while ensuring that, from any initial state $x(0) \in \mathcal{X}_0 \subseteq \mathcal{X}_{\text{safe}}$ and for any sequence of disturbances $\vec w = (w(0), w(1), \ldots) \in \mathcal{W}^{\infty}$ and noise $\vec v = (v(0), v(1), \ldots ) \in \mathcal{V}^{\infty}$, we have  $x(t) \in \mathcal{X}_{\text{safe}}$, for all $t = 0, 1, \ldots$, where $x(t+1) = f(x(t), \pi(\widehat x(t)), w(t))$, and $w(t) \sim \mathcal{P}_{\mathcal{W}}$ and $v(t) \sim \mathcal{P}_{\mathcal{V}}$ are drawn i.i.d. In this paper we consider the usual objective of \emph{expected discounted return}: $J(\pi) = \mathbb{E}_\pi\left[\sum^{\infty}_{t=0}\gamma^t r(t)\right]$, if we denote $\Pi_{\text{safe}} \subseteq \Pi$ as the set of all safe policies, then our goal is to find a policy $\pi^* = \arg\max_{\pi \in \Pi_{\text{safe}}} J(\pi)$. 

Safety is enforced with high-probability due to the unavoidable observation noise, state disturbances and epistemic GP uncertainty. We adopt of the notion of $\epsilon$-safe policies \cite{bastanib2021safe}. Concretely, let $\xi (x(0), \pi, \vec w, \vec v ) = (x(0), x(1), \ldots) \in \mathcal{X}^{\infty}$ denote a sequence of system states under policy $\pi$ and from state $x(0) \in \mathcal{X}_0$, then,
\begin{definition}[$\epsilon$-safe policy] A policy $\pi \in \Pi$ is $\epsilon$-safe if,
\begin{equation*}
    \mathbb{P}_{\vec w \sim \mathcal{P}_{\mathcal{W}}, \vec v \sim \mathcal{P}_{\mathcal{V}}} ( \xi (x(0), \pi, \vec w, \vec v ) \subseteq \mathcal{X}_{\text{safe}} ) \geq 1 - \epsilon \quad \forall x(0) \in \mathcal{X}_0 \label{eq:epsilonsafe}
\end{equation*}
\end{definition}
The admissible safe policy set $\Pi_{\text{safe}}$ therefore consists of all $\epsilon$-safe policies from the initial states, this is the standard requirement introduced in prior works \cite{bastanib2021safe}. 

\subsection{Dynamics modelling}
\label{sec:dynamicsmodelling}

For modelling the unknown dynamics of the system $f$ we use the same methodology as PILCO \cite{deisenroth2011pilco}. The dynamics model is implemented as a Gaussian process (GP) model with input $\tilde x(t) = (\widehat x(t), u(t)) \in \mathcal{X} \times \mathcal{U}$ and output targets $\Delta(t) = \widehat x(t) - \widehat x(t-1) \in \mathbb{R}^n$ which model the next step deltas. The GP model provides the following one-step predictions,
\begin{align*}
    p(x(t) \mid x(t-1), u(t-1)) & = \mathcal{N}(x(t) \mid \mu(t), \Sigma(t)) \\
    \mu(t) & = x(t-1) + \mathbb{E}_f[\Delta(t)] \\
    \Sigma(t) &= \text{var}_f(\Delta(t))
\end{align*}
In this paper we only consider the zero-mean prior function and radial basis function (RBF) kernel, whose covariance is given by,
\begin{equation*}
    k(\tilde x(\cdot), \tilde x '(\cdot)) = \alpha^2 \exp\left(- \frac{1}{2} (\tilde x(\cdot) - \tilde x '(\cdot))^T \Lambda (\tilde x(\cdot) - \tilde x'(\cdot))\right)
\end{equation*}
where $\alpha^2$ models the variance of the transition function $f$, and $\Lambda = \text{diag}([l_1^2, \ldots l_n^2])$ corresponds to the length-scales $l_i$ for automatic relevance determination. The posterior hyperparameters, which include the signal variance $\alpha^2$, length scales $l_i$ and noise variances $\Sigma_{\varepsilon}$, are trained by evidence maximization \cite{rasmussen2003gaussian}. For training and prediction we use GPJAX \cite{Pinder2022} a flexible library for GP implemented in JAX \cite{jax2018github}, which benefits from just-in-time (JIT) compilation and GPU compatibility. 

Let $\tilde X = [\tilde x_1(\cdot), \ldots, \tilde x_D(\cdot)]$ denote the $D$ training inputs and $y = [\Delta_1(\cdot), \ldots \Delta_D(\cdot)]$ denote the $D$ training outputs. The posterior predictive distribution $p(\Delta(\cdot) \mid \tilde x(\cdot))$ for a known test input $x(\cdot)$, is Gaussian with mean and variance given by,
\begin{align}
    m_f(\tilde x(\cdot)) &= \mathbb{E}_f[\Delta(\cdot)] = k_*^T(K  + \sigma_{\varepsilon}^2 \mathbf{I})^{-1}y  = k_*^T \beta \label{eq:meanpred} \\
    \sigma_f^2 &=  \text{var}_f(\Delta(\cdot)) = k_{**} - k^T_{*} (K  + \sigma_{\varepsilon}^2 \mathbf{I})k_*
\end{align}
where $k_* := k(\tilde X, \tilde x(\cdot))$, $k_{**} := k(\tilde x(\cdot), \tilde x(\cdot))$, $\beta = (K  + \sigma_{\varepsilon}^2 \mathbf{I})^{-1}y$ and $K$ is the gram matrix with entries $K_{ij} = k (\tilde x_i(\cdot), \tilde x_j(\cdot))$. In the case where the output targets are multivariate (i.e., the state dimension $n>1$), we train $n$ conditionally independent GP models. Conditional independence holds for a known (deterministic) test input $x(\cdot)$, however, the joint predicative covariance becomes non-diagonal under uncertain inputs due to shared input uncertainty and cross-covariance terms, these details are covered in Sec.~\ref{sec:gaussianprocess}.

\section{Recovery-based shielding}
\label{sec:recoveryshielding}

\paragraph{Overview} 
For a candidate policy $\widehat \pi \in \Pi$, our goal is to construct a \emph{shielded policy} $\pi_{\text{shield}}$ that is provably $\epsilon$-safe. In this paper, shielding is implemented by carefully switching between  the learned policy $\widehat\pi$ and backup controller $\pi_{\text{backup}}$. The ``switching criterion'' depends only on the system state and analytic safety verification conditions derived from the GP uncertainty sets. The key advantage is that safety is certified without making assumptions about the structure of $\widehat\pi$; the shield can enforce constraints even if $\widehat\pi$ is an arbitrary neural network policy. Formally, $\pi_{\text{shield}}$ coincides with $\widehat\pi$ whenever the state is judged $\epsilon_t$-recoverable; otherwise, it defers to $\pi_{\text{backup}}$ to drive the system back into a verified invariant set. 
%\new{
This separation makes the safety proof policy-agnostic, relying only on the backup controller and uncertainty analysis, as alluded to earlier this is often easier if the backup policy comes from a restricted policy class (e.g., linear controller).%, we make use of this property here.
%}

\paragraph{Control invariant sets} For proving infinite horizon safety for systems defined in terms of \eqref{eq:differential}, a standard approach is to consider stable \emph{equilibrium points} and \emph{control invariant sets}. Formally, a state $x_{eq} \in \mathcal{X}$ is a (stable) {\em  equilibrium point} iff there exists $u_{eq} \in \mathcal{U}$ such that $f(x_{eq}, u_{eq}, \mathbf{0}) = x_{eq}$. A (nominal) {\em control invariant set} $\mathcal{X}_{\text{inv}} \subseteq \mathcal{X}$ is a set of states such that if $x(\cdot) \in \mathcal{X}_{\text{inv}}$ then there exists $u_x \in \mathcal{U}$ such that $f(x(\cdot), u_x,\mathbf{0}) \in \mathcal{X}_{\text{inv}}$. In words, the control invariant set is a subset of system states for which there exists corresponding control inputs that can keep the system within this set indefinitely. For simply providing safety guarantees it is enough to determine an invariant set $\mathcal{X}_{\text{inv}}$ for the control policy and then establish that $\mathcal{X}_{\text{inv}} \subseteq \mathcal{X}_{\text{safe}}$. The computation of control invariant sets of general non-linear dynamics is usually based on Hamilton-Jacobi analysis \cite{mitchell2005time,fisac2018general}, which scale exponentially in the dimension of the state space $n$. In this paper we consider the class of linear (or RBF) backup controllers, for which more scalable methods exist \cite{gruber2020computing,schafer2023scalable}. 

\paragraph{Backup policy.} The goal of the backup policy $\pi_{\text{backup}}$ is to drive the current system state $x(t) \in \mathcal{X}$ back into the control invariant set $\mathcal{X}_{\text{inv}}$. Crucially the backup policy $\pi_{\text{backup}}$ should satisfy,
\begin{equation}
\begin{split}
    x(\cdot) \in \mathcal{X}_{\text{inv}}\Rightarrow \xi(x(\cdot), \pi_{\text{backup}}(x(\cdot)), \vec w, \vec v) \subseteq \mathcal{X}_{\text{inv}}\\
    (\forall \vec w, \vec v \in \mathcal{W}^{\infty} \times \mathcal{V}^{\infty})
\end{split}\label{eq:invariant}
\end{equation}
In this paper, we consider a Zonotope-based approach \cite{schafer2023scalable} provided by AROC \cite{kochdumper2021aroc} for computation of control invariant sets with maximum volume around a given stable equilibrium point $x_{eq}$. The corresponding backup policy $\pi_{\text{backup}}$ is constructed with Linear Quadratic Regulator (LQR) \cite{anderson2007optimal}, and is defined by the feedback matrix $K \in \mathbb{R}^{n \times n}$, formally,
\begin{equation}
    \pi_{\text{backup}} (\widehat x(\cdot)) = u_{eq} - K( \widehat x(\cdot) - x_{eq}) \label{eq:controllaw}
\end{equation}
so that $\pi_{\text{backup}}$ satisfies, $f(x_{eq}, \pi_{\text{backup}}(x_{eq}), \mathbf{0}) = x_{eq}$. The control invariant set $\mathcal{X}_{\text{inv}}$ is then calculated by iteratively applying the control law in \eqref{eq:controllaw} and over-approximating the reachable sets by successive convexification, we refer the reader to \cite{kochdumper2021aroc,schafer2023scalable} for full details. Mathematically the control invariant set $\mathcal{X}_{\text{inv}}$ is represented as a convex polytope (or convex hull), c.f., \eqref{eq:convexhull} from earlier. We state our main assumptions here.

\begin{assumption} We assume privileged access to a linear backup policy $\pi_{\text{backup}}$; and a corresponding control invariant set $\mathcal{X}_{\text{inv}}$ for which $\pi_{\text{backup}}$ is invariant, i.e., \eqref{eq:invariant} holds. 
\end{assumption}

\paragraph{Recoverable states} Provided that $x_{eq} \in \mathcal{X}_{\text{safe}}$ and $\mathcal{X}_{\text{inv}} \subseteq \mathcal{X}_{\text{safe}}$ and ensuring that the system never leaves $\mathcal{X}_{\text{inv}}$ is sufficient for maintaining safety. However, since the computation of $\mathcal{X}_{\text{inv}}$ relies on convex over-approximations and linear controllers, the actual set of \emph{admissible states} might be much larger. Rather we allow for states that are \emph{recoverable} within some fixed time horizon $N \in \mathbb{N}$. %We write this formally.

\begin{definition}[Recoverable state] For a given horizon $N$, disturbances $\vec w \in \mathcal{W}^{N+1}$ and observation noise $\vec v \in \mathcal{V}^{N+1}$ a state $x(\cdot) \in \mathcal{X}$ is \emph{recoverable} if for the sequence $x(0), x(1), \ldots, x(N) \in \mathcal{X}^{N+1}$, given by,
\begin{align*}
    x(0) &= x(\cdot) \\
%    \widehat x(0) & = x(0) + v(0)\\
    \widehat x(t) & = x(t) + v(t) \quad  \forall t = 0, \ldots, N \\
    x(1) &= f(x(0), \widehat\pi(\widehat x(0)), w(t))\\
%    \widehat x(t) & = x(t) + v(t) \quad  \forall t = 1, \ldots, N \\ 
    x(t+1) & = f(x(t), \pi_{\text{backup}}(\widehat x(t)), w(t)) \quad  \forall t = 1, \ldots, N
\end{align*}
we have $x(t) \in \mathcal{X}_{\text{safe}}$ for all $t = 0, \ldots, N$, and there exists $t' = 0, \ldots, N$, such that $x(t') \in \mathcal{X}_{\text{inv}}$.
\end{definition}

Intuitively, a state $x(\cdot) \in \mathcal{X}$ is recoverable if for the control $u(\cdot) = \widehat\pi(\widehat x (\cdot))$ determined by the learned policy $\widehat \pi$, we can recover the system state within $N$ timesteps (using the backup policy $\pi_{\text{backup}}$), back to the control invariant set $\mathcal{X}_{\text{inv}}$ (from which we have already established safety). 
%\new{
Fig.~\ref{fig:uncertaintysets} illustrates this phenomenon; noting both the  \emph{recoverable} (blue) and \emph{irrecoverable}  (red) start outside the control invariant set $\mathcal{X}_{\text{inv}}$ and inside the box constraint $\mathcal{X}_{\text{safe}}$; the backup policy drives the recoverable trajectory back into the control invariant set $\mathcal{X}_{\text{safe}}$ -- establishing a high-probability safety certification, whereas the irrecoverable trajectory leaves the safe set $\mathcal{X}_{\text{safe}}$, intersecting with the explicit box constraint.
%}

We denote $\mathcal{X}^{\vec w, \vec v}_{\text{rec}}(N) \subseteq \mathcal{X}_{\text{safe}}$ as the set of all recoverable states given $\vec w, \vec v \in \mathcal{W}^{N+1} \times \mathcal{V}^{N+1}$ and within time horizon $N$.  Given the randomness associated with the disturbances $\vec w$ and the observation noise $\vec v$, it becomes challenging to check whether the current system state is recoverable given all possible realizations of $\vec w$ and $\vec v$. Rather we allow for states that are recoverable with high probability.

\begin{definition}[$\epsilon$-Recoverable state] For a given horizon $N \in \mathbb{N}$ and tolerance $\epsilon \in [0, 1]$ a state $x(\cdot) \in \mathcal{X}$ is \emph{$\epsilon$-recoverable} if,
\begin{equation*}
    \mathbb{P}_{\vec w \sim \mathcal{P}_{\mathcal{W}}, \vec v \sim \mathcal{P}_{\mathcal{V}}} ( x(\cdot) \in \mathcal{X}^{\vec w, \vec v}_{\text{rec}}(N)) \geq 1 - \epsilon
\end{equation*}
\end{definition}

\begin{figure}[t]
    \centering
    \includegraphics[width=0.98\linewidth]{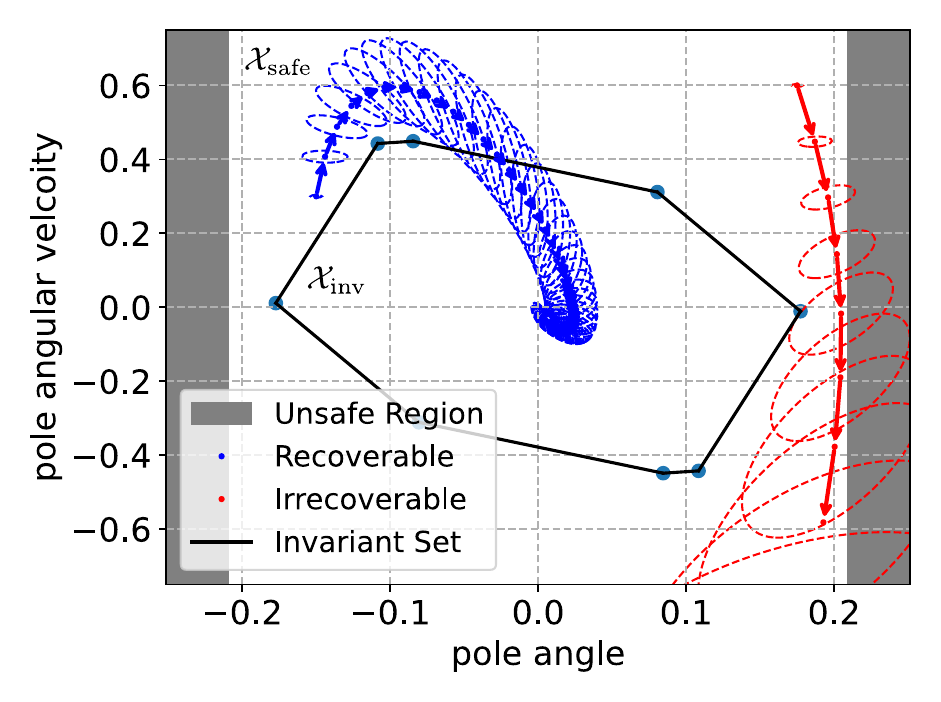}
    \caption{$99.99\%$-CI uncertainty sets for recoverable (blue) and irrecoverable (red) trajectories under the backup policy for \texttt{cartpole (i)/(ii)} environment.}
    \label{fig:uncertaintysets}
\end{figure}

We denote $\mathcal{X}^{\epsilon}_{\text{rec}}(N) \subseteq \mathcal{X}_{\text{safe}}$ as the set of $\epsilon$-recoverable states for some horizon $N$. The shielded policy $\pi_{\text{shield}}$ is constructed by fixing a step-wise safety constraint $\epsilon_t \leq \epsilon$. Therefore, we write an expression for $\pi_{\text{shield}}$ in terms of the $\epsilon_t$-recoverable states. Formally,
\begin{equation}
    \pi_{\text{shield}}(x(\cdot)) = \begin{cases}
                    \widehat\pi(x(\cdot)) & \text{if $x(\cdot) \in \mathcal{X}^{\epsilon_t}_{\text{rec}}(N)$} \\
                    \pi_{\text{backup}}(x(\cdot)) & \text{otherwise}
                \end{cases} \label{eq:shield}
\end{equation}

Intuitively, if each state encountered during execution is verified to be $\epsilon_t$-recoverable and the backup controller guarantees invariance within its certified region, then the resulting shielded policy $ \pi_{\text{shield}}$ is $\epsilon$-safe. This is formalized in the following theorem.

\begin{theorem}[$\epsilon$-safe policy] \label{thm:epsilonsafe} Assume: (i) $\mathcal{X}_{\text{inv}}$ is an invariant set for $\pi_{\text{backup}}$, (ii) $\mathcal{X}_0 \subseteq \mathcal{X}_{\text{inv}} \subseteq \mathcal{X}_{\text{safe}}$ and (iii) $\sum^{T}_{t=0}\epsilon_t = \epsilon$, then, $\pi_{\text{shield}}$ is $\epsilon$-safe for the time horizon $T \in \mathbb{N}$. 
\end{theorem}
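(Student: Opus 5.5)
The plan is a union bound over time steps, combined with an induction argument showing that the shielded trajectory can only leave $\mathcal{X}_{\text{safe}}$ through one of the ``bad'' events in which an $\epsilon_t$-recoverability certificate fails. First I fix the probability space. I couple the realized disturbances $\vec w$ and noise $\vec v$ with the ones used in the definition of $\mathcal{X}^{\vec w,\vec v}_{\text{rec}}(N)$. For each $t \le T$, let $B_t$ be the event that at time $t$ the shield selects $\widehat\pi$ (so $x(t) \in \mathcal{X}^{\epsilon_t}_{\text{rec}}(N)$) but the realized continuation from $x(t)$ is not recoverable. Since $\epsilon_t$-recoverability is a conditional statement given $x(t)$, and future disturbances and noise are i.i.d. and independent of the past, we get $\mathbb{P}(B_t \mid x(t)) \le \epsilon_t$. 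Taking expectations gives $\mathbb{P}(B_t) \le \epsilon_t$, and by the union bound and assumption (iii), $\mathbb{P}(\bigcup_{t=0}^T B_t) \le \epsilon$.

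Second, I prove a deterministic invariant on the complement event $\bigcap_t B_t^c$: at every time $t \le T$, $x(t)$ lies in $\mathcal{X}_{\text{safe}}$, and the system is in one of two modes. Either $x(t) \in \mathcal{X}_{\text{inv}}$, or the last step at which $\widehat\pi$ was applied is some $s<t$ from which a recovery plan (one $\widehat\pi$ step followed by $\pi_{\text{backup}}$) is still in progress. The base case follows from (ii), since $x(0) \in \mathcal{X}_0 \subseteq \mathcal{X}_{\text{inv}}$. For the inductive step there are two cases.
\begin{itemize}
\item If the shield applies $\widehat\pi$ at time $t$, then $B_t^c$ gives a recoverable continuation: the states stay in $\mathcal{X}_{\text{safe}}$ and reach $\mathcal{X}_{\text{inv}}$ within $N$ steps, provided backup is followed. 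This resets the active plan.
\item If the shield applies $\pi_{\text{backup}}$, then either we are inside $\mathcal{X}_{\text{inv}}$, and invariance (i) together with \eqref{eq:invariant} keeps us there and hence in $\mathcal{X}_{\text{safe}}$ by (ii), or we are following the most recent recovery plan, whose trajectory coincides with the actual one and therefore stays safe and enters $\mathcal{X}_{\text{inv}}$.
\end{itemize}
Hence on $\bigcap_t B_t^c$ the whole trajectory up to $T$ lies in $\mathcal{X}_{\text{safe}}$, giving probability at least $1-\epsilon$.

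The main obstacle is the second case above: arguing that once the shield falls back to $\pi_{\text{backup}}$, the actual trajectory coincides with the certified recovery trajectory. Two things can go wrong. The shield might switch back to $\widehat\pi$ midway, but that only happens at an $\epsilon_{t'}$-recoverable state, which starts a fresh certified plan covered by $B_{t'}$. And $\pi_{\text{backup}}$ in \eqref{eq:shield} must be evaluated on the same noisy observations as in the recoverable-state definition, which holds by the coupling. I would handle this by always tracking the \emph{most recent} time $s$ at which $\widehat\pi$ was applied, and showing that the trajectory from $s$ onward equals the recovery rollout of the definition until the next $\widehat\pi$ step. A secondary subtlety is that after reaching $\mathcal{X}_{\text{inv}}$ the plan's horizon $N$ may expire. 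Invariance (i) then takes over, because the shield keeps applying $\pi_{\text{backup}}$ unless a new certified $\widehat\pi$ step occurs. Note also that the bound is genuinely for finite $T$, since $\sum_{t\le T}\epsilon_t=\epsilon$.
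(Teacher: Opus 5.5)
Your route is the paper's route. The appendix proof also inducts on the invariant ``the system can be brought back to $\mathcal{X}_{\text{inv}}$ by $\pi_{\text{backup}}$ within $N$ steps'' and finishes with a union bound over $\epsilon_0,\ldots,\epsilon_T$. Your version makes that sketch precise. Coupling the certificate's $(\vec w,\vec v)$ with the realized noise, getting $\mathbb{P}(B_t)\le\epsilon_t$ from the independence of the future noise and $x(t)$, and then running a purely deterministic induction on $\bigcap_t B_t^c$ is exactly what the paper leaves implicit when it says the invariant holds ``with probability at least $1-\epsilon_t$''.

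One step fails as written: the hand-off to invariance when a plan's horizon expires. You assume that a plan started by a $\widehat\pi$-step at time $s$ enters $\mathcal{X}_{\text{inv}}$ \emph{while following} $\pi_{\text{backup}}$, so that (i) keeps the state there afterwards. However, the definition of a recoverable state only asks for some $t'\in\{0,\ldots,N\}$ with $x(t')\in\mathcal{X}_{\text{inv}}$, and $t'=0$ is the current state, before $\widehat\pi$ acts. If $x(s)\in\mathcal{X}_{\text{inv}}$ is certified only through $t'=0$, the plan says nothing about time $s+N+1$, and neither of your two modes need hold there.

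This is not cosmetic. Take $f(x,u,w)=2x+u$, $\pi_{\text{backup}}\equiv 0$, $\mathcal{X}_0=\mathcal{X}_{\text{inv}}=\{0\}$, $\mathcal{X}_{\text{safe}}=[-1,1]$ and $\widehat\pi\equiv 2^{-(N+1)}$.
\begin{itemize}
\item The state $0$ is recoverable via $t'=0$, since the plan's states $2^{k-N-2}$, $k\le N$, stay in $[-1,1]$. So the shield lets $\widehat\pi$ act once.
\item Every later state is positive, so its plan stays positive and never meets $\{0\}$. Hence every later state is irrecoverable.
\item The backup therefore doubles the state up to $x(N+3)=2$ with probability one. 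This violates $\epsilon$-safety for every $\epsilon<1$ and $T\ge N+3$.
\end{itemize}
The paper's proof has exactly the same hole. To close it, require $t'\in\{1,\ldots,N\}$, i.e.\ the return to $\mathcal{X}_{\text{inv}}$ must happen after the $\widehat\pi$ step. Algorithm~\ref{alg:shield} effectively enforces this by checking $\mathcal{E}(N)\subseteq\mathcal{X}_{\text{inv}}$. With that restriction made explicit, your argument is complete.
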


\begin{remark}
    The proof of Theorem \ref{thm:epsilonsafe} (c.f., Appendix~\ref{sec:proof}) 
    is obtained by establishing the following invariant: ``we can always return to $\mathcal{X}_{\text{inv}}$ with probability at least $1  - \epsilon_t$ by using the backup policy $\pi_{\text{backup}}$ for $N$ timesteps''. 
    %\new{
    After establishing this invariant (via recursive feasibility), a union bound over the per-step safety tolerances $\epsilon_t$ is then taken. In practice we use either a constant $\epsilon_t \equiv \epsilon/T$ over a finite time horizon $T$, or a decreasing sequence satisfying $\sum_{t=0}^\infty \epsilon_t \leq \epsilon$.\footnote{E.g., setting $\epsilon_t = \epsilon/((t+1)^2\pi^2)$ yields $\sum_{t=0}^\infty \epsilon_t \leq \epsilon$.} This permits extending the guarantee to infinite horizons while keeping $\epsilon$ arbitrarily small.
    %}
\end{remark}

In practice, both the recovery horizon $N$ and step-wise tolerance $\epsilon_t$ are treated as hyperparameters, the choice of $N$ has no consequences on the proof of $\epsilon$-safety, whereas $\epsilon_t$ directly dictates the theoretical safety bound that can be obtained. The choice of $N$ will however, directly impact the computational overhead of action selection and the restrictiveness of the shield. This overhead dominates training time when shielding is invoked at every timestep during training. Concrete runtime details can be found in Appendix \ref{sec:samplinganalysis}. 
We note that the overhead grows roughly linearly in $N$ for our analytic method, compared to $O(NK)$ for sampling baselines, where $K$ is the number of samples. $N$ should be treated as a compute-conservatism trade-off, smaller $N$ will reduce the overhead, but 
might make the shield more conservative (less time to return to $\mathcal{X}_{\text{inv}}$). There is usually a sweet spot, which once reached, increasing $N$ has no significant impact on the shield.

\section{GP uncertainty prediction}
\label{sec:gaussianprocess}
As detailed in Sec.~\ref{sec:dynamicsmodelling} the unknown system dynamics $f$ are modelled using $n$ GP models. We now detail how the uncertainty sets (c.f., Fig.~\ref{fig:uncertaintysets}) are analytically computed and propagated through the dynamics model for robustly identifying $\epsilon_t$-recoverable states (i.e., checking that $\widehat x(\cdot) \in \mathcal{X}_{\text{rec}}^{\epsilon_t}$).

\paragraph{Prediction at uncertain inputs} We model each output dimension with an independent GP; when inputs are uncertain, output correlations arise through uncertainty propagation (moment matching). In particular, the goal is to predict $x(t)$ given $p(x(t-1))$, to do this we require the joint $p(x(t-1), u(t-1))$. This can be calculated by first integrating out the state $x(t-1)$, giving mean $\mu_u(t-1)$ and covariance $\Sigma_u(t-1)$. The cross covariance $\text{cov}[x(t-1), u(t-1)]$ is computed and then the joint $p(\tilde x(t-1)) = p(x(t-1), u(t-1))$ is approximated as a Gaussian with correct mean $\tilde\mu(t-1)$ and covariance $\tilde \Sigma(t-1)$. For linear controllers this computation can be done analytically \cite{deisenroth2010efficient}.
%, for more sophisticated policy parametrizations, a sampling-based approach would be required.

We now assume a joint Gaussian distribution $p(\tilde x (t-1)) = \mathcal{N}(\tilde x(t-1) \mid \tilde \mu (t-1), \tilde \Sigma(t-1))$ at timestep $t-1$. The distribution of the next step deltas is given by the integral,
\begin{equation}
    p(\Delta(t))  = \int p (f(\tilde x (t-1))\mid \tilde x (t-1)) p(\tilde x(t-1)) d \tilde x(t-1) \label{eq:integral}
\end{equation}
This integral \eqref{eq:integral} is intractable, however a common approach is to approximate it as a Gaussian via moment matching \cite{candela2003propagation}. The first two moments of $\Delta(t)$ are analytically computed, ignoring all higher order moments, furthermore, this makes predicting $\Delta(t+1)$ much easier as our distribution over $\Delta(t)$ is still Gaussian. Assuming the mean $\mu_{\Delta}(t)$ and covariance $\Sigma_{\Delta}(t)$ of the predictive distribution $p(\Delta(t))$ are known then the Gaussian approximation of the distribution $p(x(t))$ is given by $\mathcal{N}(x(t) \mid \mu(t), \Sigma(t))$, where,
\begin{align}
    &\mu(t) = \mu(t-1) + \mu_{\Delta}(t) \label{eq:nextmu}\\
    \begin{split}
        &\Sigma(t) = \Sigma(t-1) + \Sigma_{\Delta}(t) \\
        & \quad + \text{cov}[\Delta(t), x(t-1)] + \text{cov}[x(t-1), \Delta(t)] 
    \end{split}\label{eq:nextsigma}\\
    \begin{split}
        &\text{cov}[x(t-1), \Delta(t)] = \\
        & \quad \text{cov}[x(t-1), u(t-1)]\Sigma^{-1}_u(t-1) \text{cov}[u(t-1), \Delta(t)] 
    \end{split}\label{eq:crosscov}
\end{align}

Again, the computation of the cross-covariances (\ref{eq:crosscov}) here is dependent on the parametrization of the policy, e.g., for linear controllers the computation is analytical \cite{deisenroth2010efficient}. For more sophisticated policy parametrizations we would need to resort to sampling and approximate \eqref{eq:integral} directly. When we demand high-level of safety, sampling, e.g., Monte Carlo methods, can suffer from poor sample complexity and are known to be inefficient for assessing the probability of rare events. 
For the analytic computation of the mean $\mu_{\Delta}(t)$ and covariance $\Sigma_{\Delta}(t)$ we refer the reader to \cite{deisenroth2011pilco}, for completeness we also provide our own derivation in Appendix~\ref{sec:meancovpred}. 

\paragraph{Uncertainty propagation.} Combined with the linear backup policy, the $n$ GP dynamics models can now provide analytic Gaussian confidence ellipsoids $\mathcal{E}(0), \ldots, \mathcal{E}(N) \subseteq \mathcal{X}$. The ellipsoids capture both aleatoric uncertainty (observed disturbances and noise) and epistemic uncertainty (lack of data), that enclose the state trajectory with probability $1-\epsilon_t$. Formally, we let $\mu(0) = \widehat x(0)$ and $\Sigma(0) 
 = \text{diag}([\sigma^2_1, \ldots \sigma^2_n])$, then the analytic means $\mu(0), \mu(1), \ldots \mu(N)$ and covariances $\Sigma(0), \Sigma(1), \ldots\Sigma(N)$ computed as described above, form the uncertainty sets $\mathcal{E}(0), \ldots, \mathcal{E}(N) \subseteq \mathcal{X}$, that are defined in terms of the $z$-sigma confidence ellipsoids,
\begin{equation}
    \mathcal{E}(t) = \{ x \in \mathbb{R}^n : (x - \mu(t))^T \Sigma(t)^{-1}(x - \mu(t)) \leq z^2 \} \label{eq:ellipsoid}
\end{equation}
where $z > 0$ satisfies $\Phi(z) \geq 1 - \epsilon_t$ (where $\Phi$ is the CDF of the std. normal), and $\epsilon_t$ is the desired step-wise tolerance from \eqref{eq:shield}. To certify that a state is $\epsilon_t$-recoverable, we must check that these ellipsoids remain in $\mathcal{X}_{\text{safe}}$ and are eventually fully contained in $X_{\text{inv}}$.

%\new{
\paragraph{Containment and exclusion checks.} Given an ellipsoid $\mathcal{E}(t)$ with mean $\mu(t)$ and covariance $\Sigma(t)$:

$\bullet$ \textbf{Box inclusion:} $\mathcal{E}(t) \subseteq \mathcal{X}_{\text{box}}$ iff,
    \begin{equation}
        a_i \leq \mu_i(t) - z\sqrt{\Sigma_{ii}(t)}, \quad
        \mu_i(t) + z\sqrt{\Sigma_{ii}(t)} \leq b_i \label{eq:checkbox}
    \end{equation}
    for all $i = 1,\ldots, n$.
    
 $\bullet$ \textbf{Convex hull inclusion:} $\mathcal{E}(t) \subseteq \{x: Ax \leq b\}$ iff,
    \begin{equation}
        \max_{x \in \mathcal{E}(t)} A_j x - b_j \leq 0 \quad \forall j = 1, \ldots, l \label{eq:checkhull}
    \end{equation}
    which can be solved with the following quadratic program (QP) via Sequential Least Squares Programming (SLSQP) \cite{kraft1988software}:
    \begin{equation}
        \max_x (A_i x - b_i) \text{ subject to } (x - \mu(t))^T\Sigma(t)^{-1}(x - \mu(t)) \leq z^2 \label{eq:quadraticprogram}
    \end{equation}
    
 $\bullet$ \textbf{Exclusion (obstacle avoidance):} For an obstacle region $\mathcal{X}_{\text{obs}}$, we require
    \begin{equation}
        \mathcal{E}(t) \cap \mathcal{X}_{\text{obs}} = \varnothing \label{eq:checkobs}
    \end{equation}
    For convex $\mathcal{X}_{\text{obs}}$, this is checked by solving \eqref{eq:quadraticprogram} with the reversed inequality.
    
 $\bullet$ \textbf{Non-convex combinations:} If $\mathcal{X}_{\text{safe}}$ is a union/intersection of convex sets, containment reduces to checking each constituent.

%}
%\new{
We outline the full procedure in Algorithm \ref{alg:shield}. Starting from the current state distribution, uncertainty sets $\mathcal{E}(0),\dots,\mathcal{E}(N)$ are propagated under the $n$ GP models and backup controller (lines 1-4), see also Fig.~\ref{fig:uncertaintysets}. Lines~5-6 implement box and convex hull inclusion, and/or line~5 also implicitly encodes obstacle exclusion exclusion checks. Line~7 verifies recoverability before deferring to $\pi$ or $\pi_{\text{backup}}$.
%} 
We now state the central theorem of our paper.

\begin{theorem}\label{thm:ellipsoids} Assume: (i) the unknown dynamics $f$ and $\pi_{\text{backup}}$ are both Lipschitz continuous in the $L_1$-norm, (ii) the $n$ GP dynamics models are well calibrated in the sense that with probability (w.p.) at least $1 - \delta$ there exists $\beta(\tau)>0$ such that $\forall t=0 \ldots N$ and $\forall w(t) \in \mathcal{W}$ we have $\lVert f(x(t), u(t), w(t)) - \mu(t) \rVert_1 \leq \beta(\tau) \text{tr}(\Sigma(t))$, (iii) $\forall t=0 \ldots N$ the higher order terms (e.g., skewness, kurtosis) of the true distribution $p(x(t))$ are negligible and can be ignored. Then if $\mathcal{E}(0), \ldots, \mathcal{E}(N) \subseteq \mathcal{X}_{\text{safe}}$ and if there exists $t \in \{0, \ldots N \}$ such that $\mathcal{E}(t) \subseteq \mathcal{X}_{\text{inv}}$ then $\widehat x(0)$ is $\epsilon_t$-recoverable with probability at least $1-\delta$. Thus $\pi_{\text{shield}}$ is $\epsilon$-safe by Theorem \ref{thm:epsilonsafe} (w.p., $1-\delta$). 
\end{theorem}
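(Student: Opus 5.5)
The plan is to condition on the calibration event of assumption (ii), which holds with probability at least $1-\delta$, and then reason deterministically about the GP model while treating the disturbances and observation noise as the only remaining randomness. On this event, assumption (ii) says the true successor state stays within an $L_1$-tube of radius $\beta(\tau)\,\mathrm{tr}(\Sigma(t))$ around the predicted mean $\mu(t)$, uniformly over $w(t)\in\mathcal{W}$. The argument then has three parts: a joint induction over $t=0,\ldots,N$, a containment step from the ellipsoids to the safe and invariant sets, and finally an appeal to Theorem~\ref{thm:epsilonsafe}.

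\textbf{Inductive step.} I would show that the true state distribution $p(x(t))$ is captured by the Gaussian $\mathcal{N}(\mu(t),\Sigma(t))$ produced by the moment-matching recursion \eqref{eq:nextmu}--\eqref{eq:crosscov}. For the base case, $\mu(0)=\widehat x(0)$ and $\Sigma(0)=\mathrm{diag}(\sigma_1^2,\ldots,\sigma_n^2)$, so $x(0)=\widehat x(0)-v(0)$ is exactly Gaussian with these moments. For the step, I would use three facts:
\begin{itemize}
\item Lipschitz continuity of $f$ and of $\pi_{\text{backup}}$ (assumption (i)) controls how the input uncertainty on $\tilde x(t-1)$, including the perturbation from $v(t)$ entering through the linear feedback \eqref{eq:controllaw}, propagates to $\Delta(t)$.
\item Calibration (ii) bounds the discrepancy between the true mean and the GP mean.
\item Assumption (iii) lets us treat the true $p(x(t))$ as determined by its first two moments, so the moment-matched Gaussian is exact up to negligible terms.
\end{itemize}
The first action is $\widehat\pi(\widehat x(0))$ rather than the backup; since it is a fixed function of the observed $\widehat x(0)$, it enters only the first step and is handled by the joint-Gaussian construction.

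\textbf{Containment step.} Given the Gaussian description, $\mathbb{P}(x(t)\in\mathcal{E}(t))\ge 1-\epsilon_t$ follows from the choice of $z$ with $\Phi(z)\ge 1-\epsilon_t$. This matches how the containment checks \eqref{eq:checkbox}--\eqref{eq:checkobs} are performed, as one-dimensional projections onto each half-space or coordinate. Here $\mathcal{E}(t)\subseteq\mathcal{X}_{\text{safe}}$ means each projected constraint holds unless the projected Gaussian exceeds $z$ standard deviations.

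I expect the main obstacle here: the hypothesis gives containment at every $t$, but recoverability requires all $x(0),\ldots,x(N)$ to be safe simultaneously. A naive union bound costs $(N+1)\epsilon_t$. I would avoid this by observing that the trajectory is generated by a single realization of noise whose deviation from the mean is governed by the same tube bound (ii). If $x(t)$ lies in the tube at every step and $\mathcal{E}(t)\subseteq\mathcal{X}_{\text{inv}}$ for some $t$, then invariance \eqref{eq:invariant} of $\mathcal{X}_{\text{inv}}$ under $\pi_{\text{backup}}$ guarantees safety for all later times with no further probability cost. If this simultaneous argument fails, my fallback is to define the ellipsoids with $\epsilon_t/(N+1)$, which changes only $z$.

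\textbf{Conclusion.} The event ``all $x(t)\in\mathcal{E}(t)\subseteq\mathcal{X}_{\text{safe}}$ and some $x(t')\in\mathcal{E}(t')\subseteq\mathcal{X}_{\text{inv}}$'' is exactly $x(0)\in\mathcal{X}^{\vec w,\vec v}_{\text{rec}}(N)$. It has probability at least $1-\epsilon_t$, so $x(0)$ is $\epsilon_t$-recoverable on the calibration event. Every state on which the shield \eqref{eq:shield} selects $\widehat\pi$ is then $\epsilon_t$-recoverable, which is the hypothesis needed for Theorem~\ref{thm:epsilonsafe}. Since that theorem's conclusion is then applied on the calibration event, which holds with probability at least $1-\delta$, $\pi_{\text{shield}}$ is $\epsilon$-safe with probability at least $1-\delta$.
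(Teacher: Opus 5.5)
The gap is the step you flag yourself. Your argument for getting the \emph{joint} recoverability event with probability $1-\epsilon_t$, rather than $1-(N+1)\epsilon_t$, does not go through.

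The disturbances $w(t)$ and noises $v(t)$ are drawn i.i.d.\ over time. So the failure events $\{x(t)\notin\mathcal{E}(t)\}$ at different $t$ are driven partly by fresh, independent noise and are not nested in general. From per-step coverage alone, the best possible joint bound is the union bound $1-\sum_t\mathbb{P}(x(t)\notin\mathcal{E}(t))$, which is tight when the failure events are disjoint.

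The $L_1$ tube of radius $\beta(\tau)\,\mathrm{tr}(\Sigma(t))$ in (ii) cannot rescue this, because it is a different set from $\mathcal{E}(t)$. To transfer anything from the tube to the ellipsoid you would need an inclusion such as $\beta(\tau)\,\mathrm{tr}(\Sigma(t))\le z\sqrt{\lambda_{\min}(\Sigma(t))}$. That is neither assumed in the statement nor checked by Algorithm~\ref{alg:shield}.

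Invariance \eqref{eq:invariant} only removes the steps \emph{after} the first time $t'\ge 1$ at which the true state lies in $\mathcal{X}_{\text{inv}}$; at $t'=0$ the next action comes from $\widehat\pi$, not $\pi_{\text{backup}}$. The steps $0,\dots,t'$, including the event $x(t')\in\mathcal{X}_{\text{inv}}$ itself, must still be controlled jointly. Since Algorithm~\ref{alg:shield} checks $\mathcal{X}_{\text{inv}}$ only at $t=N$, nothing is saved in practice. Your fallback, ellipsoids at level $\epsilon_t/(N+1)$, is a repair along the right lines. It still has to absorb the per-constraint factor below, and it proves a modified statement with a different $z$, not the theorem as stated.

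There is also an error one step earlier: $\Phi(z)\ge 1-\epsilon_t$ does not give $\mathbb{P}(x(t)\in\mathcal{E}(t))\ge 1-\epsilon_t$. Under $\mathcal{N}(\mu(t),\Sigma(t))$ the mass of \eqref{eq:ellipsoid} is $F_{\chi^2_n}(z^2)\le 2\Phi(z)-1<\Phi(z)$. For $n=4$ and the default $z=3.82$ this is about $0.994$. What $\Phi(z)$ does give, via your projection remark, is a per-constraint bound: if $\mathcal{E}(t)\subseteq\{a^\top x\le b\}$ then $\mathbb{P}(a^\top x(t)>b)\le 1-\Phi(z)$. So every face checked at a step (or separating hyperplane, for an obstacle) costs $1-\Phi(z)$.

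Your induction does not supply coverage either. Assumptions (i) and (ii) say nothing about the mass of $\mathcal{E}(t)$, and identifying $p(x(t))$ with $\mathcal{N}(\mu(t),\Sigma(t))$ rests entirely on reading (iii) as an assumption.

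For comparison, the paper's proof is your outline without the induction and without any treatment of simultaneity. It justifies (ii), either by the RKHS bound of Srinivas et al.\ extended to $\mathcal{X}\times\mathcal{U}$ and multiple outputs, or by a uniform bound for GP samples via Lipschitz constants. It then invokes (iii) to assert that each $\mathcal{E}(t)$ carries at least $1-\epsilon_t$ of the mass of $p(x(t))$, and concludes recoverability ``by definition''. That last step is exactly the marginal-to-joint leap you identified, and the paper does not justify it either.

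Under the same reading of (iii), what one can actually derive on the calibration event is recoverability with probability at least $1-\sum_{t=0}^{N}\ell_t\,(1-\Phi(z))$, where $\ell_t$ is the number of linear constraints checked at step $t$. The stated $1-\epsilon_t$ follows only if $z$ is chosen so that this sum is at most $\epsilon_t$. After that, Theorem~\ref{thm:epsilonsafe} applies as you describe.
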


\begin{remark} We comment on the assumptions for Theorem~\ref{thm:ellipsoids}, which generally hold in our experimental evaluation and analyses. First, (i) holds for many interesting control systems and $\pi_{\text{backup}}$ is linear and thus Lipschitz; (ii) follows from \cite{berkenkamp2017safe} to ensure the confidence intervals we build are robust to the disturbance set $\mathcal{W}$, GPs typically satisfy this assumption under the usual regularity assumptions, more details are provided in Appendix~\ref{sec:gaussianprocessappendix};
(iii) ensures that our analytic Gaussian approximation of the uncertainty sets $\mathcal{E}(t)$ capture at least $1-\epsilon_t$ of the probability pass of the true integral \eqref{eq:integral}, which is not always Gaussian \cite{deisenroth2011pilco}, we provide a thorough empirical analysis in Appendix~\ref{sec:skewkurtanalysis}
suggesting that the uncertainty sets broadly follow multivariate Gaussian distributions in most of our environments, thus validating this assumption in practice. 
\end{remark}

\begin{algorithm}[t]
    \caption{\texttt{Shield} ($\widehat x(0)$, $\widehat \pi$, $\pi_{\text{backup}}$, $\epsilon_t$, $N$).}
    \label{alg:shield}
    \raggedright
    \textbf{Input}: $\widehat x(0) \in \mathcal{X}$, $\widehat \pi$, $\pi_{\text{backup}}$, $\epsilon_t \in [0,1]$, $N \in \mathbb{N}$\\
    \textbf{Output}: Safe action $u \in \mathcal{U}$
    \begin{algorithmic}[1] %[1] enables line numbers
        \State $z \gets \Phi^{-1}(1 - \epsilon_t)$
        \State $\mu(0) \gets \widehat x(0)$, $\Sigma(0) \gets \text{diag}([\sigma_1^2, \ldots, \sigma_n^2])$
        \For{$t = 1, \ldots N$}
            \State Compute $\mathcal{E}(t)$ with (\ref{eq:ellipsoid}).
            \State Check $\mathcal{E}(t) \subseteq \mathcal{X}_{\text{safe}}$ via \eqref{eq:checkbox}, \eqref{eq:checkhull} or \eqref{eq:checkobs}.  
        \EndFor
        \State Check $\mathcal{E}(N) \subseteq \mathcal{X}_{\text{inv}}$ via \eqref{eq:checkbox} or \eqref{eq:checkhull}.
        \If {$\mathcal{E}(0), \ldots, \mathcal{E}(N) \subseteq \mathcal{X}_{\text{safe}} \land \mathcal{E}(N) \subseteq \mathcal{X}_{\text{inv}}$}
        \State \textbf{return} $\widehat\pi(\widehat x(0))$
        \Else
        \State \textbf{return} $\pi_{\text{backup}}(\widehat x (0))$
        \EndIf
    \end{algorithmic}
\end{algorithm}

\section{Implementation}
\label{sec:implementation}

Due to space constraints we defer the pseudocode of the full learning algorithm to Appendix~\ref{sec:algorithms}
However, in this section, we outline the key implementation details and relevant technical definitions.

%\new{
\paragraph{Replay buffer and policy optimization} During training we store all environment interactions of the shielded policy $\pi_{\text{shield}}$ in a replay buffer $\mathcal{D} = \{(x_t,u_t,r_t,x_{t+1})\}$ consisting of tuples of state, action, reward and next state. During policy optimization, we sample \emph{starting states} $x \sim \mathcal{D}$, from which we rollout the unshielded policy $\widehat{\pi}$ for a relatively short time horizon $H$ inside the $n$ GP dynamics models. These simulated rollouts provide on-policy samples without risking safety violations, since they do not interact with the real environment. Since policy optimization is done on relatively short rollouts we train $\widehat{\pi}$ with advantage actor-critic (A2C) \cite{sutton2018reinforcement}. A2C is often trained on shorter horizon rollouts and preferred here over other methods such as, PPO \cite{schulman2017proximal} and TRPO \cite{schulman2015trust}, which expect long horizon rollouts that may accumulate significant noise in the $n$ GP models, thus, destabilising learning. 

For a trajectory $\{(x_t,u_t,r_t,x_{t+1})\}_{t=0}^T$, the temporal-difference residual is,
\begin{equation}
    \delta_t = r_t + \gamma V_\phi(x_{t+1}) - V_\phi(x_t),
\end{equation}
where $V_\phi$ is the critic network and $\gamma$ the discount factor. 
The generalized advantage estimate (GAE) \cite{schulman2015high} is defined as,
\begin{equation}
    \widehat{A}_t^\lambda = \sum_{l=0}^{\infty} (\gamma \lambda)^l \, \delta_{t+l}
\end{equation}
with $\lambda \in [0,1]$ controlling the bias-variance trade-off.
The policy $\widehat \pi$ is updated via policy gradient, i.e., $\nabla J(\widehat{\pi}) = \widehat{A}^\lambda_t \cdot \nabla \log \widehat{\pi}(u_t \mid x_t)$, and the critic parameters $\phi$ are optimized by minimizing the squared Bellman error: $(V_\phi(x_t) - \widehat{R}^\lambda_t)^2$, where $\widehat{R}^\lambda_t$ is the bootstrapped TD-$\lambda$ return. Both the actor and critic are implemented as a two-layer feed-forward neural network with $\tanh$ activations. To improve stability, we use a target network $V_{\phi'}$ and \emph{Polyak averaging} \cite{polyak1992acceleration} to update a target network parameters: $\phi' \gets \tau \phi + (1-\tau)\phi'$. We also use \emph{symlog predictions} \cite{hafner2023mastering} to manage large or varying magnitude rewards. Given a raw return $R$, the symlog transform is,
\begin{equation}
    \text{symlog}(R) = \mathrm{sign}(R) \, \log(1 + |R|),
\end{equation}
with inverse $\text{symexp}(z) = \mathrm{sign}(z)(\exp(|z|)-1)$.
The critic is trained on the $\text{symlog}(R)$ targets, and regularized towards its target network predictions, reducing both sensitivity to reward magnitudes and critic overestimation.

\paragraph{GP Inference} Computing the exact GP posterior has cubic complexity $\mathcal{O}(\vert\mathcal{D}\vert^3)$ in the dataset size $\vert\mathcal{D}\vert$. We therefore use sparse approximations: \emph{Sparse GP regression} (SGPR) \cite{titsias2009variational} introduces $M \ll \vert\mathcal{D}\vert$ inducing inputs $Z = \{z_m\}_{m=1}^M$ with corresponding inducing outputs $u$, yielding a low-rank approximation to the kernel Gram matrix; \emph{Sparse variational GP regression} (SVGP) \cite{NIPS2015_6b180037} maintains a variational distribution $q(u) = \mathcal{N}(m,S)$ over inducing outputs, minimizing $\text{KL}[q(u)\,\|\,p(u \mid y)]$. We found SGPR to be more effective in practice, but include both implementations for flexibility. The inducing inputs and other GP hyperparameters (e.g., length-scales, signal variance, noise variance) are jointly trained by marginal likelihood maximization \cite{deisenroth2010efficient}.

%Computing the full GP posterior is computationally impractical for large datasets. In such instances, one could adopt a replay buffer approach -- only using a sliding window of the past experience for computing the GP posterior. In this paper we use sparse approximations of the GP posterior, parametrized by a set of inducing inputs (pseudo points). In our implementation, both (collapsed) sparse GP regression (SGPR) \cite{titsias2009variational} and (uncollapsed) sparse variational GP regression (SVGP) \cite{NIPS2015_6b180037} are available, although we found the former to be more effective. 

%\new{
\paragraph{Additional Features.} To encourage better exploration we explored the use of \emph{Prioritized replay}, inspired by \cite{schaul2015prioritized}, we implemented a novelty-based prioritized replay buffer. The state space is discretized into bins; the probability of sampling a starting state $x$ for policy optimization is set inversely proportional to the number of visits to its bin. Furthermore, we dealt with unsafe simulated rollouts (under $\widehat{\pi}$), by zeroing out subsequent rewards and terminating the trajectory early. This corresponds to zeroing out the gradients from beyond the first unsafe timestep, preventing conflict between the learned policy and the shield, and biasing towards safe exploration.
%}
%\fb{Consider presenting the algorithm at the beginning of the section, and then explain its working in the different subsections (basically as it is already done now.)}

\section{Experimental evaluation}
\label{sec:experiments}

We now present our experimental results. We first summarize the environment and baselines used in this paper. For additional environment details we refer the reader to Appendix~\ref{sec:environmentdescriptions}.

\paragraph{Environments} We evaluate our method on four types of continuous control tasks:
(1) \texttt{cartpole} \cite{6313077}: the goal is balance a pole on a moving cart; the safety constraint corresponds to preventing the pole from falling. For \texttt{cartpole} we consider two rewards: the usual $+1$ for maintaining upright balance, and \texttt{cartpole2}, where $+1$ is given for achieving a target velocity of $+0.1$. 
(2) \texttt{mountain\_car} \cite{Moore90efficientmemory-based}: the goal is drive an under-actuated car to the top of a mountain (delayed reward); the safety constraint corresponds to avoiding a collision with a wall on the opposite side of the valley. 
(3) \texttt{obstacle}: a 2D navigation task where the goal is to reach a target position while avoiding a single obstacle; \texttt{obstacle2} is more challenging due to obstacle placement, \texttt{obstacle3} and \texttt{obstacle4} are more challenging still with a non-convex combination of constraints. 
(4) \texttt{road}, \texttt{road\_2d}: are 1D and 2D road environments requiring the agent to reach a target location with a fixed speed limit. 

\paragraph{Baselines.} Due to assumption and guarantee mismatch we found it challenging to find relevant baselines. Regardless, we consider two model-free CMDP-based approaches: \emph{Constrained Policy Optimization} (CPO) \cite{achiam2017constrained} and \emph{PPO} with \emph{Lagrangian relaxation} (PPO-Lag) \cite{ray2019benchmarking}. These algorithms assume no knowledge of system dynamics, making them natural references for sample efficiency. However, since our approach leverages a precomputed backup controller and control invariant set, we make strictly stronger assumptions; thus the comparison is not directly fair, but favours the baselines in terms of assumptions. We also consider two model-based approaches: MPS \cite{bastani2021safe} and a recent extension DMPS \cite{banerjee2024dynamic}. These methods assume knowledge of the \emph{deterministic} dynamics of the system, a much stronger assumption than ours. To ensure compatibility, we omit disturbances and observation noise when running MPS/DMPS, effectively simplifying the environments. This adjustment makes the comparison unfair in the opposite direction; our method operates under strictly harder conditions.

\paragraph{Results.} We summarize the results for our approach (A2C-GP-Shield), CPO, PPO-Lag, MPS and DMPS in Tab.~\ref{tab:results}. In all environments, A2C-GP-Shield achieves \emph{perfect safety probability} once the $n$ GP models are properly calibrated, demonstrating the effectiveness of our analytic shielding framework in enforcing strict safety. Importantly, this is achieved without sacrificing return: in 7/10 cases our method achieves the highest mean reward across the baselines that strictly enforce the safety constraint with probability $1$. There is a clear trade-off in some environments; the CMDP-based approaches (CPO and PPO-Lag) optimize constraint satisfaction only in expectation; as a result, they often achieve higher raw returns (e.g., \texttt{cartpole2}, \texttt{obstacle2}) but at the cost of non-negligible safety violations ($>0$ unsafe probability). This highlights their inability to prevent irrecoverable failures during training. In contrast, our approach guarantees strict constraint satisfaction at every timestep, effectively ruling out unsafe trajectories. On the other hand, MPS and DMPS also achieve perfect safety but under much stronger assumptions (e.g., known deterministic dynamics). Overall, A2C-GP-Shield closes the gap between the two extremes: unlike CMDP methods, it enforces safety strictly (not just in expectation), and unlike MPS/DMPS, it does not require access to exact dynamics. This balance explains why A2C-GP-Shield consistently achieves both high safety and competitive return across all tested environments.
%}

\begin{table*}[t]
  \caption{Empirical mean return and safety probability (at the end of training), \textbf{bold text} denotes the best score, asterisk (*) denoted the best reward that also satisfies the safety constraint with probability 1, standard error (SE) bars (averaged over 5 independent runs) are reported. We also include the theoretical lower bound established by Thm.~\ref{thm:epsilonsafe} (based on $\epsilon_t= 10^{-5}$).}
  \label{tab:results}
  \centering
  \footnotesize
  \begin{tabular}{llccccc|c}
    \toprule
    %\multicolumn{2}{c}{Part}                   \\
    %\cmidrule(r){1-2}
      \textbf{Env.} & \textbf{Metric} & \textbf{A2C-GP-Shield} &\textbf{MPS} & \textbf{DMPS} & \textbf{CPO} & \textbf{PPO-Lag} & \textbf{Thm. \ref{thm:epsilonsafe}}  \\
    \midrule
    \texttt{cartpole} & Return & $\bm{200.0 \pm 0.00}$ & $\bm{200.0 \pm 0.00}$ & $\bm{200.0 \pm 0.00}$& $\bm{200.0 \pm 0.00}$& $ 188.0 \pm 5.53 $& \multirow{2}{*}{$1-\epsilon= 0.98$}\\
    & Safety Prob. & $\bm{1.000 \pm 0.00}$ & $\bm{1.000 \pm 0.00}$ & $\bm{1.000 \pm 0.00}$& $\bm{1.000 \pm 0.00}$& $ 0.910 \pm 0.04 $ & \\
    \midrule
    \texttt{cartpole2} & Return & $ 30.7 \pm 10.4 $ & $43.2\pm11.1$ & $65.1\pm23.1$*& $ 152.0 \pm 31.2 $& $\bm{171.0 \pm 7.53}$& \multirow{2}{*}{$1-\epsilon= 0.98$}\\
    & Safety prob. & $\bm{1.000 \pm 0.00}$& $\bm{1.000 \pm 0.00}$& $\bm{1.000 \pm 0.00}$ & $0.980 \pm 0.01$& $ 0.811 \pm 0.12 $ &\\
    \midrule
    \texttt{mountain\_car} & Return & $\bm{91.3 \pm 1.08}$* & $ 85.1 \pm 8.17$& $ 81.2 \pm 0.28 $ & $ -30.4 \pm 6.74 $& $ 73.2 \pm 18.7 $& \multirow{2}{*}{$1-\epsilon= 0.9$}\\
    & Safety prob. & $\bm{1.000 \pm 0.00}$& $\bm{1.000 \pm 0.00}$& $\bm{1.000 \pm 0.00}$ & $0.995 \pm 0.00$& $ 0.884 \pm 0.05 $&\\
    \midrule
    \texttt{obstacle} & Return & $ 32.2 \pm 0.10 $& $8.31 \pm 34.5$ & $32.7 \pm 0.21$& $ 32.5 \pm 0.30 $& $\bm{32.9 \pm 0.01}$*& \multirow{2}{*}{$1-\epsilon= 0.98$}\\
    & Safety Prob. & $\bm{1.000 \pm 0.00}$& $\bm{1.000 \pm 0.00}$& $\bm{1.000 \pm 0.00}$ & $\bm{1.000 \pm 0.00}$& $ \bm{1.000 \pm 0.00} $&\\
    \midrule
    \texttt{obstacle2} & Return & $ 22.9 \pm 5.76 $*& $-1.82 \pm 3.24$  & $20.2\pm 13.9$ &$ 15.3 \pm 6.08 $& $\bm{34.2 \pm 0.01}$& \multirow{2}{*}{$1-\epsilon= 0.98$}\\
    & Safety Prob. & $\bm{1.000 \pm 0.00}$& $\bm{1.000 \pm 0.00}$& $\bm{1.000 \pm 0.00}$ & $0.939 \pm 0.02$& $ 0.000 \pm 0.00 $&\\
    \midrule
    \texttt{obstacle3} & Return & $4.69 \pm 6.40$*& $-0.67 \pm 2.12$ & $4.28\pm 5.92$ &$ 8.56 \pm 12.3 $& $\bm{33.4 \pm 0.114}$& \multirow{2}{*}{$1-\epsilon= 0.98$}\\
    (non-convex)& Safety Prob. & $\bm{1.00 \pm 0.00}$& $\bm{1.00 \pm 0.00}$&$\bm{1.00 \pm 0.00}$ & $ 0.900 \pm 0.141 $& $ 0.148 \pm 0.296 $&\\
    \midrule
    \texttt{obstacle4} & Return & $15.5 \pm 6.11$*& $-1.12 \pm 3.13$ & $11.3 \pm 8.04$ &$ 8.56 \pm 12.3 $& $\bm{33.4 \pm 0.114}$& \multirow{2}{*}{$1-\epsilon= 0.98$}\\
    (non-convex)& Safety Prob. & $\bm{1.00 \pm 0.00}$& $\bm{1.00 \pm 0.00}$ & $\bm{1.00 \pm 0.00}$ &$ 0.924 \pm 0.0833 $& $ 0.00 \pm 0.00 $&\\
    \midrule
    \texttt{road} & Return & $\bm{23.0 \pm 0.01}$*& $22.7 \pm 0.04$& $22.8 \pm 0.02$ & $ 22.9 \pm 0.05 $& $ 22.9 \pm 0.01 $& \multirow{2}{*}{$1-\epsilon= 0.99$}\\
    & Safety Prob. & $\bm{1.000 \pm 0.00}$& $\bm{1.000 \pm 0.00}$& $\bm{1.000 \pm 0.00}$ & $0.974 \pm 0.01$& $ 0.000 \pm 0.00 $ &\\
    \midrule
    \texttt{road\_2d} & Return & $ 23.9 \pm 0.26 $& $24.0 \pm 0.22$* & $24.0 \pm 0.22$* & $24.0 \pm 0.09$& $\bm{24.1 \pm 0.02}$& \multirow{2}{*}{$1-\epsilon= 0.99$}\\
    & Safety Prob. & $\bm{1.000 \pm 0.00}$& $\bm{1.000 \pm 0.00}$& $\bm{1.000 \pm 0.00}$ & $0.969 \pm 0.01$& $ 0.073 \pm 0.07 $&\\
    \midrule
    \texttt{Hopper-v5} & Return & $1000 \pm 0$* & $1000 \pm 0$* & $1000 \pm 0$* &  $\bm{1005 \pm 9.98}$& $ 624 \pm 162 $ & \multirow{2}{*}{$1-\epsilon= 0.90$}\\
    & Safety Prob. & $\bm{1.000 \pm 0.00}$ & $\bm{1.000 \pm 0.00}$ & $\bm{1.000 \pm 0.00}$ & $0.976 \pm 0.0150$& $ 0.00 \pm 0.00 $&\\
    \bottomrule
  \end{tabular}
\end{table*}
%\new{
%\fb{relevance of what follows?}
%\ag{TODO: Move to related work?}

%}

\paragraph{Assessing sample efficiency} To fairly assess sample efficiency we compare only against model-free baselines (CPO, PPO-Lag), excluding MPS and DMPS since they assume full knowledge of deterministic dynamics. We also report A2C-Eval, which simply executes the learned policy $\widehat{\pi}$ without shielding after each policy update. Although unsafe in practice (as $\widehat{\pi}$ cannot be verified), this baseline is useful to monitor the learning progress of the underlying policy. Results in Fig.~\ref{fig:learningcurves} show that A2C-GP-Shield converges substantially faster than both CPO and PPO-Lag: in \texttt{cartpole}, $\widehat{\pi}$ reaches optimal performance more than one order of magnitude quicker, while in \texttt{mountain\_car} it achieves rapid convergence after the $n$ GP models become well calibrated. However, during early training, imperfect calibration leads to temporary safety violations. At the end of training, even though the underlying policy $\widehat \pi$ is unsafe while optimizing for rewards (demonstrating a clear trade-off), the shield prevents safety violations entirely. Full learning curves, including override statistics are provided in Appendix~\ref{sec:additional plots}.

\paragraph{Higher-dimensional systems} We further evaluated our approach on the \texttt{Hopper-v5} environment from Gymnasium \cite{towers2024gymnasium}, a standard MuJoCo \cite{todorov2012mujoco} benchmark featuring a simulated 2D robot that must hop forward without falling. The observation and action spaces are $n=11$ and $m=3$, respectively, with rewards consisting of a healthy survival bonus and forward velocity term. Each episode is limited to $T=1000$ timesteps. We designed a linear feedback controller around the nominal equilibrium configuration using discrete-time LQR, with $(A,B)$ matrices obtained from a finite-difference linearization of the MuJoCo dynamics (c.f., Appendix~\ref{sec:hopper}). 
The corresponding control invariant set was computed via ellipsoidal expansion and convex-hull verification. With $\epsilon_t = 10^{-4}$ and recovery horizon $H=100$, the shield intervened at every step, maintaining perfect safety and achieving the maximum healthy reward of $+1000.0$ as detailed in Tab.~\ref{tab:results}. %
These results demonstrate that while our framework is theoretically sound and guarantees safety by construction, its practical performance depends critically on the quality of the backup controller and the size of the verified control invariant set. For complex high-dimensional systems with contact dynamics, limit cycles, and other non-linear effects, verifying sufficiently large invariant sets remains an open challenge orthogonal to our approach. Developing richer or learned backup controllers capable of stabilizing over larger regions may therefore be essential for extending recovery-based shielding to such settings.

\begin{figure}[t]
    \centering
    \begin{subfigure}[t]{\linewidth}
        \centering
        \includegraphics[width=\textwidth]{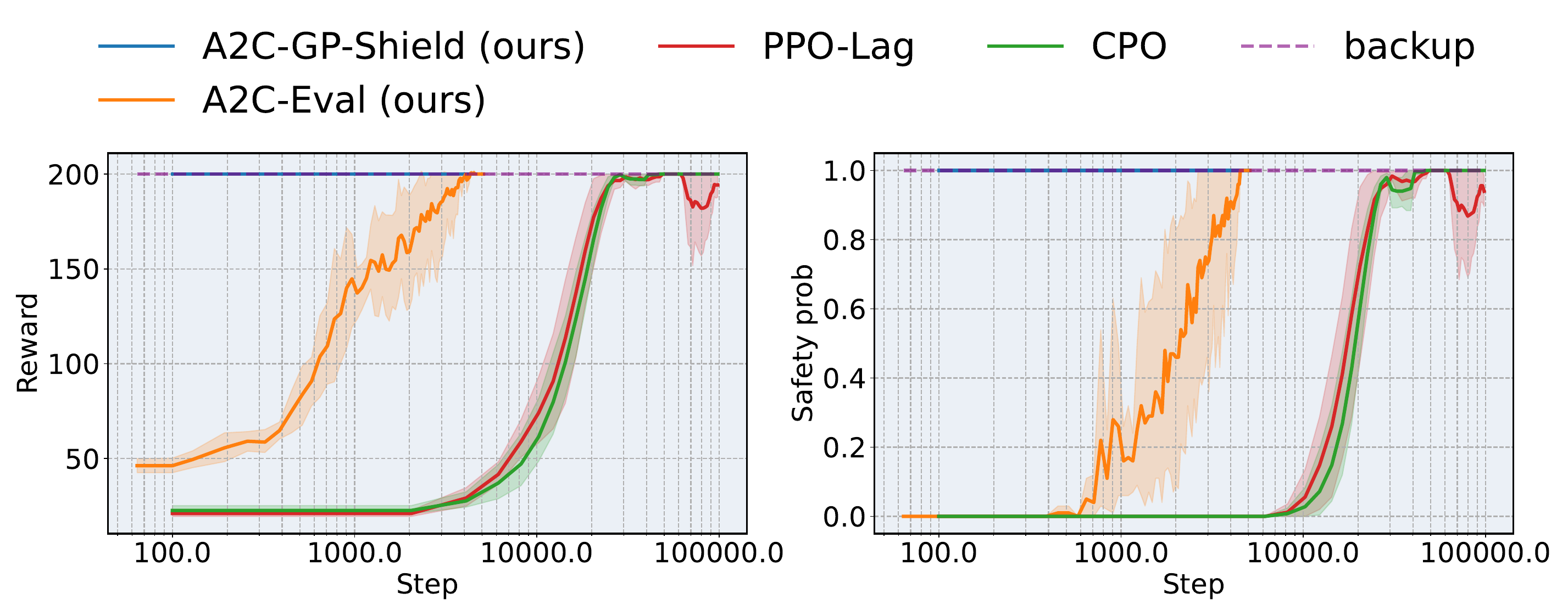}
        \caption{\texttt{cartpole}}
        \label{fig:cartpoleresults}
    \end{subfigure}
    \hfill
    \begin{subfigure}[t]{\linewidth}
        \centering
        \includegraphics[width=\textwidth]{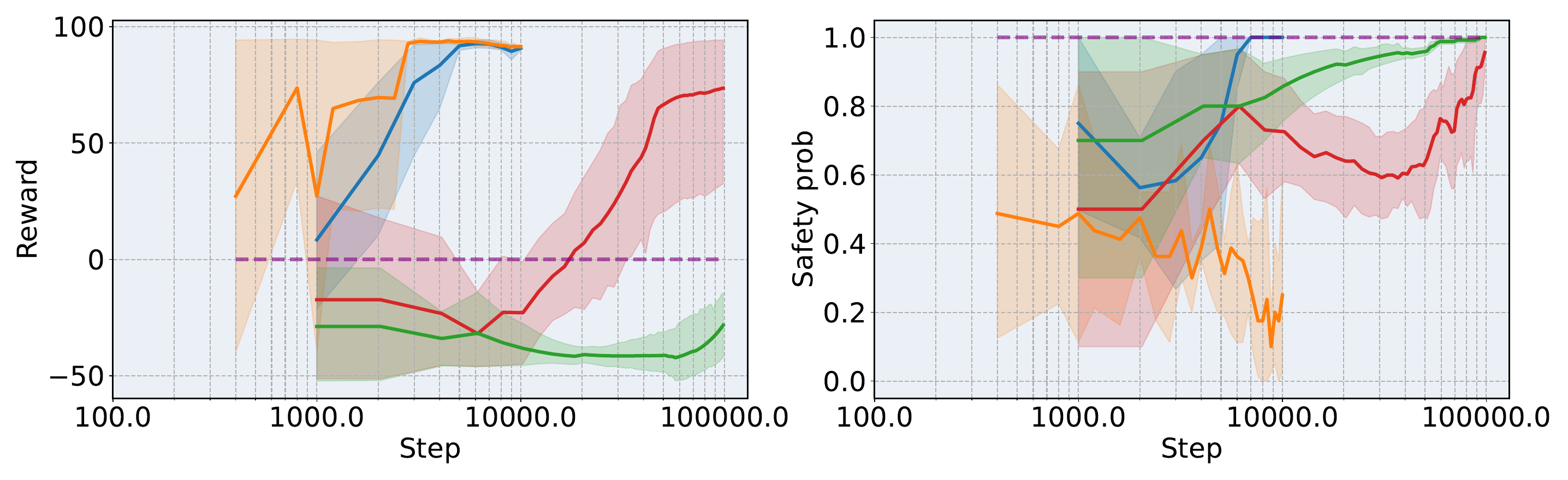}
        \caption{\texttt{mountain\_car}}
        \label{fig:mountaincarresults}
    \end{subfigure}
    \caption{Learning curves (log scale on the x-axis).}
    \label{fig:learningcurves}
\end{figure}

\section{Related work}
\label{sec:relatedwork}

%\new{
\paragraph{Constrained RL} The most widely adopted formulation for safe RL is the CMDP framework \cite{altman1999constrained}. In a CMDP, the agent optimizes its policy to maximize reward while satisfying a cost constraint (typically an expected cost or chance constraint) at each step or over each episode. A number of safe RL algorithms build on this framework, for example by using Lagrange relaxations \cite{ray2019benchmarking, as2022constrained}, or policy projection (e.g., \emph{Constrained Policy Optimization} \cite{achiam2017constrained}). While CMDP-based methods can improve the average safety of learning, they provide only indirect convergence guarantees, they often ensure constraints in expectation or asymptotically, which means they may still allow occasional, potentially dangerous violations during training. There exists GP methods that fall within this category, i.e., which model an unknown safety cost function through GPs \cite{wachi2018safe, wachi2020safe}. These methods are not without their limitations, making often unrealistic assumptions, e.g., assuming known deterministic dynamics or access to a privileged ``emergency reset'' button \cite{wachi2024safe}. In summary, CMDP guarantees tend to be \emph{weaker} and not suited for highly safety-critical environments \cite{voloshin2022policy}. 
%}

\paragraph{Shielding} In contrast to CMDP methods, \emph{shielding} often offers more rigorous guarantees. For RL, shielding was first introduced by Alshiekh et al. \cite{alshiekh2018safe} for safeguarding against formal logic specifications in discrete-state systems. Provided with a suitable (logical) safety-abstraction of the environment, shielding filters out unsafe actions proposed by the agent, enjoying key properties, such as \emph{correctness} and \emph{minimal interference}. For stochastic environments, this approach has recently been extended to optimality preserving \emph{probabilistic} guarantees for generic \emph{reachability properties} \cite{court2025probabilisticshieldingsafereinforcement}. In continuous control domains, implementing shielding requires computing a safe region of the state space and a policy to keep the agent within it. One line of work employs Hamilton-Jacobi (HJ) reachability analysis \cite{akametalu2014reachability} to derive the maximal safe set and a corresponding safe controller. Fisac et al. \cite{fisac2018general} generalized this idea to RL with uncertain dynamics by formulating a safety value function (via an HJ partial differential equation) that triggers an override when the agent is about to exit the safe set. HJ-based shielding methods offer strong safety guarantees for nonlinear systems, including the ability to handle non-convex state constraints \cite{wabersich2023data}. However, as noted earlier, they suffer from poor scalability. To improve scalability, other works focus on computing control invariant sets (often convex approximations of the safe region) for complex dynamics \cite{bastani2021safe,bastanib2021safe,li2020robust}. For instance, robust invariant set computation techniques have been applied to design shields that work for certain classes of non-linear systems \cite{li2020robust}. Furthermore, The concept of shielding has also been extended to higher dimensional systems with learned \emph{world models} \cite{he2022androids,goodall2023approximate}, although these methods come with some theory \cite{goodall2023approximate} they often any lack strict guarantees. 
%\new{
Our work addresses this more general case where the system dynamics are initially unknown, requiring the agent to learn them, and thus we build on the concept of shielding combined with online model learning. Sampling based techniques, e.g., \emph{Statistical MPS} \cite{bastanib2021safe}, can be used for stochastic models of the environment; for a step-wise safety probability $\epsilon_t$ and confidence $\delta_t$ \citet{bastanib2021safe} show that the required number of samples satisfies, $K(\epsilon_t, \delta) \geq \frac{\log(1/\delta_t)}{\log(1/(1-\epsilon_t))} + 1$. Substituting $\epsilon_t=10^{-5}$ (which was used in our experiments) and $\delta_t=0.01$ yields $K \approx 46{,}000$ samples per check, which is computationally infeasible, even with approximate sampling regimes \cite{Pinder2022} (c.f., Appendix~\ref{sec:samplinganalysis}). 
Notably, our method does away with the confidence $\delta_t$ (essentially absorbing it into the GP learning) and as discussed earlier can handle arbitrarily small $\epsilon_t$ without additional overhead. 
%}
\balance

%\paragraph{Safe RL with GP} 
\emph{Gaussian Process} (GP) models have long been used in RL for data-efficient learning of dynamics and rewards. PILCO \cite{deisenroth2011pilco} is a landmark model-based RL approach that uses GP dynamics models to achieve impressive sample efficiency. However, PILCO's original formulation had no safety considerations. An extension by Polymenakos et al. \cite{polymenakos2019safe} incorporated a safety constraint into the PILCO framework; this approach is limited by considering only a small, specific class of policies (e.g., linear and RBF) and did not provide rigorous safety guarantees for those policies. More generally, GPs have been integrated into safe RL algorithms primarily through the CMDP lens discussed above \cite{wachi2018safe, wachi2020safe, wachi2024safe}. 
%\new{
A different approach is to use learned models within a control-theoretic safety scheme. \emph{Learning-based Model Predictive Control} (MPC) has been explored as a way to enforce safety during RL. Koller et al. \cite{koller2018learning} pioneered a GP-based MPC for safe exploration, which uses GPs to construct high-confidence bounds on the system’s behaviour and then optimizes a control sequence that satisfies state constraints with high probability. Although, their framework only considers optimization over a sequence of time-dependent linear controllers. Similar work from Cheng et al. \cite{cheng2019end} consider control barrier functions (CBF) with GP uncertainty estimates, solving a QP for high-probability guarantees. More recently, Zhao et al. \cite{zhao2023probabilistic} proposed a \emph{Probabilistic Safeguard} framework that also leverages GP models for safety. In their approach, the agent first gathers an \emph{offline dataset} of safe trajectories to train a GP model, then defines a \emph{safety index} (analogous to a CBF). This safeguard method shares our philosophy of combining learning with a safety filter; however, it relies on designing an appropriate safety index and primarily addresses the offline training phase, whereas our work emphasizes online safe exploration and learning.

\section{Conclusion}
\label{sec:conclusion}

We proposed a recovery-based shielding framework that integrates Gaussian process (GP) dynamics models with formal safety semantics to guarantee a probabilistic lower bound on safety. Our approach propagates analytic uncertainty sets through the GP model and leverages a precomputed backup policy to certify $\epsilon$-recoverability of states. This expands the set of \emph{operationally safe} states beyond the control invariant set, while avoiding sampling-based verification and thus remaining efficient even under strict safety requirements.
Our implementation, A2C-GP-Shield, supports both SGPR \cite{titsias2009variational} and SVGP \cite{NIPS2015_6b180037}, enabling scalable training with large replay buffers. Nonetheless, GP inference can become computationally demanding in higher dimensions (e.g., $n>20$), suggesting that deep kernel learning may be a promising direction for future work. Extending our framework to handle moving or dynamic obstacles is another natural step: while our method applies directly when obstacle dynamics are known, learning or estimating unknown obstacle dynamics presents an open challenge. Finally, removing the reliance on an a priori backup policy, by computing or optimizing the backup controller online, could make the framework more broadly applicable in scenarios where such a controller is unavailable.

\begin{acks}
The research described in this paper was partially supported by the EPSRC (grant number EP/X015823/1).
\end{acks}

%%%%%%%%%%%%%%%%%%%%%%%%%%%%%%%%%%%%%%%%%%%%%%%%%%%%%%%%%%%%%%%%%%%%%%%%

%%% The next two lines define, first, the bibliography style to be 
%%% applied, and, second, the bibliography file to be used.

\bibliographystyle{ACM-Reference-Format} 
\bibliography{sample}

%%%%%%%%%%%%%%%%%%%%%%%%%%%%%%%%%%%%%%%%%%%%%%%%%%%%%%%%%%%%%%%%%%%%%%%%
\newpage
\onecolumn
\appendix

\section{Algorithms}
\label{sec:algorithms}

\begin{algorithm}[H]
    \caption{Recovery-based Shielding with GP}
    \label{alg:fullalgorithm}
    \raggedright
    \textbf{Input}: $\mathcal{X}_{\text{safe}}$, $\pi_{\text{backup}}$, $\mathcal{X}_{\text{inv}}$, $\epsilon_t \in [0,1]$, $N \in \mathbb{N}$\\
    \textbf{Output}: Policy $\widehat \pi$, value function $V_\phi$ (and target network $V_{\phi'}$) and $n$ GP dynamics models $p_{\theta}$.
    
    \begin{algorithmic}[1] %[1] enables line numbers
        \State Initialize $\widehat\pi$, $\phi$, $\phi'$ and $\theta$ appropriately and replay buffer $\mathcal{D}$ with random experience.
        \State Observe the initial state $\widehat x(0)$.
        \For{$t = 0, \ldots, \texttt{num\_steps}$}
            \State $u(t) \gets \texttt{Shield}(\widehat x(t), \widehat \pi, \pi_{\text{backup}}, \epsilon_t, N)$
            \State Use $u(t)$ and observe $\widehat x(t+1)$.
            \State Append $\langle u(t), x(t), x(t+1) \rangle$ to $\mathcal{D}$.
            \If{\texttt{time\_to\_update}}
                \State Update GP parameters $\theta$ with dataset $\mathcal{D}$.
                \State Sample a batch of starting states $B \in \mathcal{D}$.
                \State From each $x(\cdot) \in B$ rollout $\widehat\pi$ with $p_{\theta}$.
                \State Compute GAE ($\widehat{A}_t^{\lambda}$) and TD-$\lambda$ returns ($R^\lambda_t$) from rollouts.
                \State Update $\phi$ via MSE with $\text{symlog}(R^\lambda_t)$ and target network regularization.
                \State Update $\phi'$ via Polyak averaging \cite{polyak1992acceleration}.
                \State Update $\widehat\pi$ via policy gradient. 
            \EndIf
        \EndFor
        \State \textbf{return} $\widehat \pi$ and $p_{\theta}$.
    \end{algorithmic}
\end{algorithm}

\section{Proof of Theorem \ref{thm:epsilonsafe}}
\label{sec:proof}

\begin{restatedthm}{\ref{thm:epsilonsafe} (restated)}[$\epsilon$-Safe Policy] Assume: (i) $\mathcal{X}_{\text{inv}}$ is an invariant set for $\pi_{\text{backup}}$, (ii) $\mathcal{X}_0 \subseteq \mathcal{X}_{\text{inv}} \subseteq \mathcal{X}_{\text{safe}}$ and (iii) $\sum^{T}_{t=0}\epsilon_t = \epsilon$, then, $\pi_{\text{shield}}$ is $\epsilon$-safe for the time horizon $T \in \mathbb{N}$. 
\end{restatedthm}

\begin{proof}[Proof]
The proof of Theorem \ref{thm:epsilonsafe} is obtained by establishing the following invariant: ``we can always return to $\mathcal{X}_{\text{inv}}$ with probability at least $1  - \epsilon_t$ by using the backup policy $\pi_{\text{backup}}$ for $N$ timesteps''. Since the initial state $x(0) \in \mathcal{X}_{\text{inv}}$ and $\mathcal{X}_{\text{inv}}$ is invariant for $\pi_{\text{backup}}$, then this holds for $t=0$. By the inductive assumption we assume that the invariant has been established for $t-1$, if $x(t) \in \mathcal{X}^{\epsilon_t}_{\text{rec}}(N)$ we are safe to use the learned policy $\widehat\pi$ and by definition of $\mathcal{X}^{\epsilon_t}_{\text{rec}}(N)$ we have established the invariant. If $x(t) \not\in \mathcal{X}^{\epsilon_t}_{\text{rec}}(N)$ then we use the backup policy $\pi_{\text{backup}}$, which will return us to $\mathcal{X}_{\text{inv}}$ within $N$ timesteps as we have established the invariant in timestep $t-1$. The remainder of the proof is obtained by a union bound over the step-wise safety constraints $\epsilon_0,\epsilon_1,\ldots, \epsilon_T$.
\end{proof}

\section{Mean and covariance prediction (PILCO)}
\label{sec:meancovpred}
Here we detail the analytic mean and covariance prediction proposed in PILCO \cite{deisenroth2010efficient}.

\paragraph{Mean prediction ($\mu_{\Delta}$).} For the state dimensions $i = 1, \ldots,n$, we let $\mu_{\Delta}(t) = [\mu_1, \ldots \mu_n]^T$. Following the law of iterated expectations, we then obtain,
\begin{align}
    \mu_i &= \mathbb{E}_{\tilde x(t-1)}[\mathbb{E}_f[f(\tilde x (t-1))\mid \tilde x (t-1)]] \label{eq:deltapred}\\
    &= \mathbb{E}_{\tilde x(t-1)}[m_f(\tilde x (t-1))]\\
    &= \int m_f(\tilde x(t-1))\mathcal{N}(\tilde x(t-1) \mid \tilde\mu(t-1), \tilde \Sigma(t-1)) d \tilde x (t-1)\\
    & = \beta_i^T Q_i
\end{align}
with $\beta_i = (K_i + \sigma^2_{\varepsilon_i})^{-1}y_i$ and $Q_i= [ q_{1_i}, \ldots q_{D_i} ]^T$. Here $m_f$ is given by (\ref{eq:meanpred}), see Sec.~\ref{sec:dynamicsmodelling}. For $a = 1, \ldots, D$, the entries of $Q_i \in \mathbb{R}^D$ are then given by,
\begin{align}
    q_{a_i} &=\int k_i(\tilde x_a(\cdot), \tilde x(t-1)) \mathcal{N}(\tilde x(t-1) \mid \tilde\mu(t-1), \tilde \Sigma(t-1)) d \tilde x (t-1)\\
    &= \frac{\alpha^2_i}{\sqrt{\lvert \tilde \Sigma(t-1)\Lambda_i^{-1} + \mathbf{I} \rvert}}\exp(-\frac{1}{2} v_a^T(\tilde \Sigma(t-1) + \Lambda_i)^{-1}v_a)\\
    & \text{where }  \;v_a := (\tilde x_a(\cdot) - \tilde\mu(t-1)) \label{eq:vi}
\end{align}
The value $v_a$ is the difference between the $a^{\text{th}}$ training input, i.e., $\tilde x_a(\cdot)$ and the mean of the input distribution $p(\tilde x(t-1)) = p(x(t-1), u(t-1))$ calculated earlier, see Sec.~\ref{sec:dynamicsmodelling}.

\paragraph{Covariance prediction ($\Sigma_{\Delta}$).} To present the computation the covariance matrix $\Sigma_{\Delta}(t) \in \mathbb{R}^{n \times n}$ it is convenient to distinguish between diagonal $\sigma_{ii}^2$ and off diagonal elements $\sigma_{ij}^2$ ($i \neq j$) of $\Sigma_{\Delta}(t)$. Using the law of iterated variances for the state dimensions $i, j = 1, \ldots n$, we obtain,
\begin{align}
        \sigma_{ii}^2 &= \mathbb{E}_{\tilde x(t-1)}[\text{var}_f[\Delta_i \mid \tilde x(t-1)] + \mathbb{E}_{f, \tilde x(t-1)}[\Delta^2_i] - \mu_i^2\\
    \sigma_{ij}^2 &= \mathbb{E}_{f, \tilde x(t-1)}[\Delta_i\Delta_j]-\mu_i\mu_j \quad (i \neq j) \label{eq:offdiag}
\end{align}
where $\mu_i$ is computed using (\ref{eq:deltapred}) from before. We note that the usual covariance term is missing, in this case $\mathbb{E}_{\tilde x (t-1)}[\text{cov}_f[\Delta_i, \Delta_j \mid \tilde x (t-1)]]$, this is because given $ \tilde x (t-1)$ output dimensions are independent. Continuing, we have,
\begin{equation}
    \mathbb{E}_{f, \tilde x(t-1)}[\Delta_i\Delta_j] = \beta_i^TQ\beta_j
\end{equation}
where
\begin{equation}
    Q := \int k_i(\tilde X, \tilde x(t-1))k_j(\tilde X, \tilde x(t-1))^Tp(\tilde x (t-1)) d \tilde x (t-1)
\end{equation}
For $a, b = 1, \ldots D$, The entries of $Q_{ab}$ of $Q \in \mathbb{R}^{D \times D}$ are given by standard integration results for Gaussian distributions,
\begin{equation}
    Q_{ab} = \frac{k_i(\tilde x_a(\cdot), \tilde \mu(t-1))k_j(\tilde x_b (\cdot), \tilde \mu(t-1))}{\sqrt{\lvert R \rvert}} \cdot \exp\left(\frac{1}{2}z_{ab}^T R^{-1}\tilde \Sigma(t-1)z_{ab}\right) \label{eq:qij}
\end{equation}
where $R:= \tilde \Sigma(t-1)(\Lambda^{-1}_i + \Lambda_j^{-1}) + \mathbf{I}$ and $z_{ab}=\Lambda_i^{-1}v_{a} + \Lambda_{j}^{-1}v_b$, where $v_a$ defined in (\ref{eq:vi}). This concludes the computation of the off diagonals, which are now fully determined by Eqs.~(\ref{eq:deltapred})-(\ref{eq:vi}) and Eqs.~(\ref{eq:offdiag})-(\ref{eq:qij}). For the diagonal entries of $\Sigma_{\Delta}$, it remains to compute the additional term,
\begin{equation}
    \mathbb{E}_{\tilde x(t-1)}[\text{var}_f[\Delta_i \mid \tilde x(t-1)] = \alpha^2_i - \text{tr}((K_i + \sigma^2_{\varepsilon_i} \mathbf{I})^{-1}Q)
\end{equation}
with $Q$ given by (\ref{eq:qij}), and $\text{tr}((K_i + \sigma^2_{\varepsilon_i} \mathbf{I})^{-1}Q)$ denoting the trace of the matrix $(K_i + \sigma^2_{\varepsilon_i} \mathbf{I})^{-1}Q$.

\section{Gaussian process theory (Theorem ~\ref{thm:ellipsoids})}
\label{sec:gaussianprocessappendix}
\begin{restatedthm}{\ref{thm:epsilonsafe} (restated)}
Assume: (i) the unknown dynamics $f$ and $\pi_{\text{backup}}$ are both Lipschitz continuous in the $L_1$-norm, (ii) the $n$ GP dynamics models are well calibrated in the sense that with probability (w.p.) at least $1 - \delta$ there exists $\beta(\tau)>0$ such that $\forall t=0 \ldots N$ and $\forall w(t) \in \mathcal{W}$ we have $\lVert f(x(t), u(t), w(t)) - \mu(t) \rVert_1 \leq \beta(\tau) \text{tr}(\Sigma(t))$, (iii) $\forall t=0 \ldots N$ the higher order terms (e.g., skewness, kurtosis) of the true distribution $p(x(t))$ are negligible and can be ignored. Then if $\mathcal{E}(0), \ldots, \mathcal{E}(N) \subseteq \mathcal{X}_{\text{safe}}$ and if there exists $t \in \{0, \ldots N \}$ such that $\mathcal{E}(t) \subseteq \mathcal{X}_{\text{inv}}$ then $\widehat x(0)$ is $\epsilon_t$-recoverable with probability at least $1-\delta$. Thus $\pi_{\text{shield}}$ is $\epsilon$-safe by Theorem \ref{thm:epsilonsafe} (w.p., $1-\delta$). 
\end{restatedthm}
\begin{proof}
To enable the use of Gaussian process and satisfy the Lipschitz continuity property of the unknown function $f$ a common technique is used, which assume the disturbances $w(t) \in \mathcal{W}$ live in some reproducing kernel Hilbert space, corresponding to a differential kernel $k$ and have RKHS norm smaller and $B_\mathcal{W}$. This is a \emph{model agnostic} approach that does not restrict the true function $f$ to the space of Gaussian process priors \cite{srinivas2009gaussian}.
\begin{lemma}[\cite{srinivas2009gaussian} Theorem 6] For some arbitrary function $f : \mathcal{Z} \to \mathbb{R}$ with zero mean prior and kernel $k$ in some RKHS with norm smaller than $B$, with observations $y = f(z) + \sigma$ assuming sub-Gaussian noise in the range $[-\sigma_{\text{max}}, \sigma_{\text{max}}]$. Letting $\mu_n(z)$ and $\sigma^2_n(z)$ denote the mean and variance given $n$ noisy observations. Pick $\beta_n = B_{\mathcal{W}} + 4\sigma\sqrt{\tau_n + 1 + \ln(1/\delta)}$. Then with probability at least $1 - \delta$, for $\delta \in (0, 1)$, for all $n\geq 1$, $z \in \mathcal{Z}$ and $u(\cdot) \in \mathcal{U}$ it holds that $\lVert f(z) - \mu_{n-1}(z)\rVert \leq \beta_n \sigma_{n-1}(z)$
    \label{lem:srin}
\end{lemma}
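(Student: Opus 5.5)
The plan is to follow the standard argument behind \cite{srinivas2009gaussian} (Theorem 6), in its self-normalised martingale form. Write $z_1,\ldots,z_{n-1}$ for the inputs observed so far and $y_s = f(z_s)+\eta_s$ for the observations, where the $\eta_s$ are $\sigma$-sub-Gaussian (bounded in $[-\sigma_{\text{max}},\sigma_{\text{max}}]$) conditionally on the past. Let $K_{n-1}$ be the Gram matrix, $k_{n-1}(z) = [k(z_1,z),\ldots,k(z_{n-1},z)]^T$, and $\lambda>0$ the regulariser (the GP noise variance). The posterior quantities are
\begin{equation*}
\mu_{n-1}(z) = k_{n-1}(z)^T(K_{n-1}+\lambda \mathbf{I})^{-1}y, \qquad \sigma^2_{n-1}(z) = k(z,z)-k_{n-1}(z)^T(K_{n-1}+\lambda \mathbf{I})^{-1}k_{n-1}(z).
\end{equation*}
Write $y = f_{n-1}+\eta$ with $f_{n-1}=[f(z_1),\ldots,f(z_{n-1})]^T$. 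The triangle inequality then splits the error into a deterministic approximation term and a stochastic noise term:
\begin{equation*}
\lvert f(z)-\mu_{n-1}(z)\rvert \le \lvert f(z)-k_{n-1}(z)^T(K_{n-1}+\lambda \mathbf{I})^{-1}f_{n-1}\rvert + \lvert k_{n-1}(z)^T(K_{n-1}+\lambda \mathbf{I})^{-1}\eta\rvert .
\end{equation*}
The quantifier over $u(\cdot)\in\mathcal{U}$ is absorbed by taking $z=(x,u)$ as the joint GP input, so it suffices to prove the bound for all $z\in\mathcal{Z}$.

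\textbf{Approximation term.} Work in feature space, with $\phi(z)=k(\cdot,z)$ and $\Phi_{n-1}$ the operator stacking $\phi(z_1),\ldots,\phi(z_{n-1})$. By the reproducing property and the push-through identity, the approximation term equals $\langle f,\ \lambda(\Phi_{n-1}^*\Phi_{n-1}+\lambda I)^{-1}\phi(z)\rangle_k$. Applying Cauchy--Schwarz with $\lVert f\rVert_k\le B$ gives a factor $B\lVert\lambda(\Phi^*\Phi+\lambda I)^{-1}\phi(z)\rVert_k$. This norm is at most $\sqrt{\lambda}\,\lVert\phi(z)\rVert_{(\Phi^*\Phi+\lambda I)^{-1}} = \sigma_{n-1}(z)$, using the identity $\sigma^2_{n-1}(z)=\lambda\,\phi(z)^*(\Phi^*\Phi+\lambda I)^{-1}\phi(z)$. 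So the approximation term is at most $B\,\sigma_{n-1}(z)$, with no probability involved.

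\textbf{Noise term.} Rewrite the noise term as $\langle \phi(z), (\Phi^*\Phi+\lambda I)^{-1}\Phi^*\eta\rangle$. Cauchy--Schwarz in the $(\Phi^*\Phi+\lambda I)$-weighted norm bounds it by
\begin{equation*}
\lambda^{-1/2}\sigma_{n-1}(z)\,\lVert\Phi^*\eta\rVert_{(\Phi^*\Phi+\lambda I)^{-1}} .
\end{equation*}
It remains to control $S_n:=\lVert\Phi^*\eta\rVert_{(\Phi^*\Phi+\lambda I)^{-1}}$ uniformly over all $n$. For this I would use the self-normalised martingale inequality (Abbasi-Yadkori et al., in its kernelised form). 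Since $\Phi^*\eta=\sum_s\eta_s\phi(z_s)$ is a vector-valued martingale when each $z_s$ is chosen before $\eta_s$ is revealed, one obtains, with probability at least $1-\delta$ simultaneously for all $n$,
\begin{equation*}
S_n^2 \le 2\sigma^2\log\!\left(\det(\mathbf{I}+\lambda^{-1}K_{n-1})^{1/2}/\delta\right).
\end{equation*}
The log-determinant is twice the mutual information, so it is bounded by the maximal information gain, giving $S_n^2\lesssim \sigma^2(2\tau_n+2\ln(1/\delta))$. With the choice $\lambda=1+\eta'$ for small $\eta'$ and a crude bound of the form $\sqrt{a+b}\le\sqrt a+\sqrt b$, and after collecting constants, this gives the noise term $\le 4\sigma\sqrt{\tau_n+1+\ln(1/\delta)}\,\sigma_{n-1}(z)$, where the $+1$ absorbs the slack from the regulariser. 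Adding the two terms yields $\lVert f(z)-\mu_{n-1}(z)\rVert\le\beta_n\sigma_{n-1}(z)$ with $\beta_n = B_{\mathcal{W}}+4\sigma\sqrt{\tau_n+1+\ln(1/\delta)}$, holding for all $n$ and $z$ on a single event of probability at least $1-\delta$.

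The main obstacle is the uniform-in-$n$ martingale step. The inputs $z_s$ are chosen adaptively from past data, so a fixed-design Gaussian tail bound does not apply. I would prove it via the standard supermartingale $M_n=\exp(\langle\theta,\Phi^*\eta\rangle-\tfrac{\sigma^2}{2}\lVert\Phi\theta\rVert^2)$, using the sub-Gaussianity of $\eta_s$ given the past. I would then apply the method of mixtures with a Gaussian prior on $\theta$, and finish with Ville's (maximal) inequality at a stopping time; this makes the bound hold for all $n$ at once rather than requiring a union bound over $n$. In infinite-dimensional feature spaces the determinant must be interpreted through $K_{n-1}$ via Sylvester's identity.
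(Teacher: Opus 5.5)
The paper gives no proof of this lemma. It only imports it from \cite{srinivas2009gaussian}, Theorem 6, and your argument differs from the one behind that citation. Srinivas et al.\ bound $\lvert f(z)-\mu_{n-1}(z)\rvert$ by $\sigma_{n-1}(z)$ times the norm of $f-\mu_{n-1}$ in the RKHS of the posterior kernel. They then control the noise contribution with a Freedman-type martingale concentration argument plus union bounds over time. This yields a multiplier $\sqrt{2B^2+300\,\tau_n\ln^3(n/\delta)}$, which is not of the additive form in the statement.

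Your split into a deterministic bias term at most $B\sigma_{n-1}(z)$ and a noise term at most $\lambda^{-1/2}\sigma_{n-1}(z)S_n$, with $S_n$ controlled by a kernelised self-normalised inequality, is essentially the proof of \cite{chowdhury2017kernelized}, Theorem 2. That is where the additive $\beta_n$ used here (via \cite{berkenkamp2017safe}) actually comes from. Your route has three advantages:
\begin{itemize}
\item uniformity in $n$ comes from one maximal inequality rather than union bounds;
\item uniformity in $z$ is free, because the only random quantity, $S_n$, does not depend on $z$;
\item the constants are sharper: your chain gives $B+\lambda^{-1/2}\sigma\sqrt{2(\tau_n+\ln(1/\delta))}$, so the factor $4$ and the $+1$ are pure slack.
\end{itemize}
In Chowdhury--Gopalan the $+1$ comes from the extra term $(n-1)\log(1+\eta')$ in their $\det((1+\eta')\mathbf{I}+K_{n-1})$ form of the inequality. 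They keep it below $2$ with the horizon-dependent choice $\eta'=2/T$. With the standard $\det(\mathbf{I}+\lambda^{-1}K_{n-1})$ form you use, that term never appears. So your remark about the $+1$ absorbing regulariser slack does not describe your derivation, and you need no horizon to fix $\eta'$.

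Two points should be made explicit.

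First, the final comparison needs $\lambda\ge 1$, with $\tau_n$ the information gain computed at that same $\lambda$. This hypothesis is absent from the statement, and it matters. Take one observation at $z_1=z$ with $k(z,z)=1$, $f\equiv 0$, and $\eta_1$ uniform on $\{\pm\sigma\}$. Then $\lvert f(z)-\mu_1(z)\rvert/\sigma_1(z)=\sigma/\sqrt{\lambda(1+\lambda)}$. This exceeds $\beta_2$ for small $\lambda$, because $\tau_2$ grows only logarithmically in $1/\lambda$.

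Second, for the RBF kernel the feature space is infinite-dimensional and carries no Gaussian measure with covariance proportional to the identity. So the mixing prior on $\theta$ does not exist as written; the difficulty is not only the determinant. The standard fix is to mix over function values instead:
\begin{itemize}
\item Draw $g\sim\mathcal{GP}(0,\lambda^{-1}k)$ independently of the data, taking a continuous version of $g$.
\item Set $M_n^g=\exp\bigl(\sum_s\eta_s g(z_s)/\sigma-\tfrac12\sum_s g(z_s)^2\bigr)$. This is a supermartingale for each fixed $g$ because $z_s$ is predictable.
\item Integrating over $(g(z_1),\ldots,g(z_{n-1}))\sim\mathcal{N}(0,\lambda^{-1}K_{n-1})$ gives exactly $\det(\mathbf{I}+\lambda^{-1}K_{n-1})^{-1/2}\exp\bigl(S_n^2/(2\sigma^2)\bigr)$, since $S_n^2=\eta^T K_{n-1}(K_{n-1}+\lambda\mathbf{I})^{-1}\eta$.
\end{itemize}
Ville's inequality applied to this mixture then yields your bound on $S_n^2$ for all $n$ simultaneously, with no finite-dimensional approximation.
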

\begin{remark} Lemma \ref{lem:srin} can be easily extended to an extended parameter space of $\mathcal{X} \times \mathcal{U}$ \cite{berkenkamp2017safe} and extended to multi output functions using the proof techniques in (\cite{chowdhury2017kernelized} Theorem 2).  While Lemma \ref{lem:srin} makes different assumptions than usual GP assumptions (e.g.~$f \sim \mathcal{GP}(0, k(\cdot,\cdot))$, Gaussian noise variances), it demonstrates that the error bounds are conservative enough to capture the true function with high probability.
\end{remark}
We further note that the bound in \ref{lem:srin} depends on the information capacity parameter $\tau_n$, which corresponds to the maximum mutual information that could be gained about the unknown function $f$ from samples. For many commonly used kernels, including the squared exponential kernel (which we consider in this paper), the parameter $\tau_n$ is sub-linear in the number of samples $n$ and can be accurately approximated \cite{srinivas2009gaussian}.

Another approach is to directly assume the unknown function $f$ is a sample from a Gaussian process $\mathcal{GP}(0, k(\cdot,\cdot))$, using the Lipschitz constant of $k$ and $f$ on the closed set $\mathcal{X}\times \mathcal{U}$ we can uniformly bound the error of $f$  (confidence interval) with high probability for all $(x(\cdot), u(\cdot)) \in \mathcal{X}\times \mathcal{U}$. This requires knowledge of the Lipschitz constant of $f$, which can robustly estimated \cite{lederer2019uniform}.

In either cases we have that assumption (ii) holds with high probability, then by assumption (iii) the result of Thm.~\ref{thm:ellipsoids} holds since the uncertainty sets $\mathcal{E}(0), \ldots, \mathcal{E}(N)$ now contain at least $1-\epsilon_t$ of the probability mass of the true distributions $p(x(0)), \ldots p(x(N))$ which implies that $\widehat x(t)$ is $\epsilon_t$-recoverable by the definition of $\epsilon$-recoverable states.
\end{proof}

\clearpage
\newpage
\section{Additional plots}
\label{sec:additional plots}

\begin{figure*}[ht!]
    \centering
    \begin{subfigure}[t]{0.49\linewidth}
        \centering
        \includegraphics[width=0.49\textwidth]{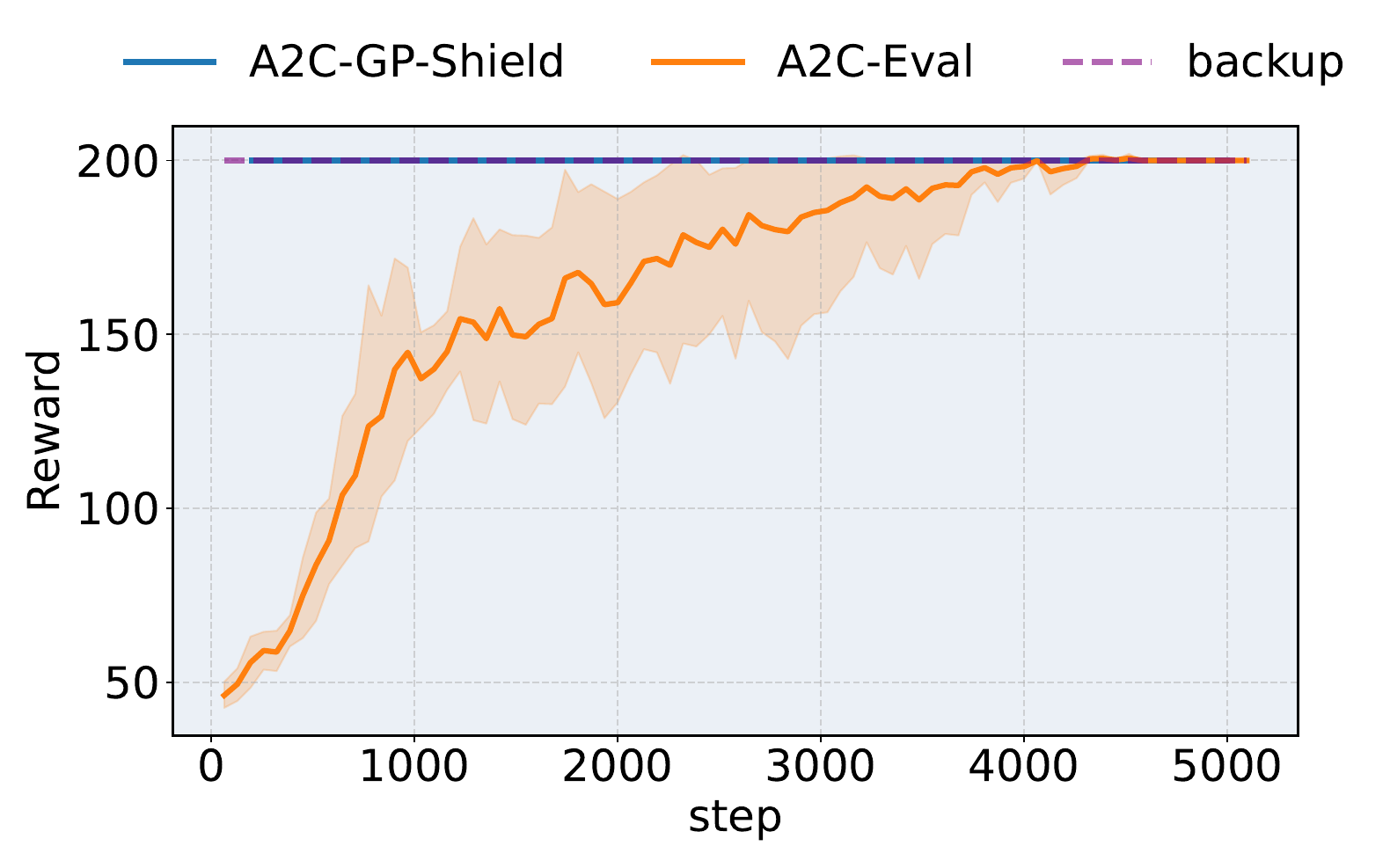}
        \includegraphics[width=0.49\textwidth]{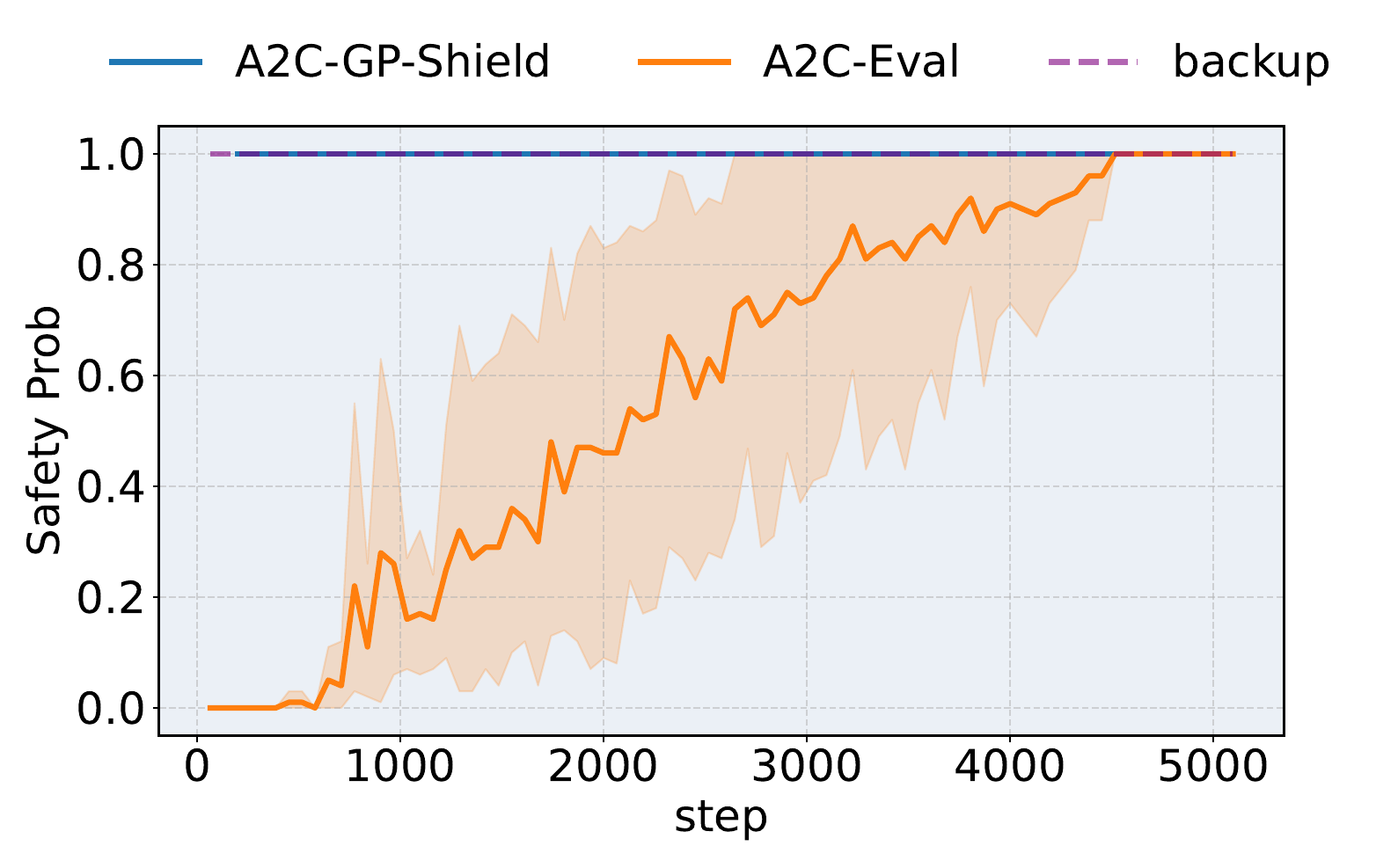}
        \caption{\texttt{cartpole (i)}}
        
    \end{subfigure}
    \hfill
    \begin{subfigure}[t]{0.49\linewidth}
        \centering
        \includegraphics[width=0.49\textwidth]{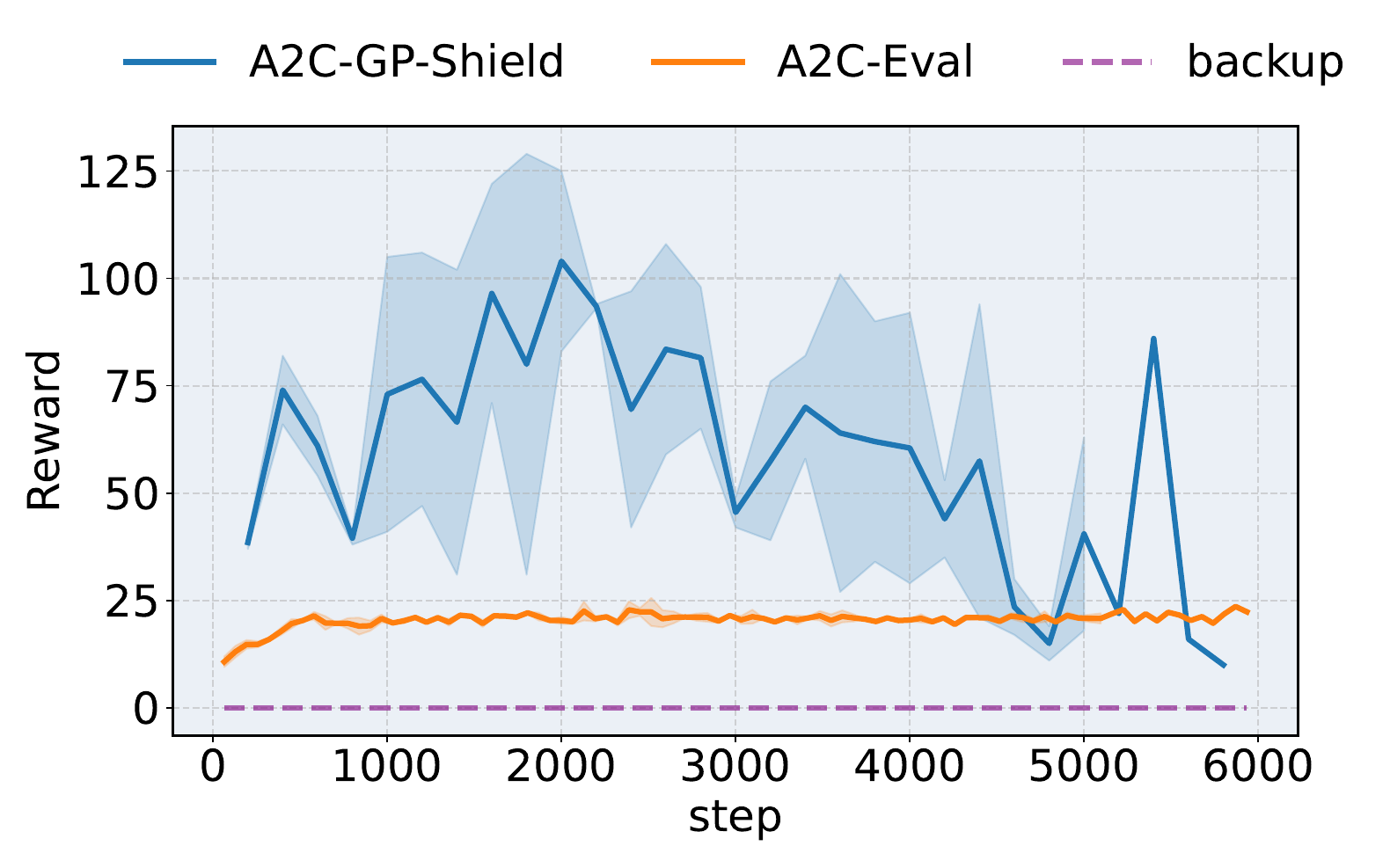}
        \includegraphics[width=0.49\textwidth]{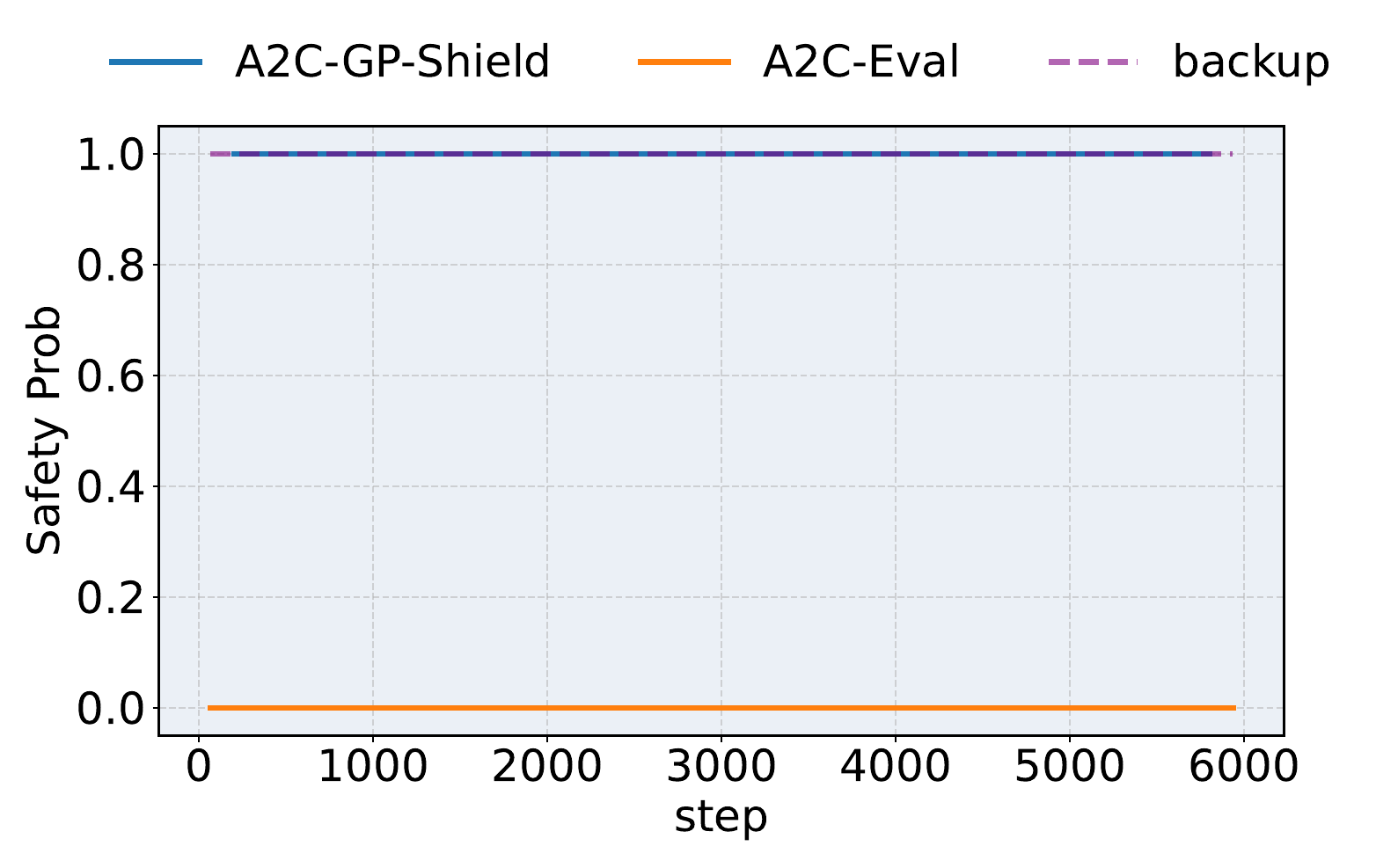}
        \caption{\texttt{cartpole (ii)}}
        
    \end{subfigure}
    \hfill
    \begin{subfigure}[t]{0.49\linewidth}
        \centering
        \includegraphics[width=0.49\textwidth]{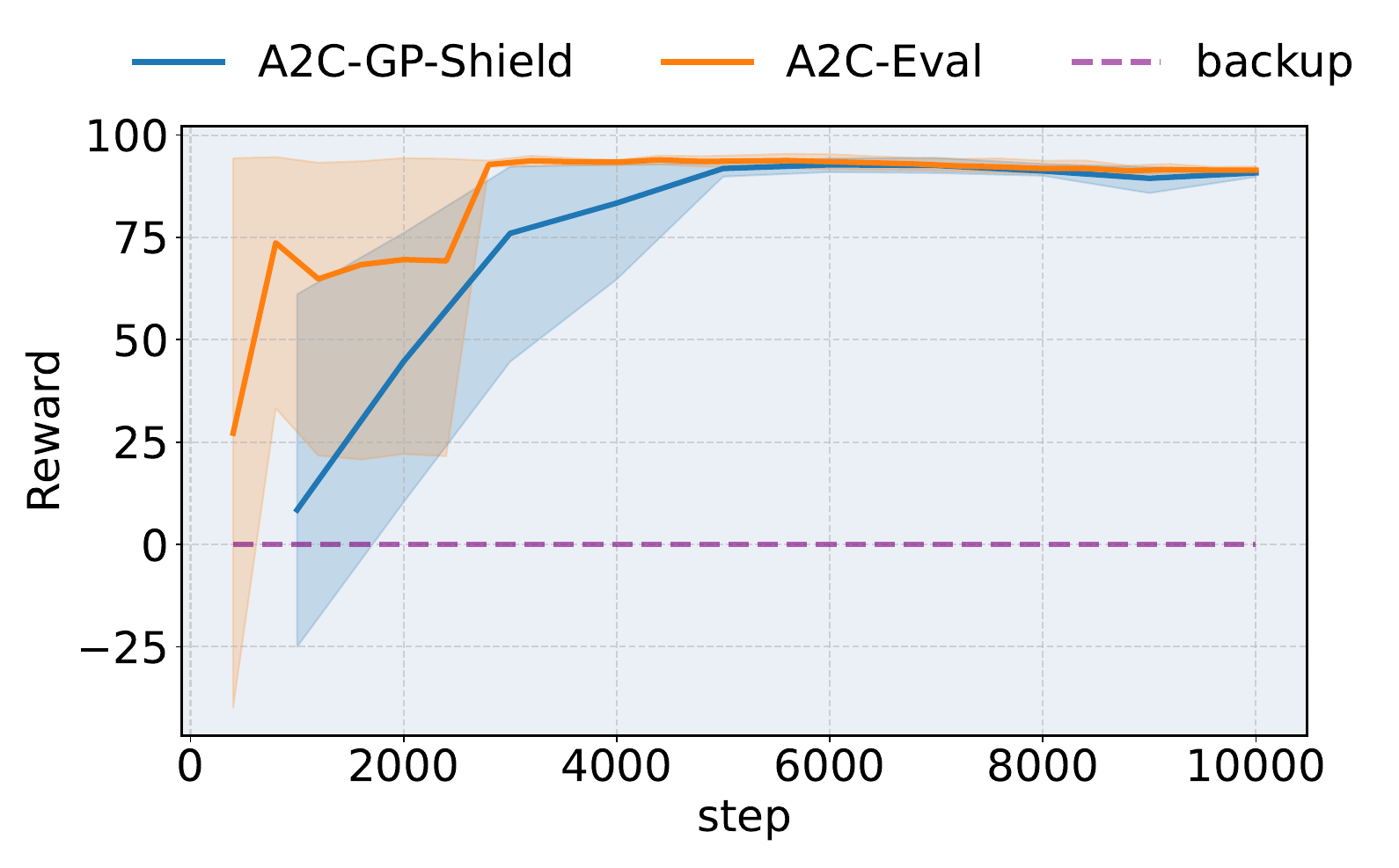}
        \includegraphics[width=0.49\textwidth]{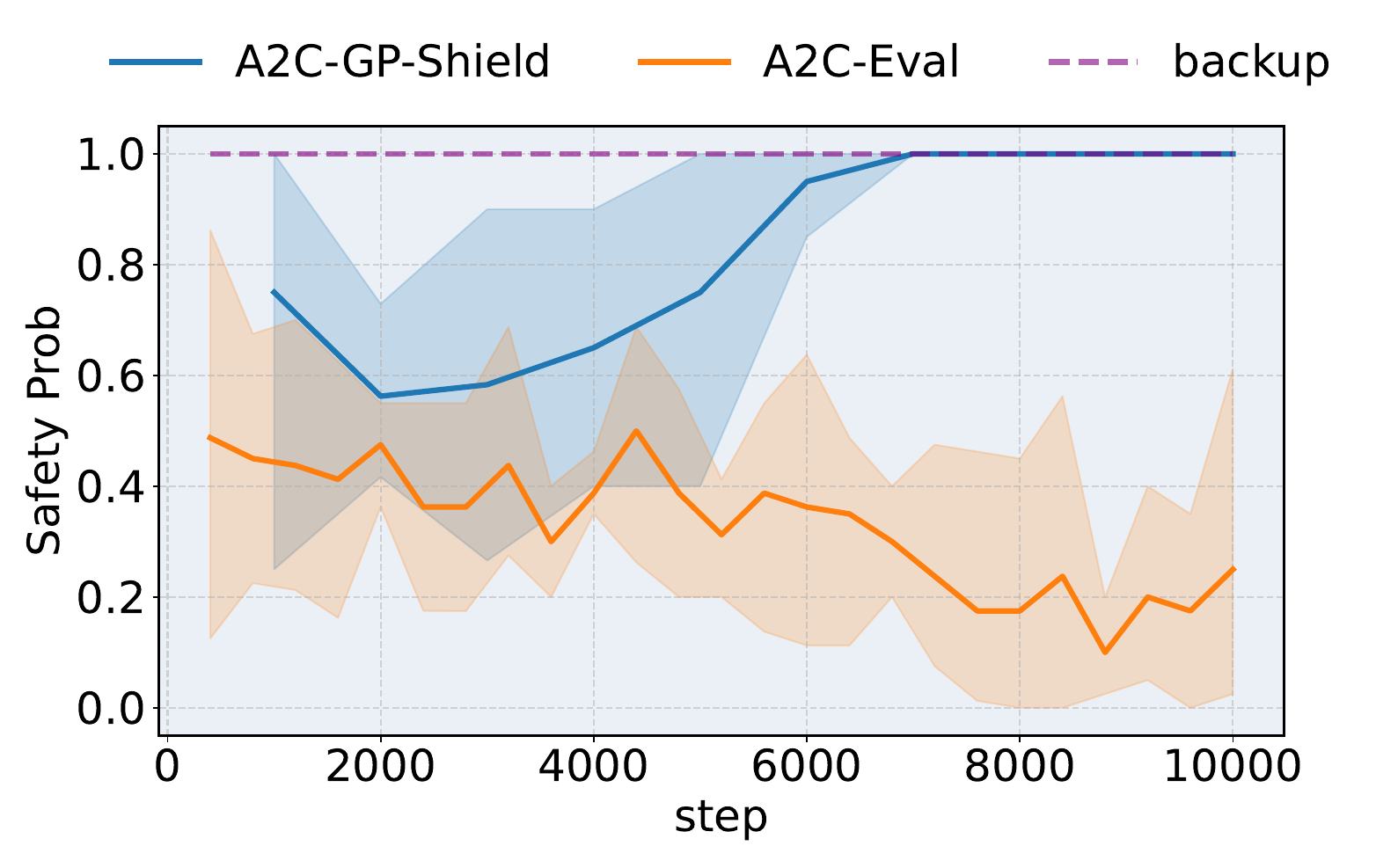}
        \caption{\emph{mountain\_car}}
        
    \end{subfigure}
    \hfill
    \begin{subfigure}[t]{0.49\linewidth}
        \centering
        \includegraphics[width=0.49\textwidth]{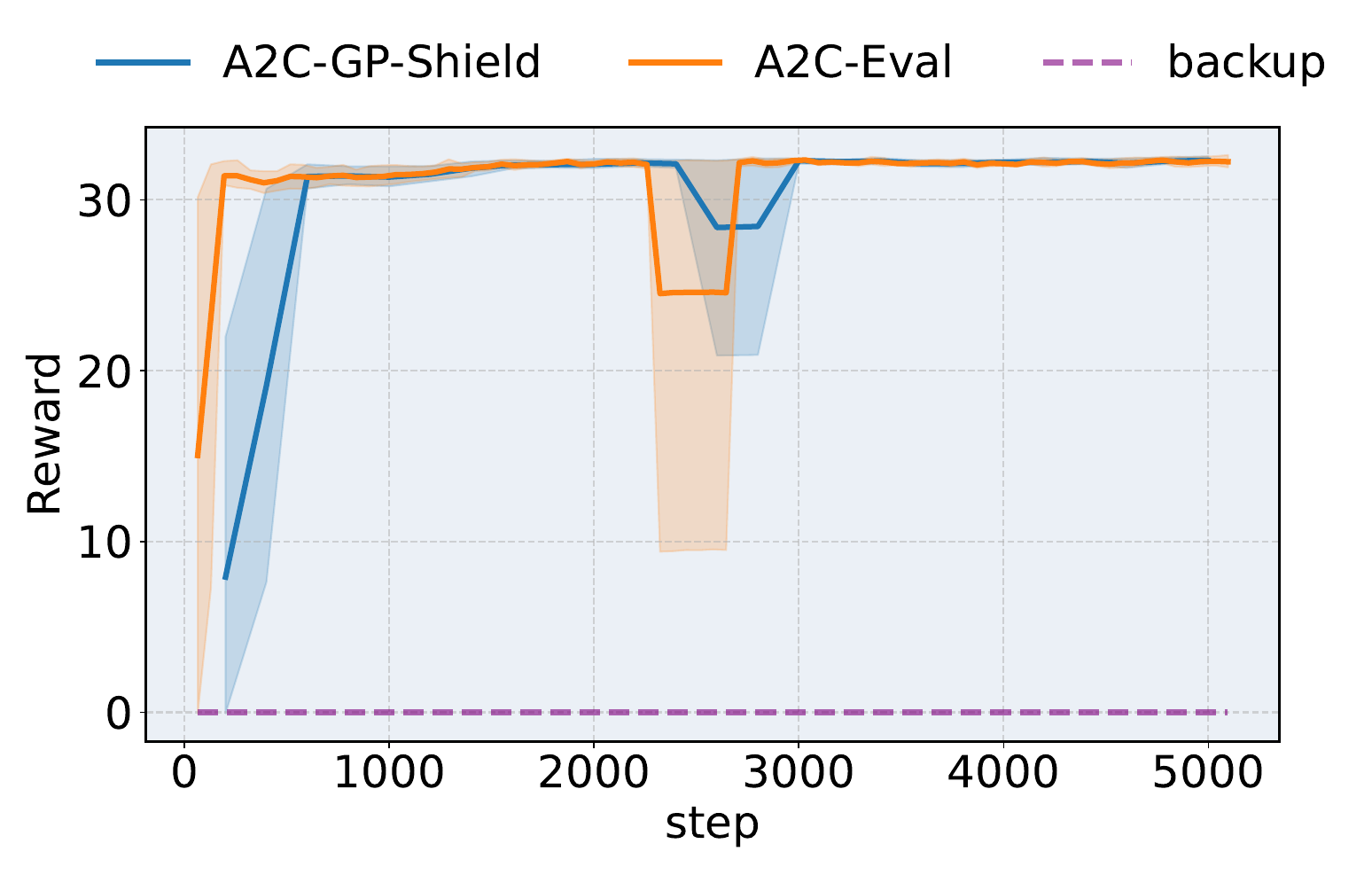}
        \includegraphics[width=0.49\textwidth]{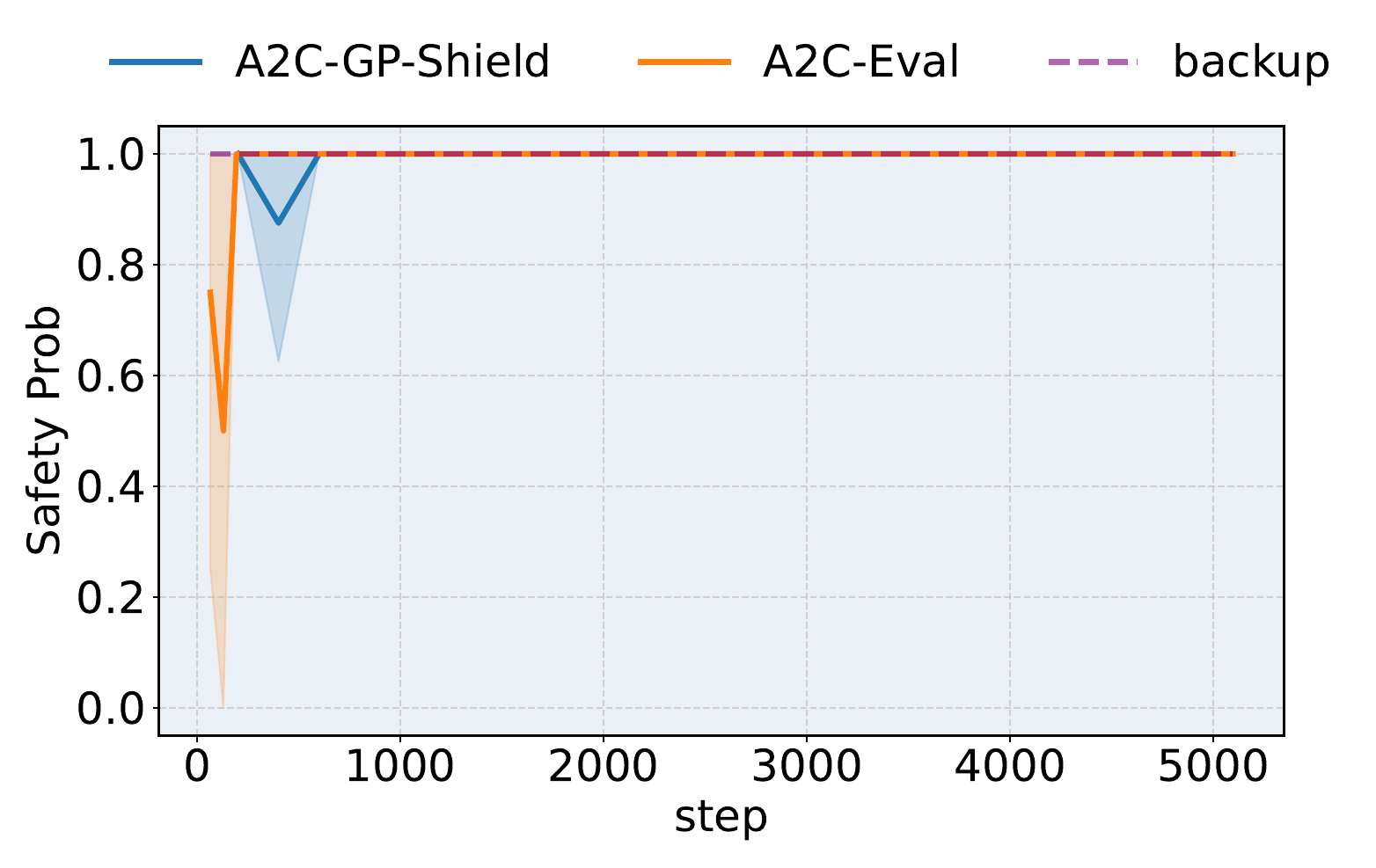}
        \caption{\texttt{obstacle}}
        
    \end{subfigure}
    \hfill
    \begin{subfigure}[t]{0.49\linewidth}
        \centering
        \includegraphics[width=0.49\textwidth]{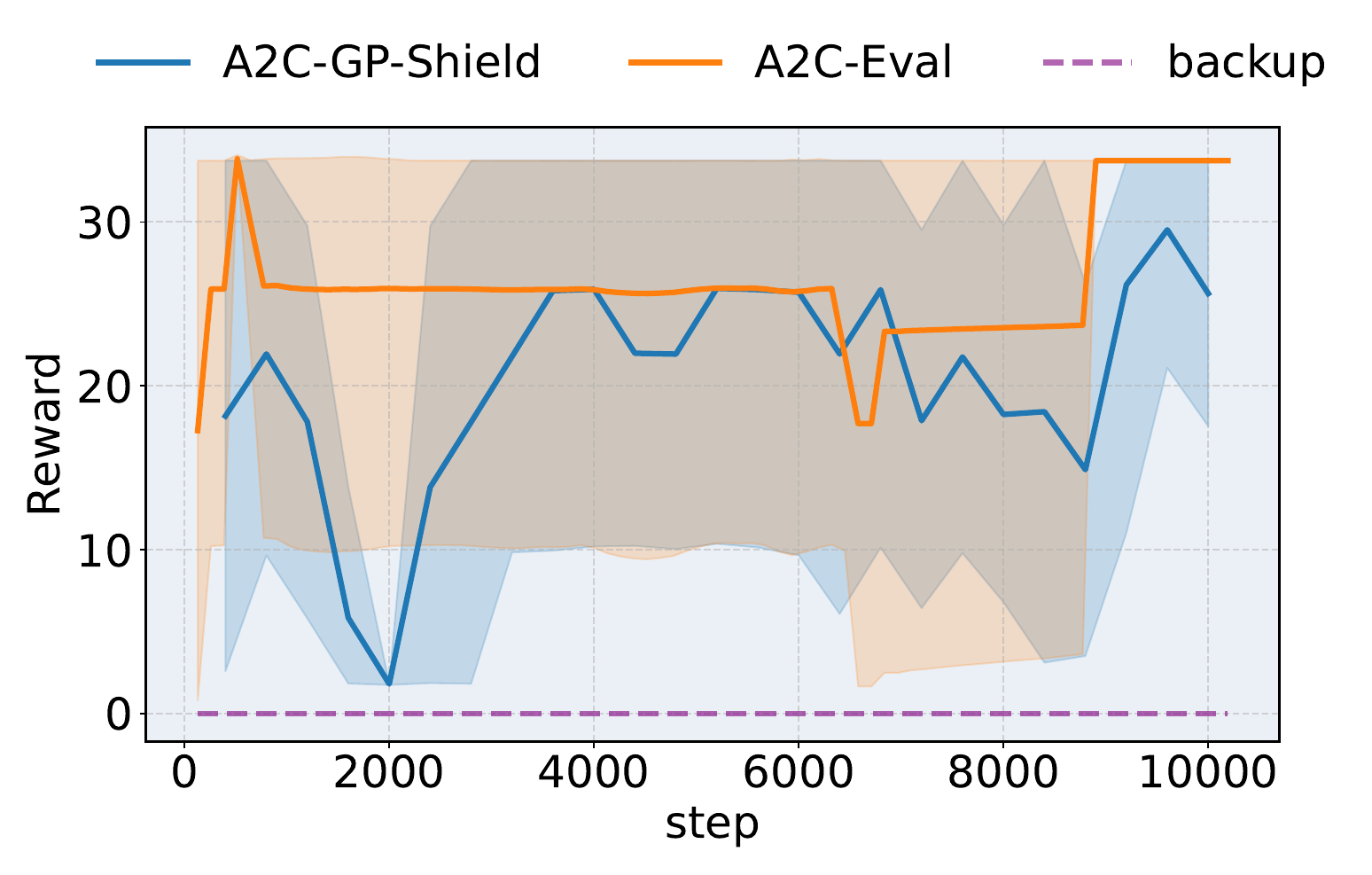}
        \includegraphics[width=0.49\textwidth]{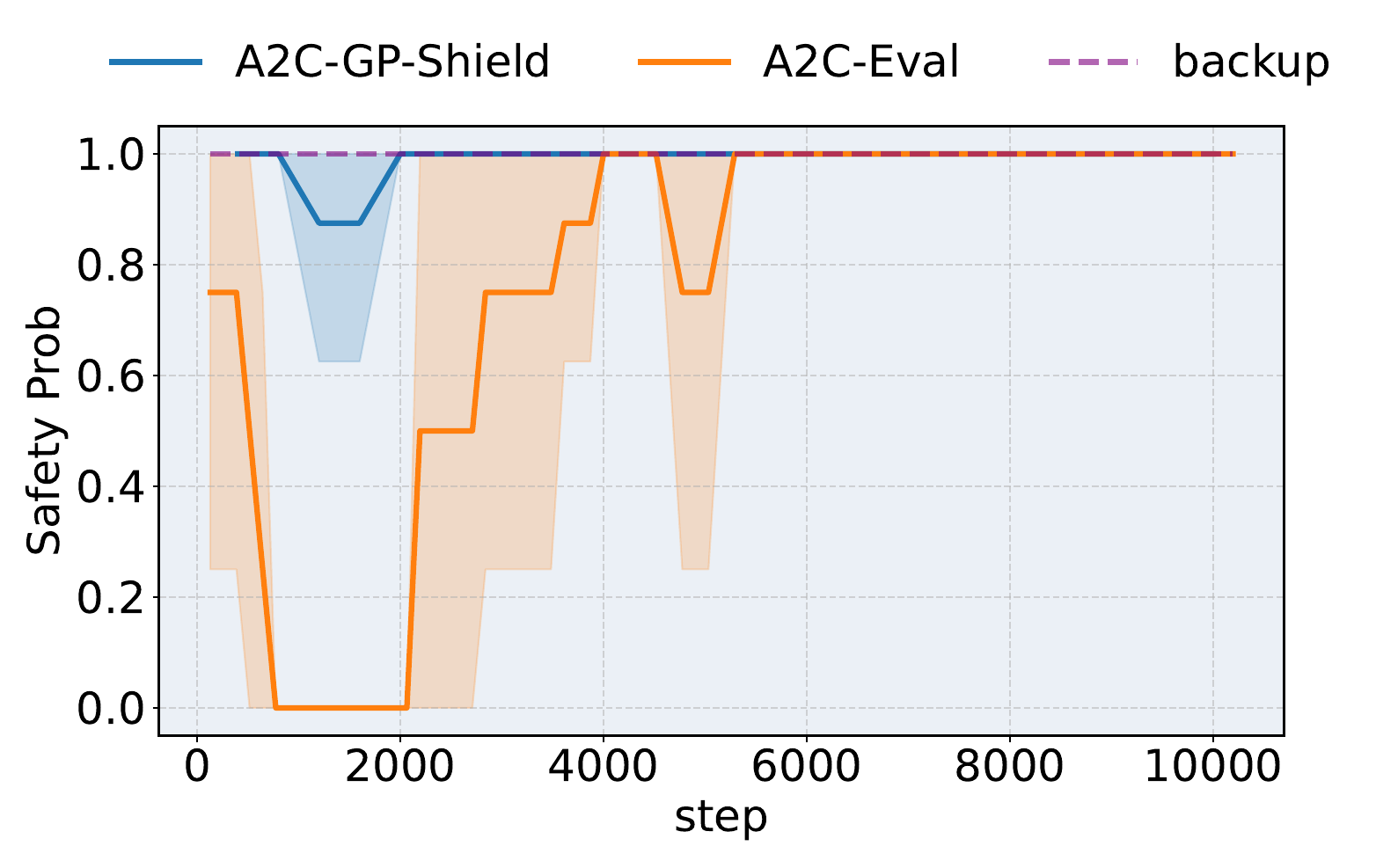}
        \caption{\texttt{obstacle2}}
        
    \end{subfigure}
    \hfill
    \begin{subfigure}[t]{0.49\linewidth}
        \centering
        \includegraphics[width=0.49\textwidth]{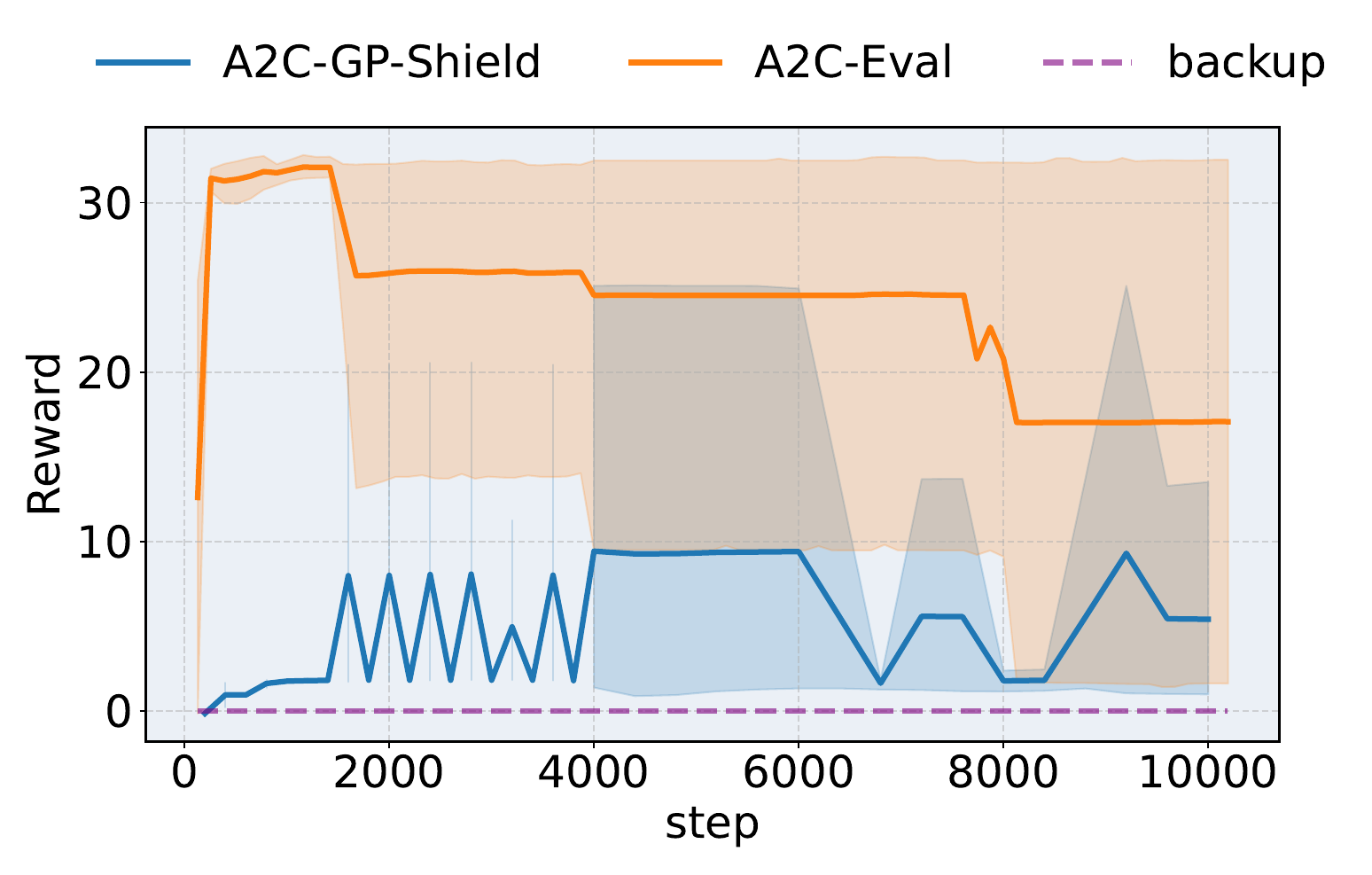}
        \includegraphics[width=0.49\textwidth]{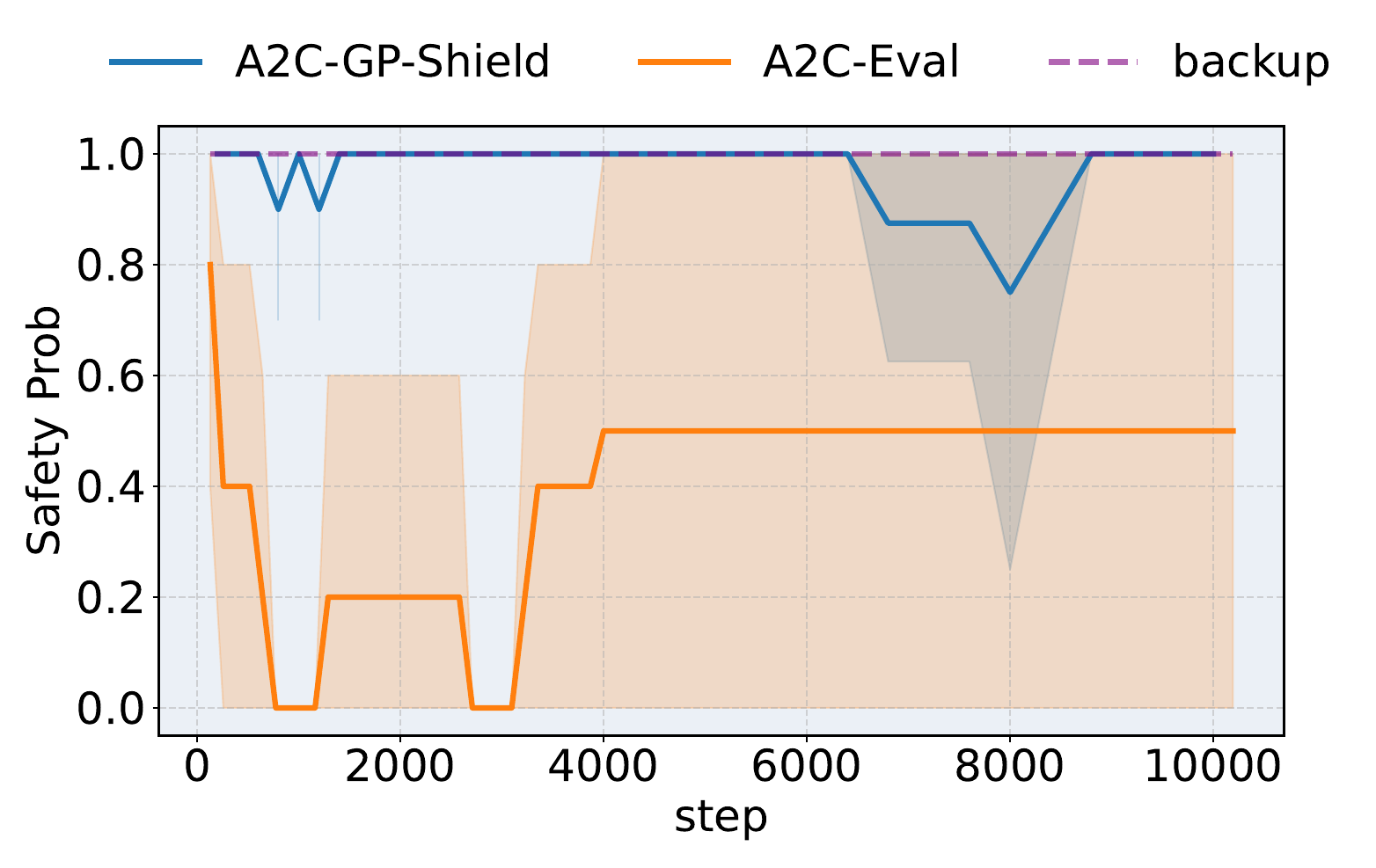}
        \caption{\texttt{obstacle3}}
        
    \end{subfigure}
    \hfill
    \begin{subfigure}[t]{0.49\linewidth}
        \centering
        \includegraphics[width=0.49\textwidth]{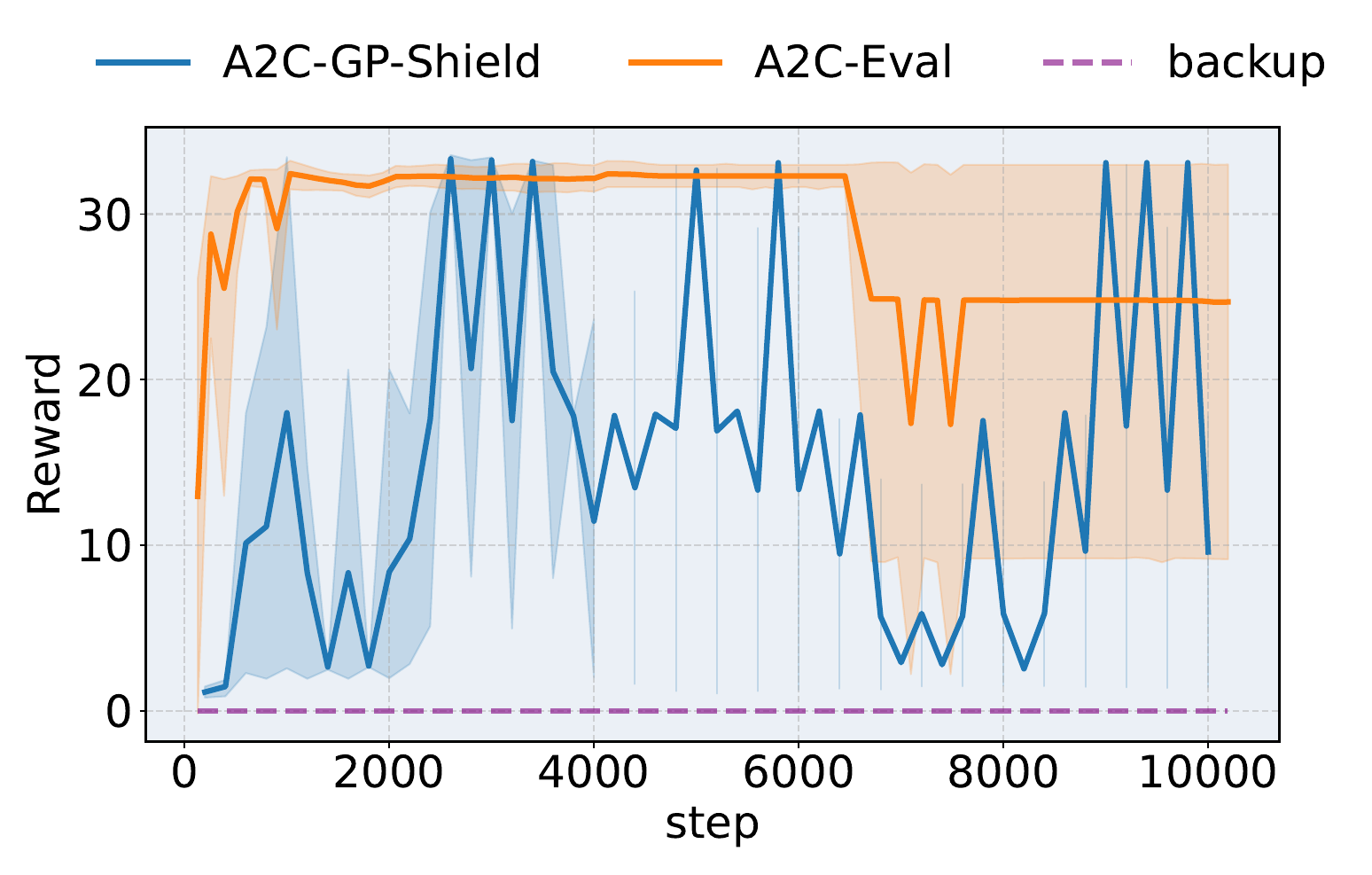}
        \includegraphics[width=0.49\textwidth]{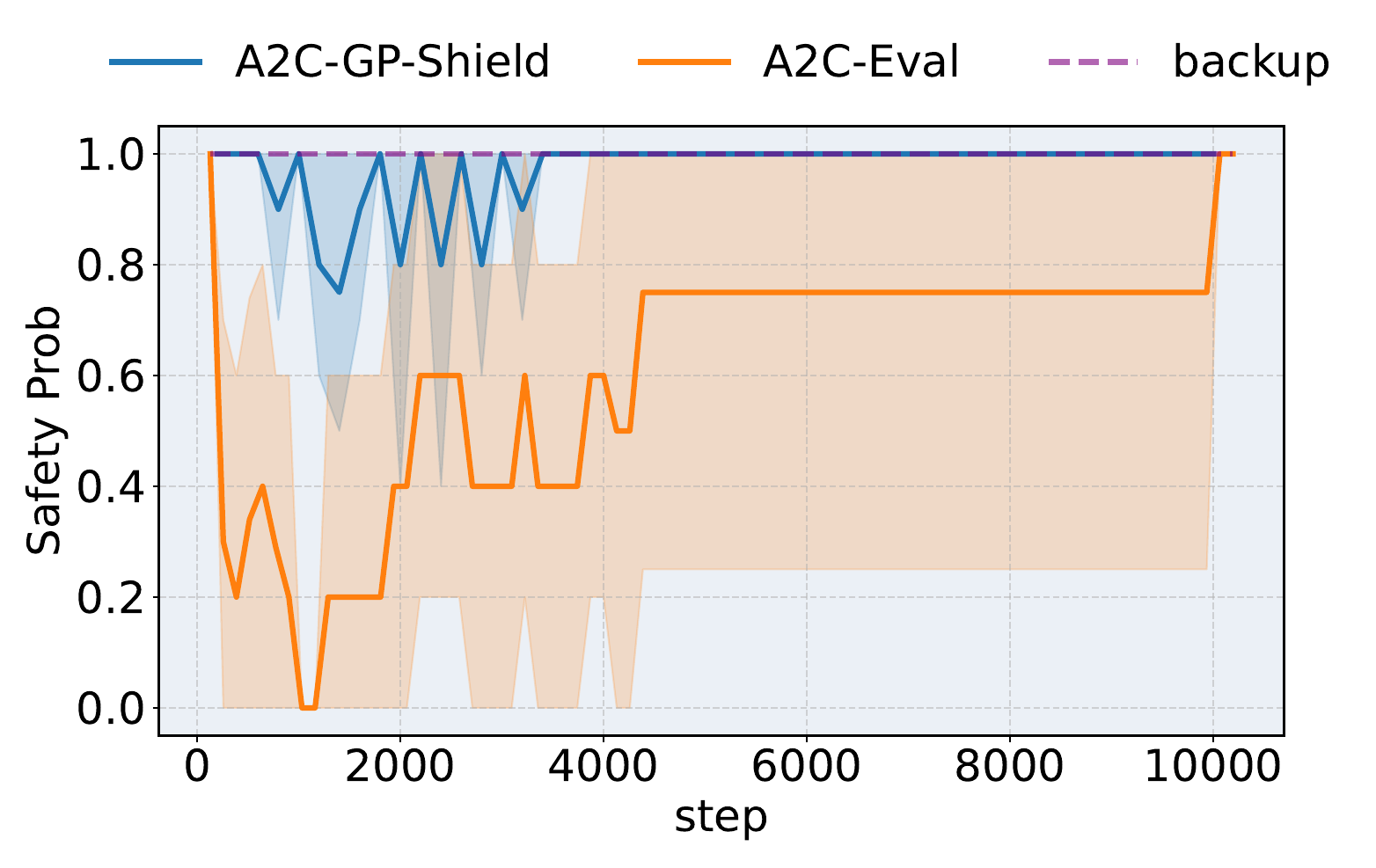}
        \caption{\texttt{obstacle4}}
        
    \end{subfigure}
    \hfill
    \begin{subfigure}[t]{0.49\linewidth}
        \centering
        \includegraphics[width=0.49\textwidth]{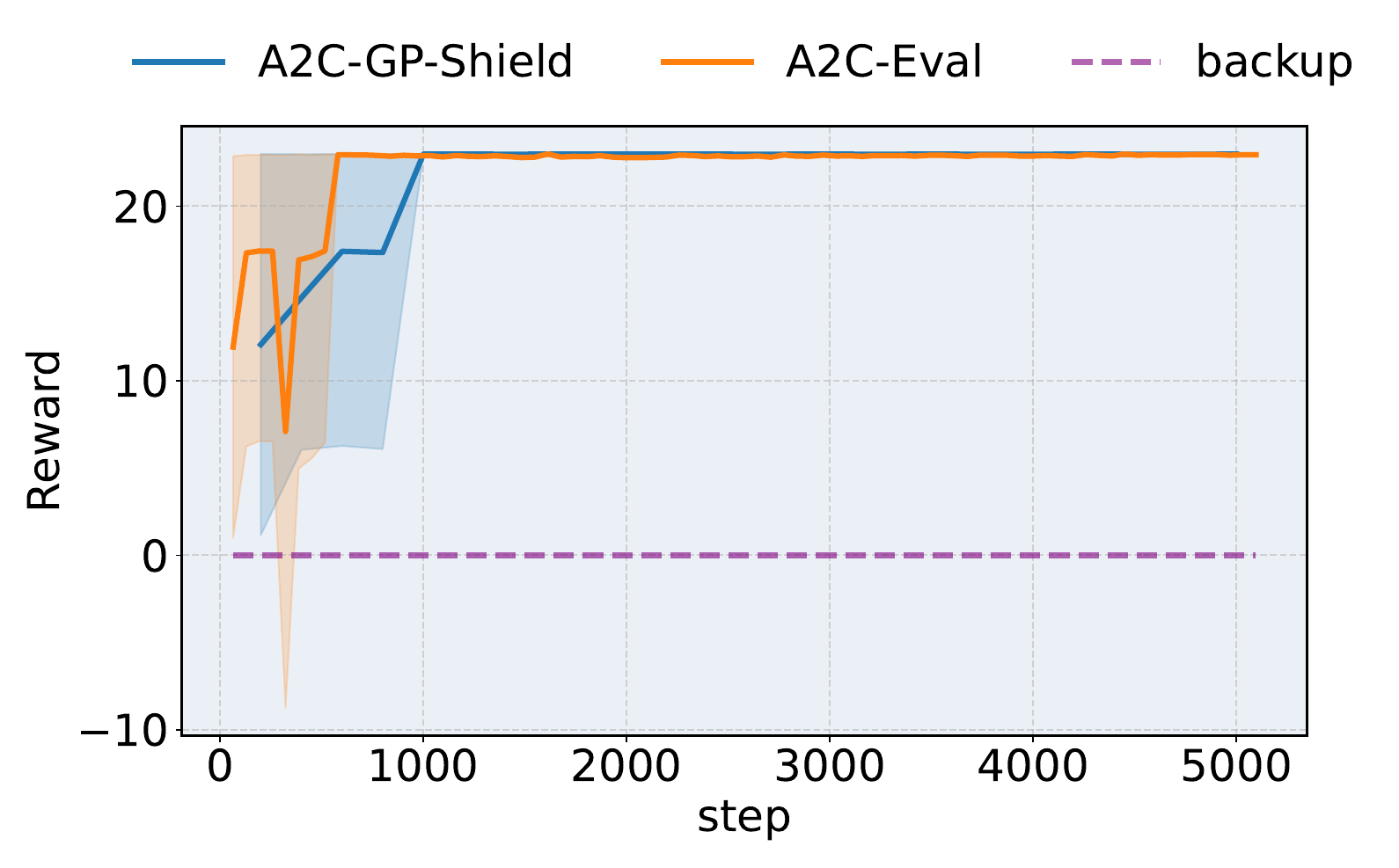}
        \includegraphics[width=0.49\textwidth]{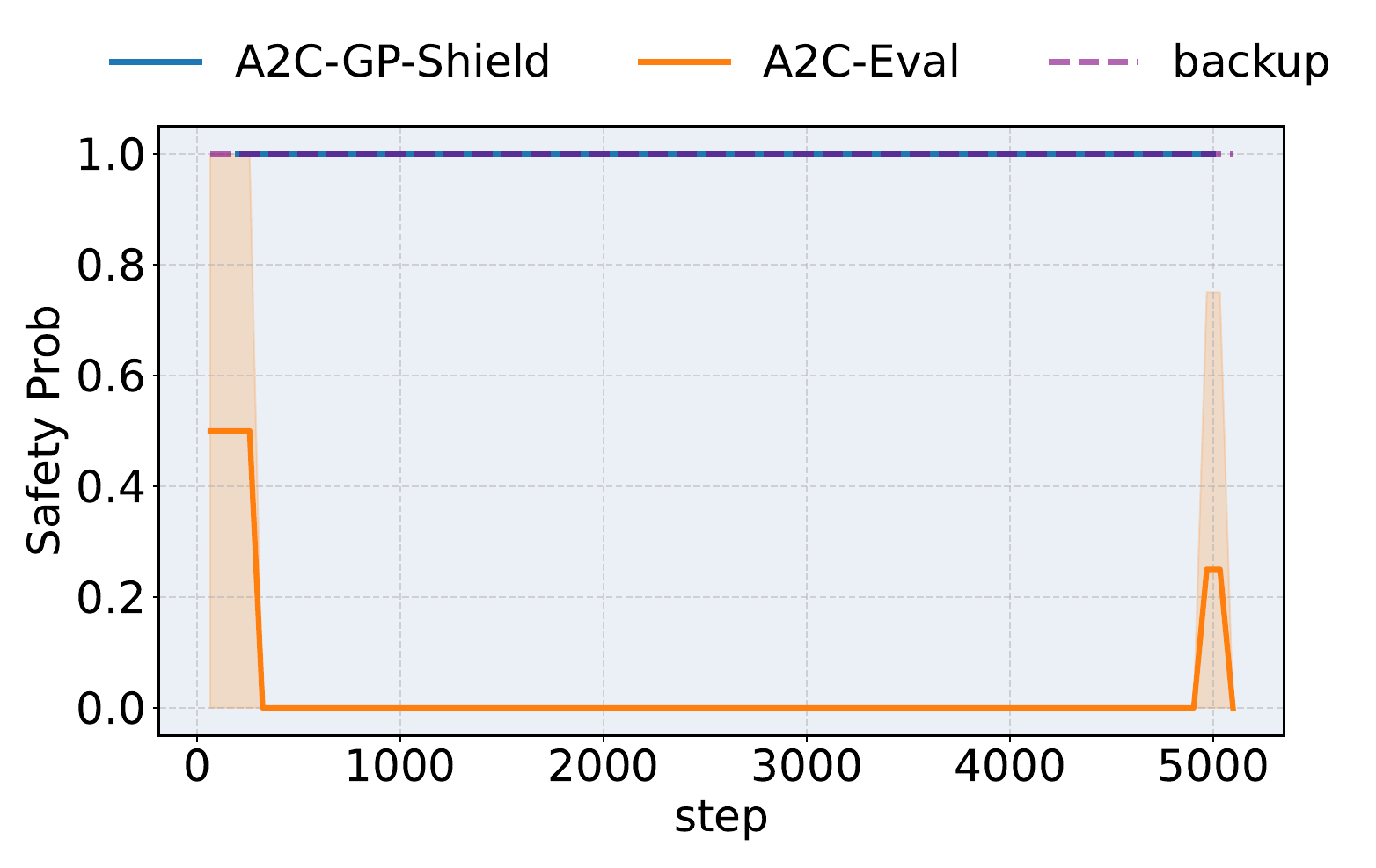}
        \caption{\texttt{road}}
    \end{subfigure}
    \hfill
    \begin{subfigure}[t]{0.49\linewidth}
        \centering
        \includegraphics[width=0.49\textwidth]{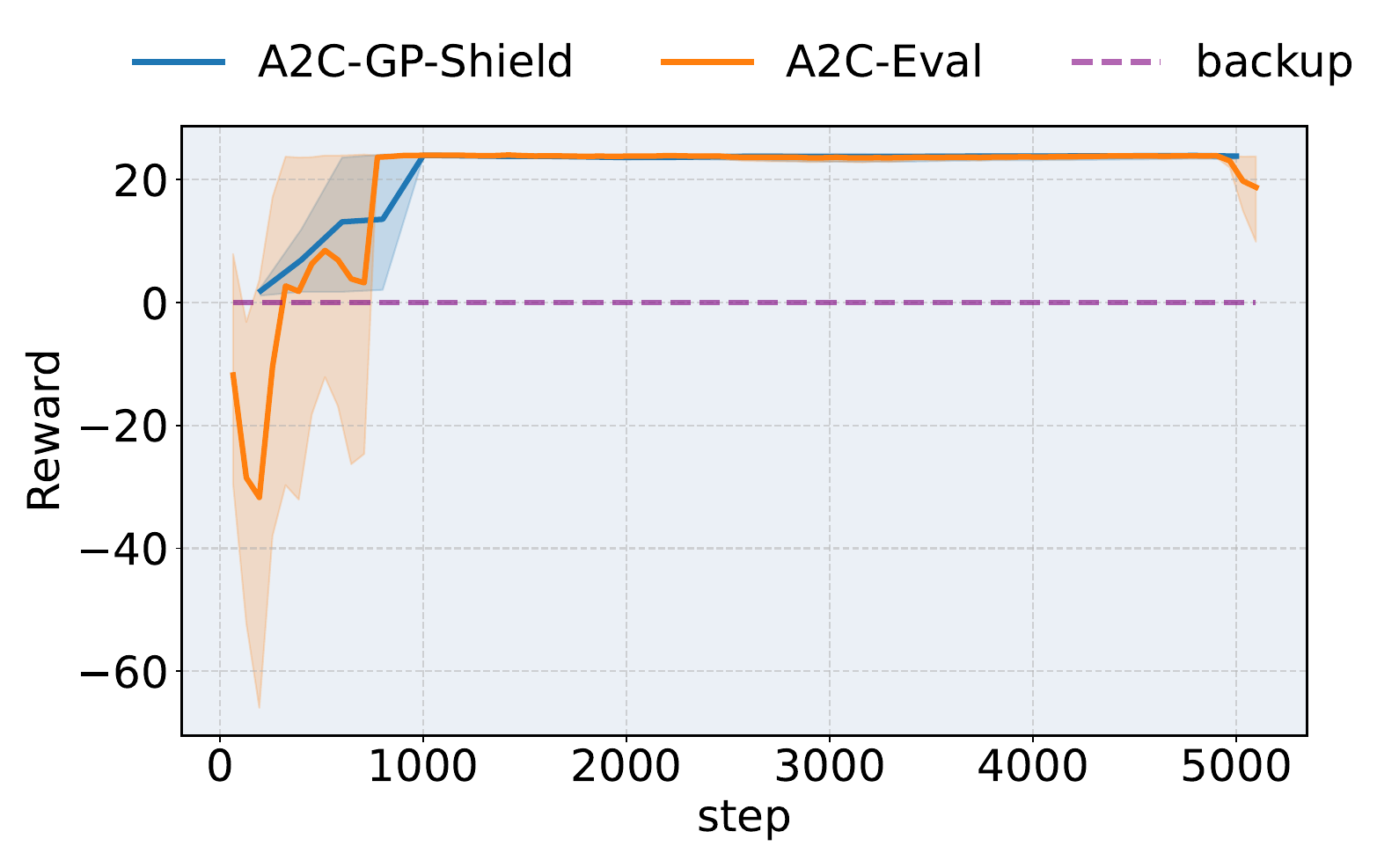}
        \includegraphics[width=0.49\textwidth]{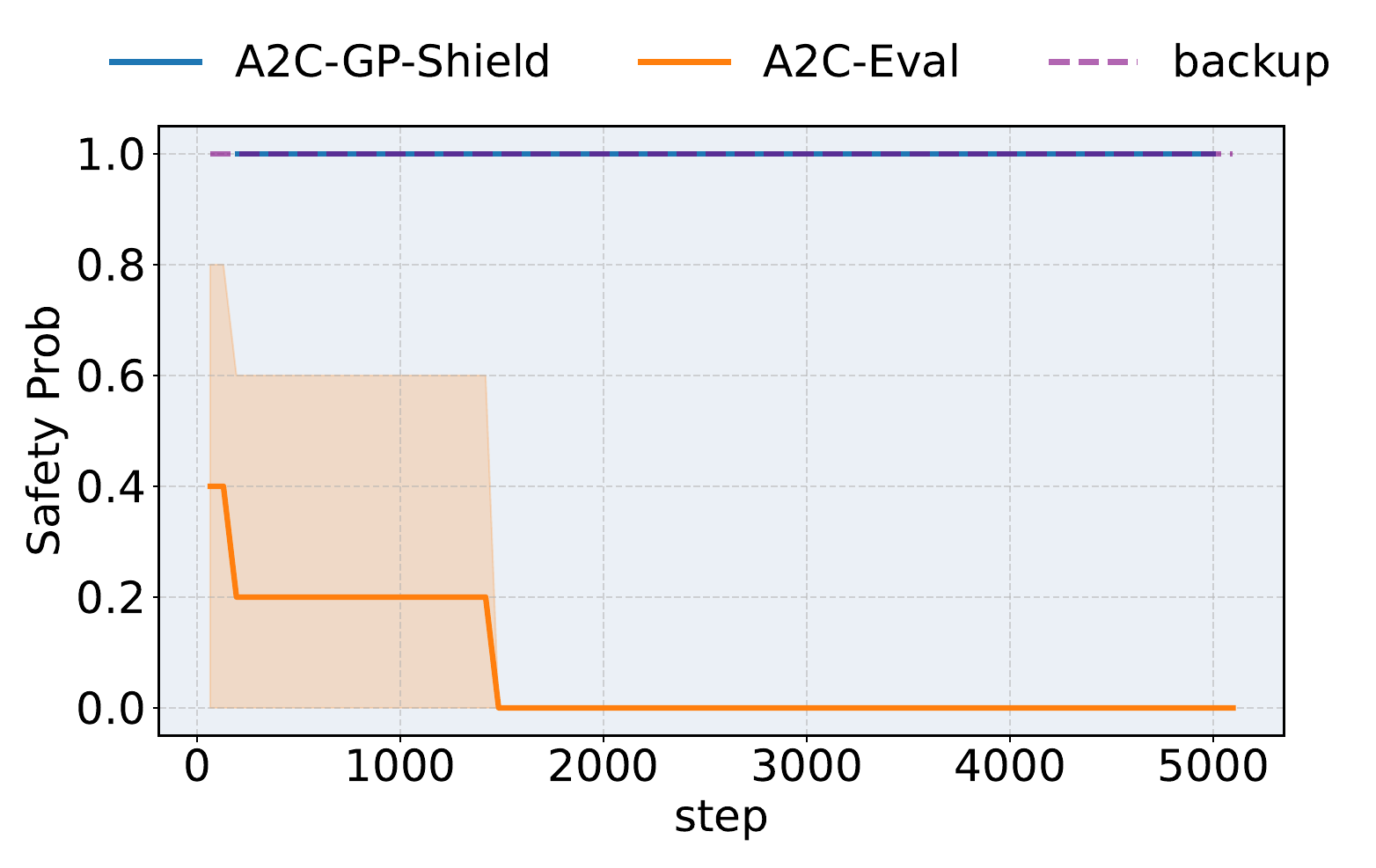}
        \caption{\texttt{road\_2d}}
        
    \end{subfigure}
    \caption{Learning curves (reward and safety probability) for A2C-Shield and A2C-Eval.}
\end{figure*}

\begin{figure*}[ht!]
    \centering
    \hfill
    \begin{subfigure}[t]{0.24\linewidth}
        \centering
        \includegraphics[width=\textwidth]{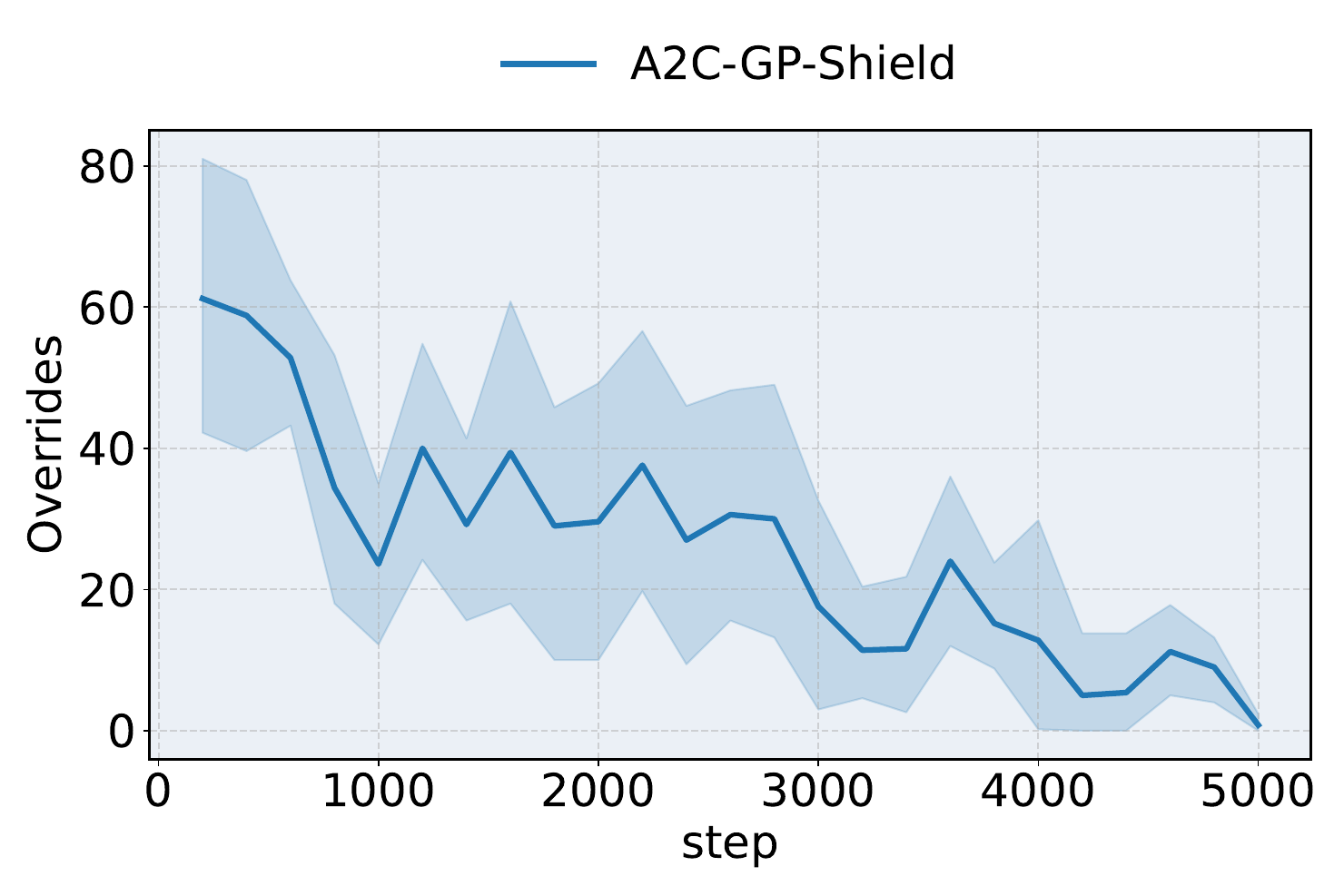}
        \caption{\texttt{cartpole (i)}}
    \end{subfigure}
    \hfill
    \begin{subfigure}[t]{0.24\linewidth}
        \centering
        \includegraphics[width=\textwidth]{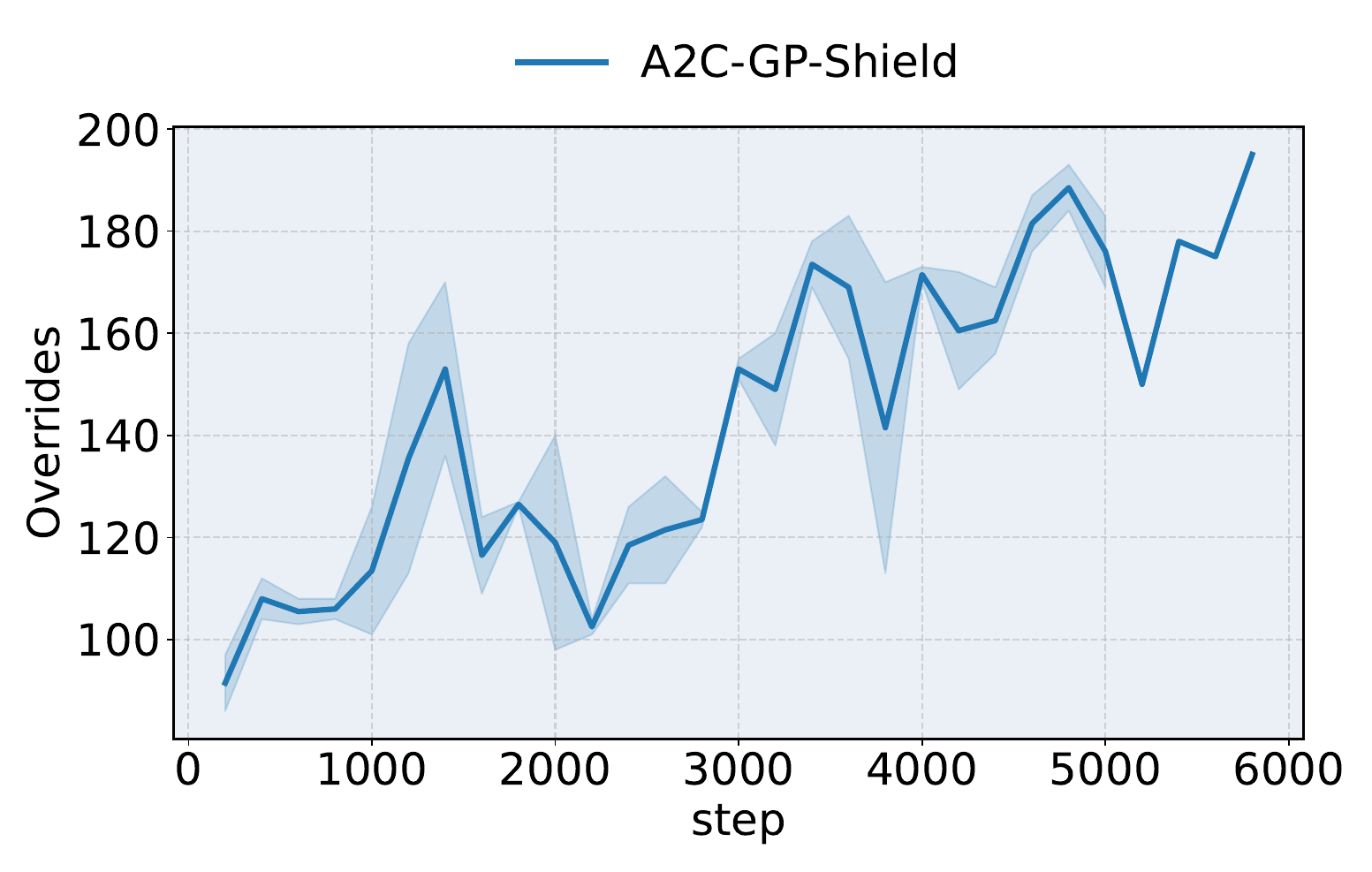}
        \caption{\texttt{cartpole (ii)}}
    \end{subfigure}
    \hfill
    \begin{subfigure}[t]{0.24\linewidth}
        \centering
        \includegraphics[width=\textwidth]{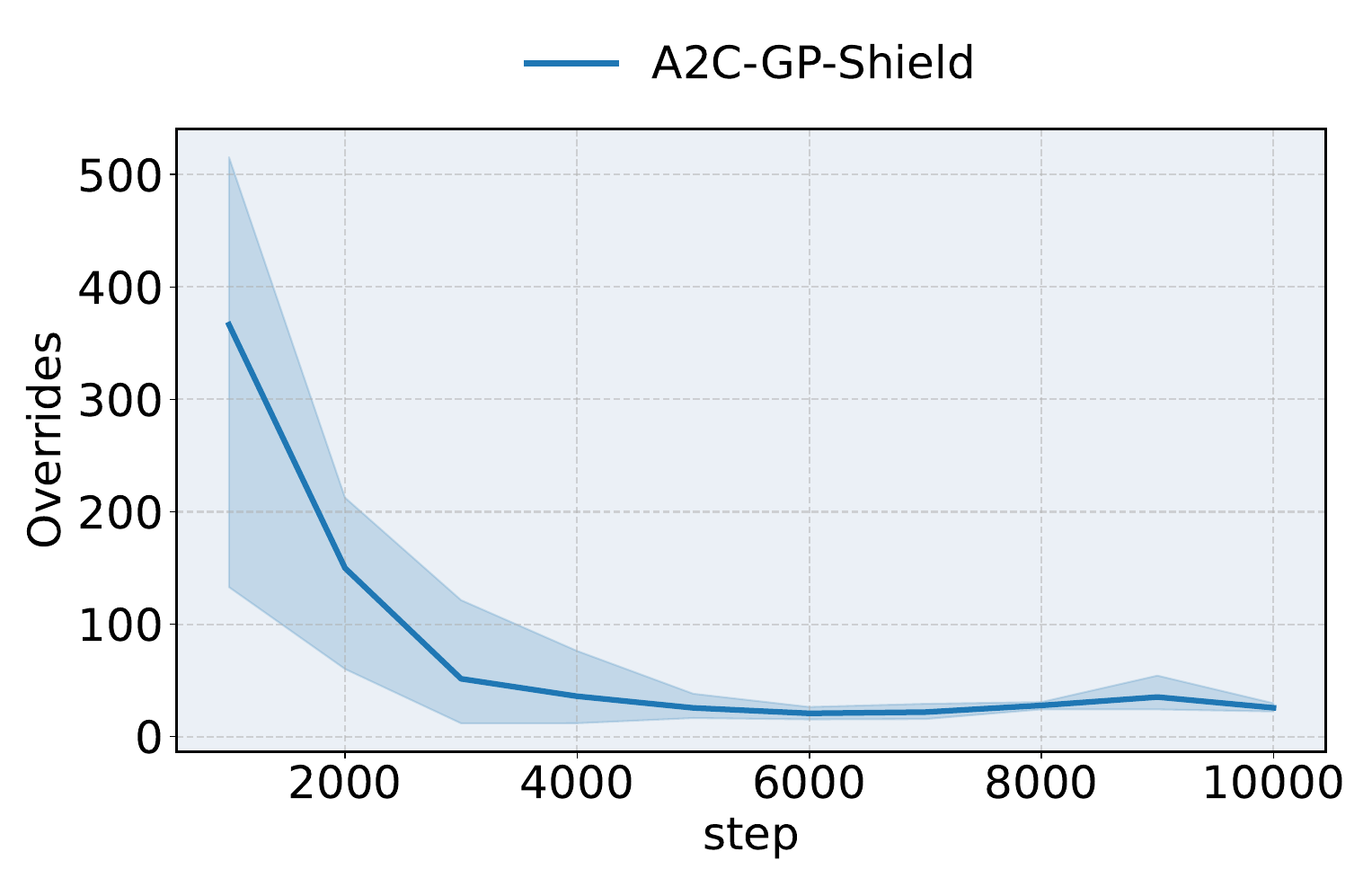}
        \caption{\texttt{mountain\_car}}
    \end{subfigure}
    \hfill
    \begin{subfigure}[t]{0.24\linewidth}
        \centering
        \includegraphics[width=\textwidth]{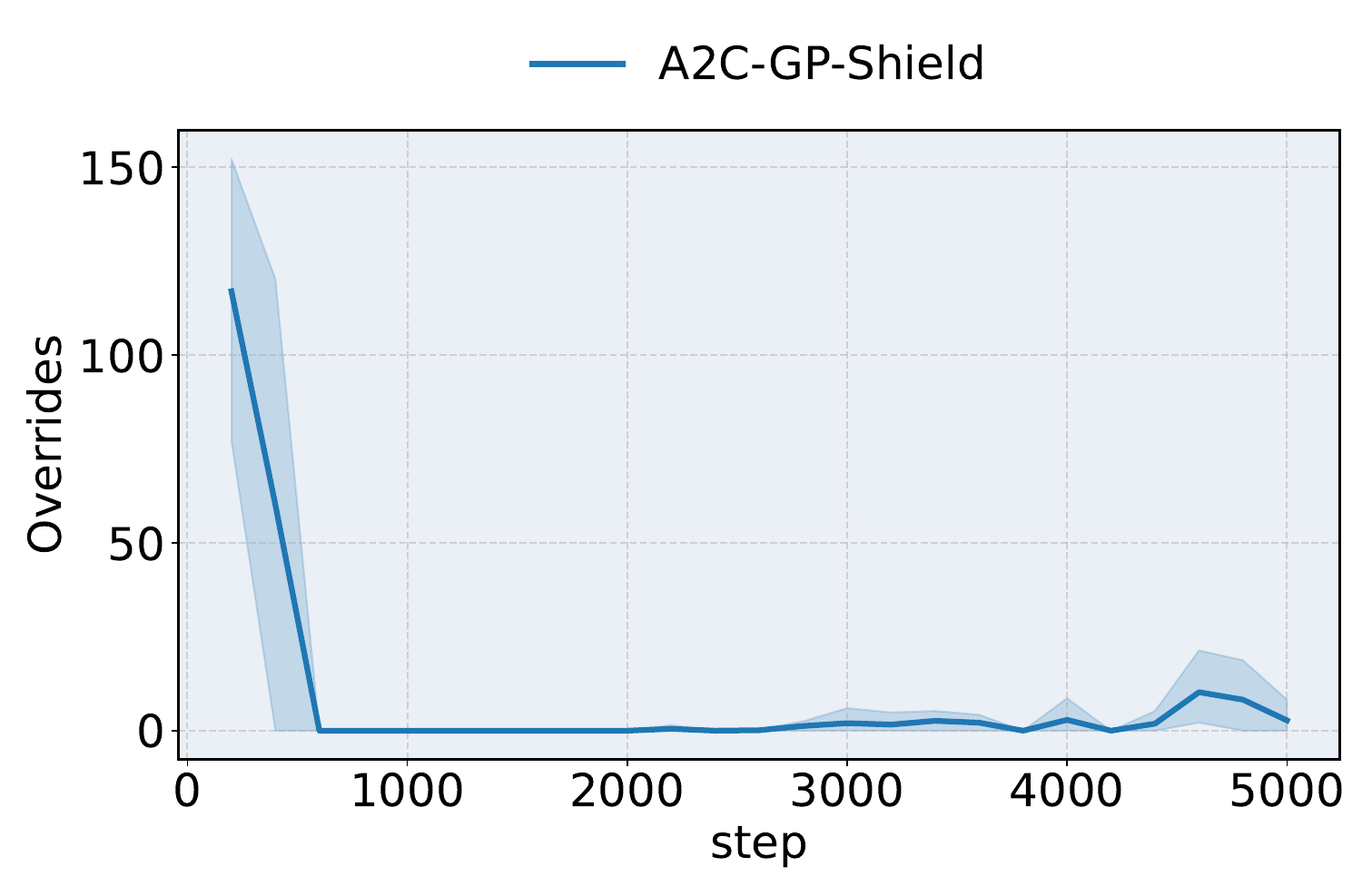}
        \caption{\texttt{obstacle}}
    \end{subfigure}
    \hfill
    \begin{subfigure}[t]{0.24\linewidth}
        \centering
        \includegraphics[width=\textwidth]{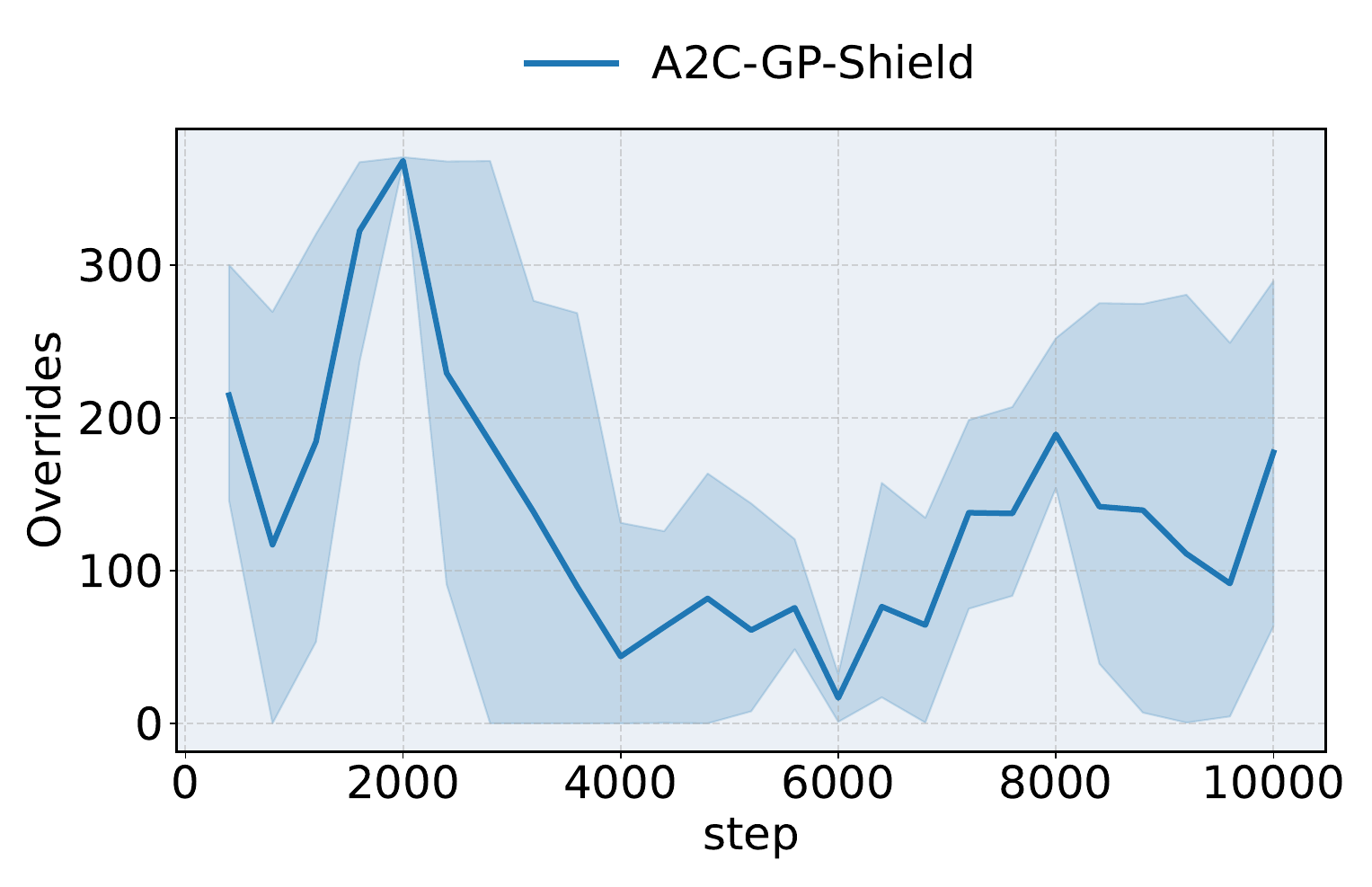}
        \caption{\texttt{obstacle2}}
    \end{subfigure}
    \hfill
    \begin{subfigure}[t]{0.24\linewidth}
        \centering
        \includegraphics[width=\textwidth]{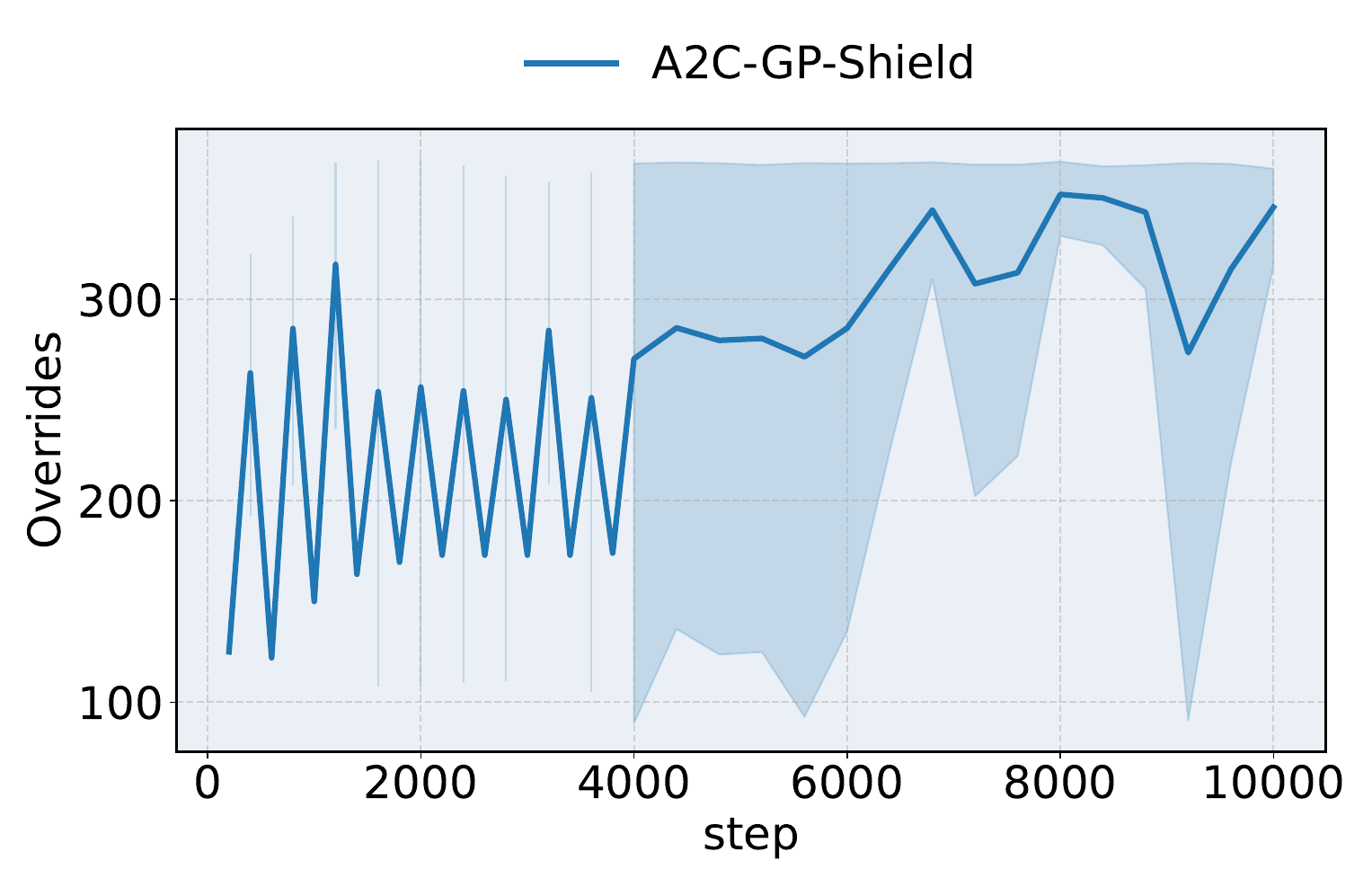}
        \caption{\texttt{obstacle3}}
    \end{subfigure}
    \hfill
    \begin{subfigure}[t]{0.24\linewidth}
        \centering
        \includegraphics[width=\textwidth]{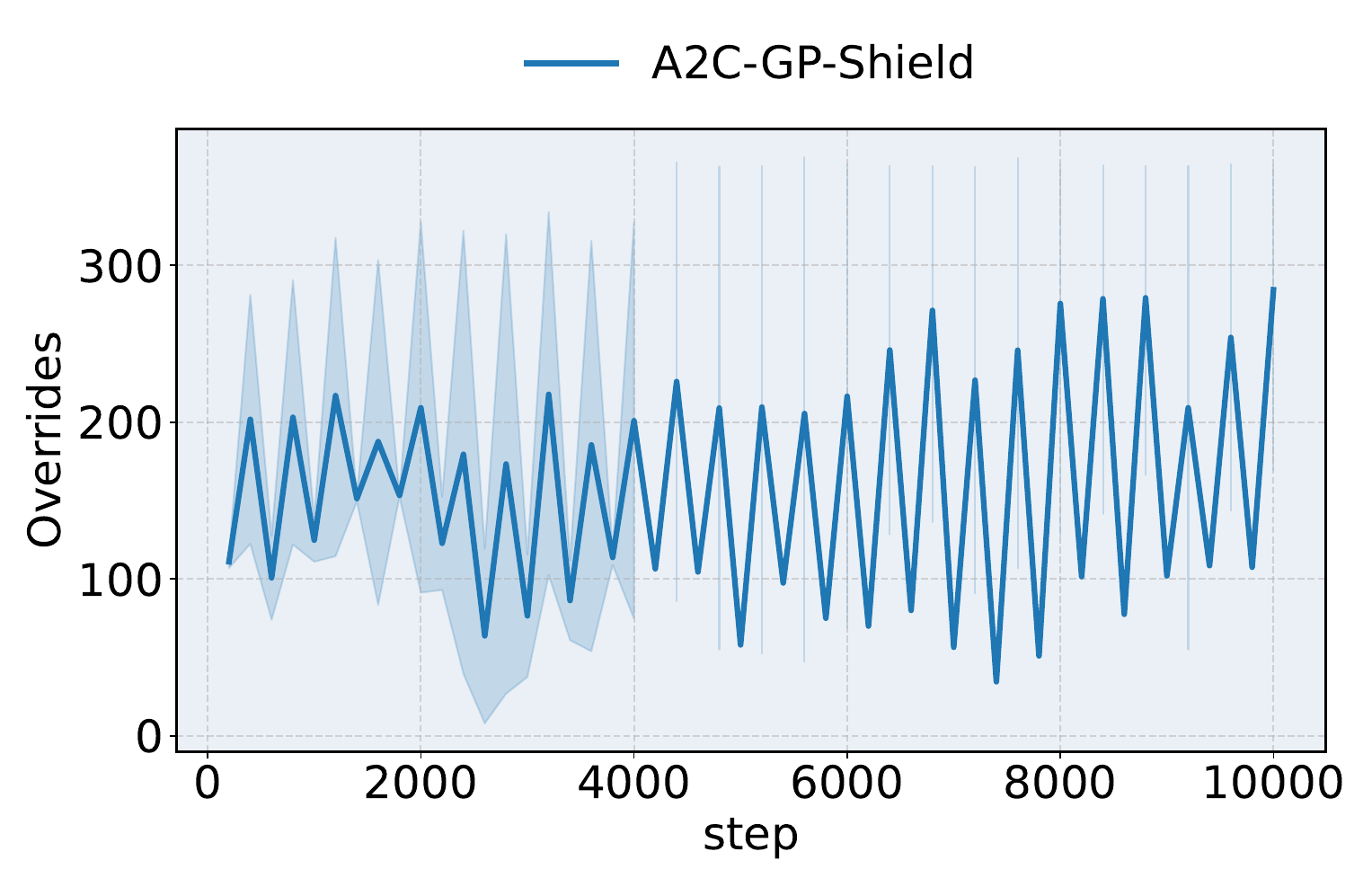}
        \caption{\texttt{obstacle4}}
    \end{subfigure}
    \hfill
    \begin{subfigure}[t]{0.24\linewidth}
        \centering
        \includegraphics[width=\textwidth]{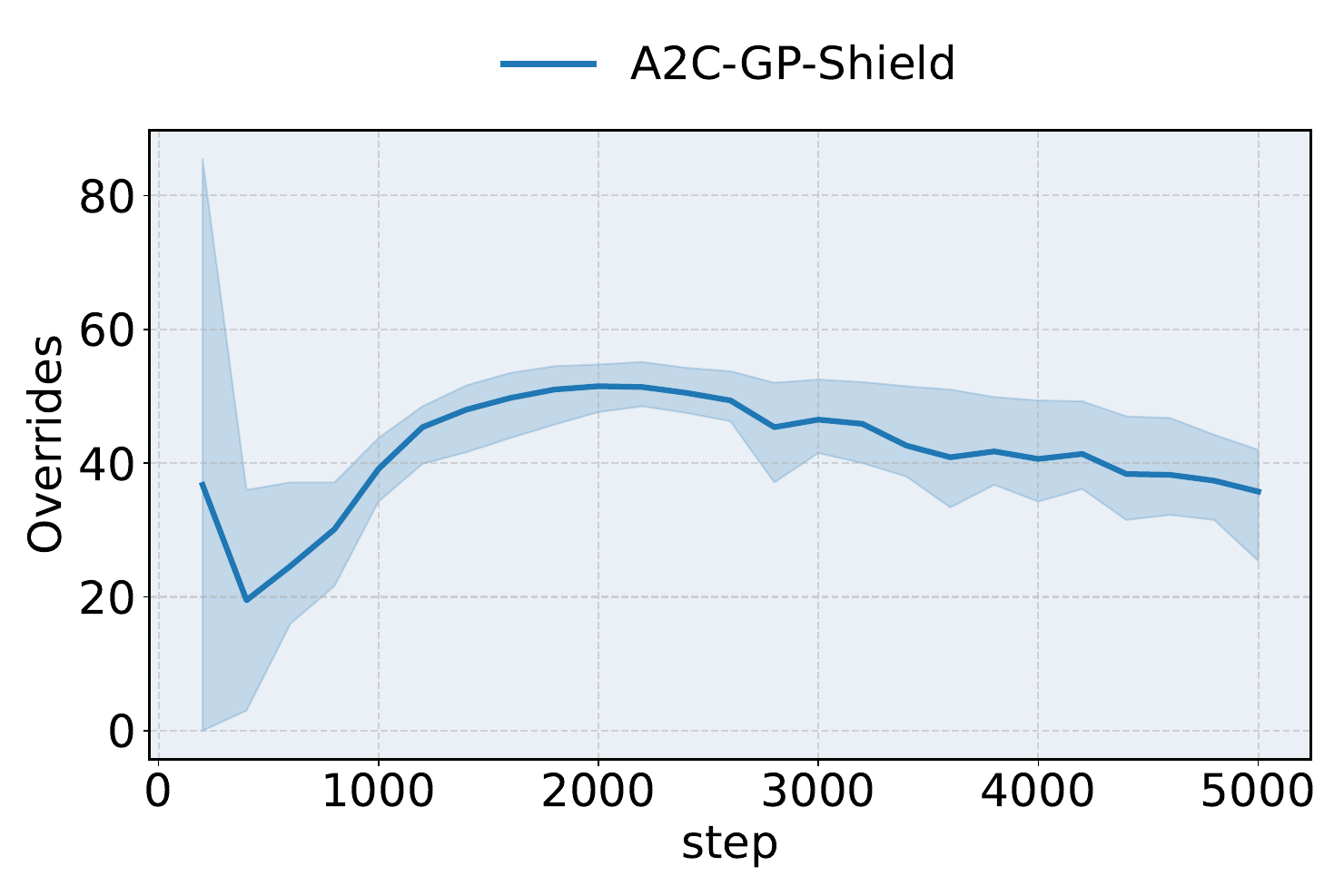}
        \caption{\texttt{road}}
    \end{subfigure}
    \hfill
    \begin{subfigure}[t]{0.24\linewidth}
        \centering
        \includegraphics[width=\textwidth]{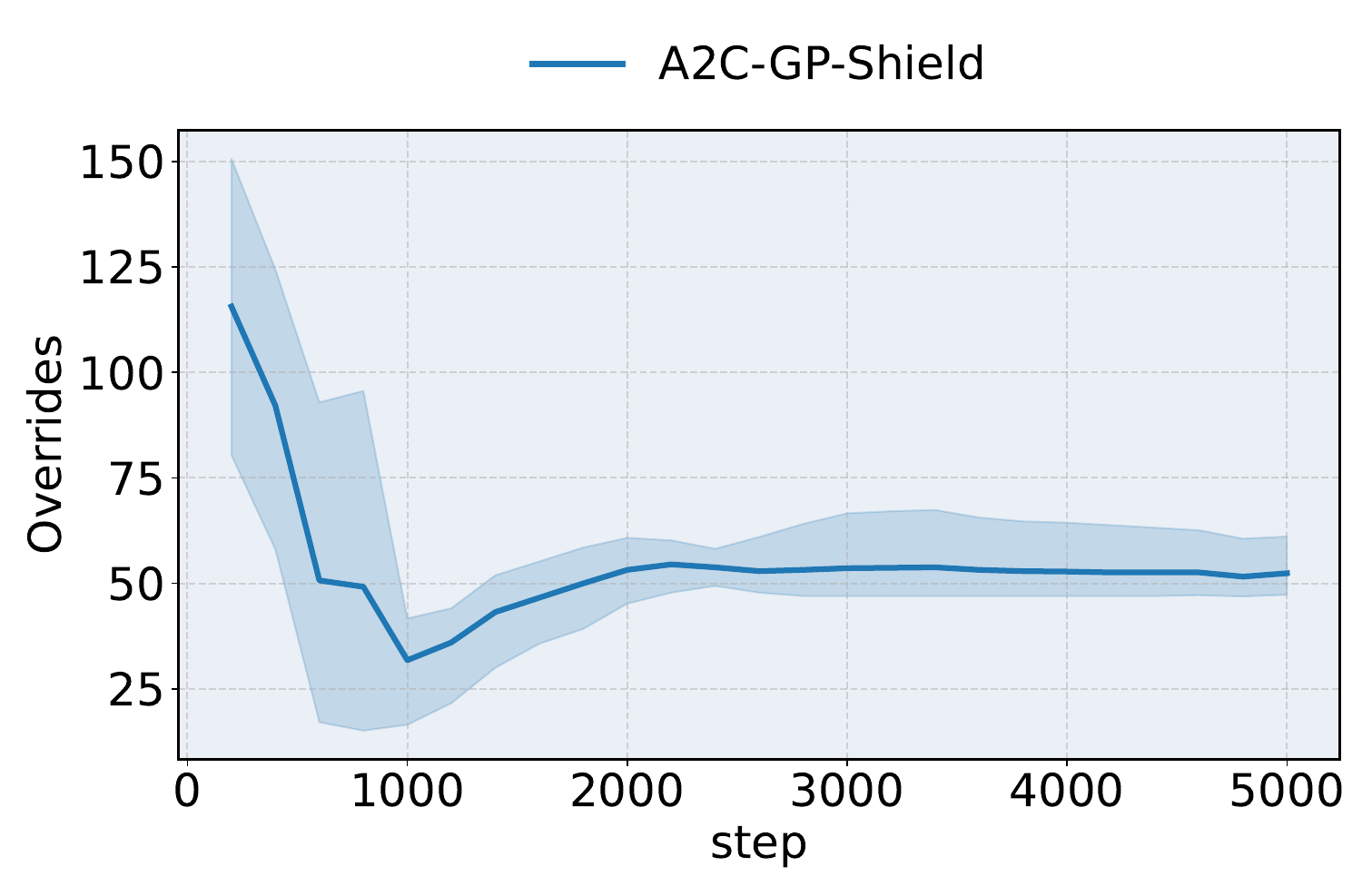}
        \caption{\texttt{road\_2d}}
    \end{subfigure}
    \hfill
    \caption{Overrides (per episode) for A2C-Shield}
\end{figure*}

\newpage
\begin{figure*}[ht!]
    \centering
    \hfill
    \begin{subfigure}[t]{0.49\linewidth}
        \centering
        \includegraphics[width=0.49\textwidth]{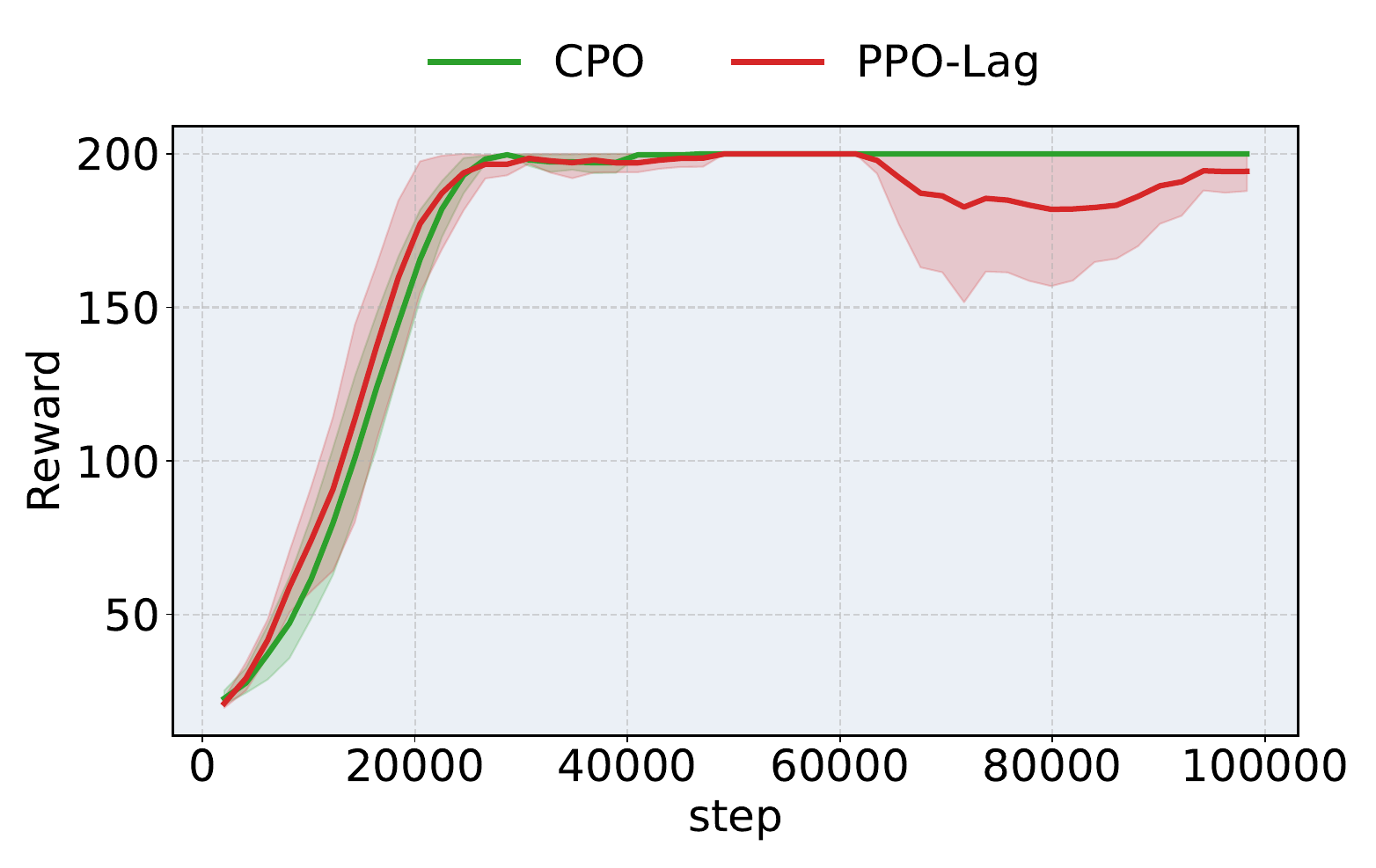}
        \includegraphics[width=0.49\textwidth]{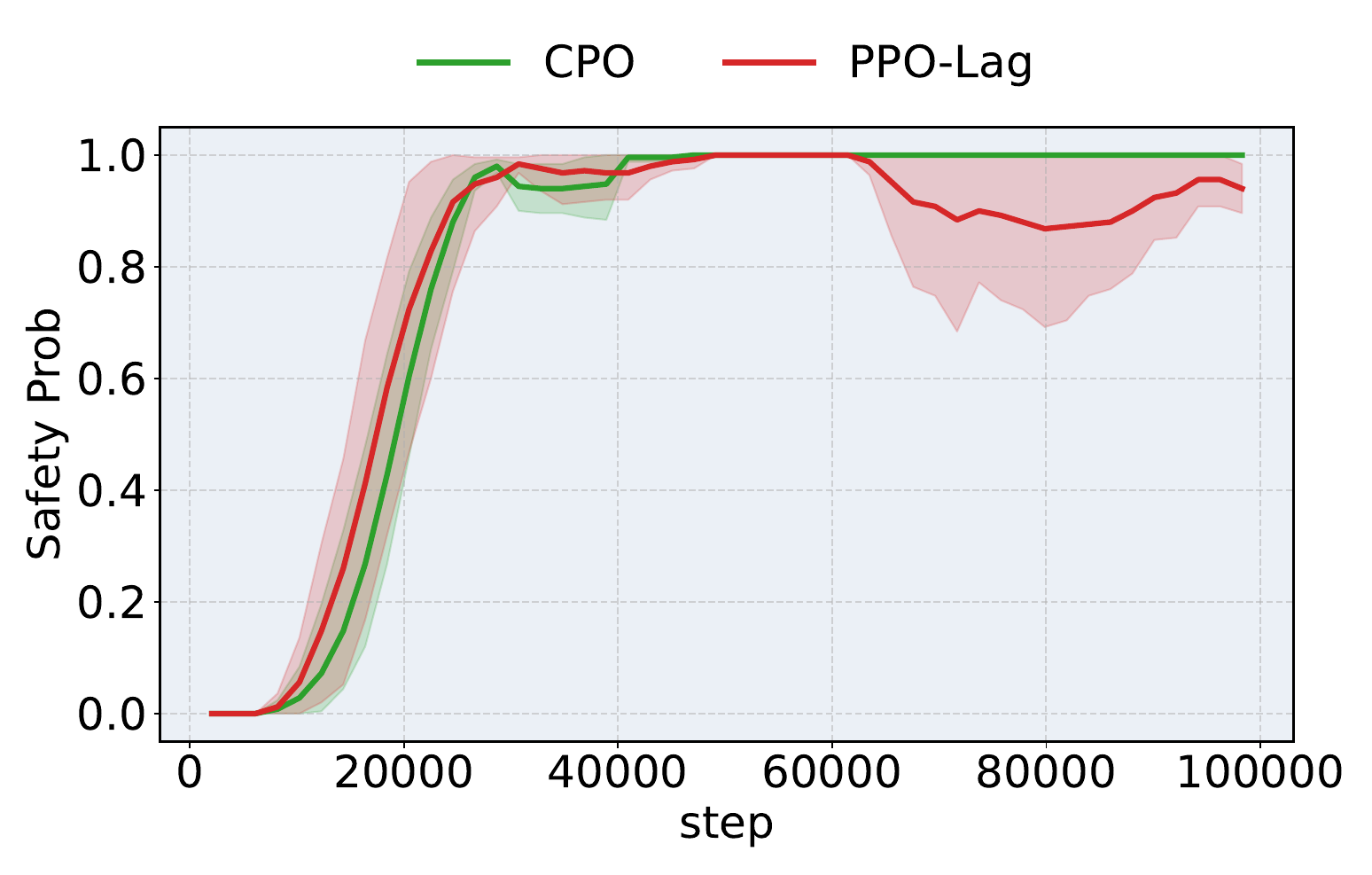}
        \caption{\texttt{cartpole (i)}}
        
    \end{subfigure}
    \hfill
    \begin{subfigure}[t]{0.49\linewidth}
        \centering
        \includegraphics[width=0.49\textwidth]{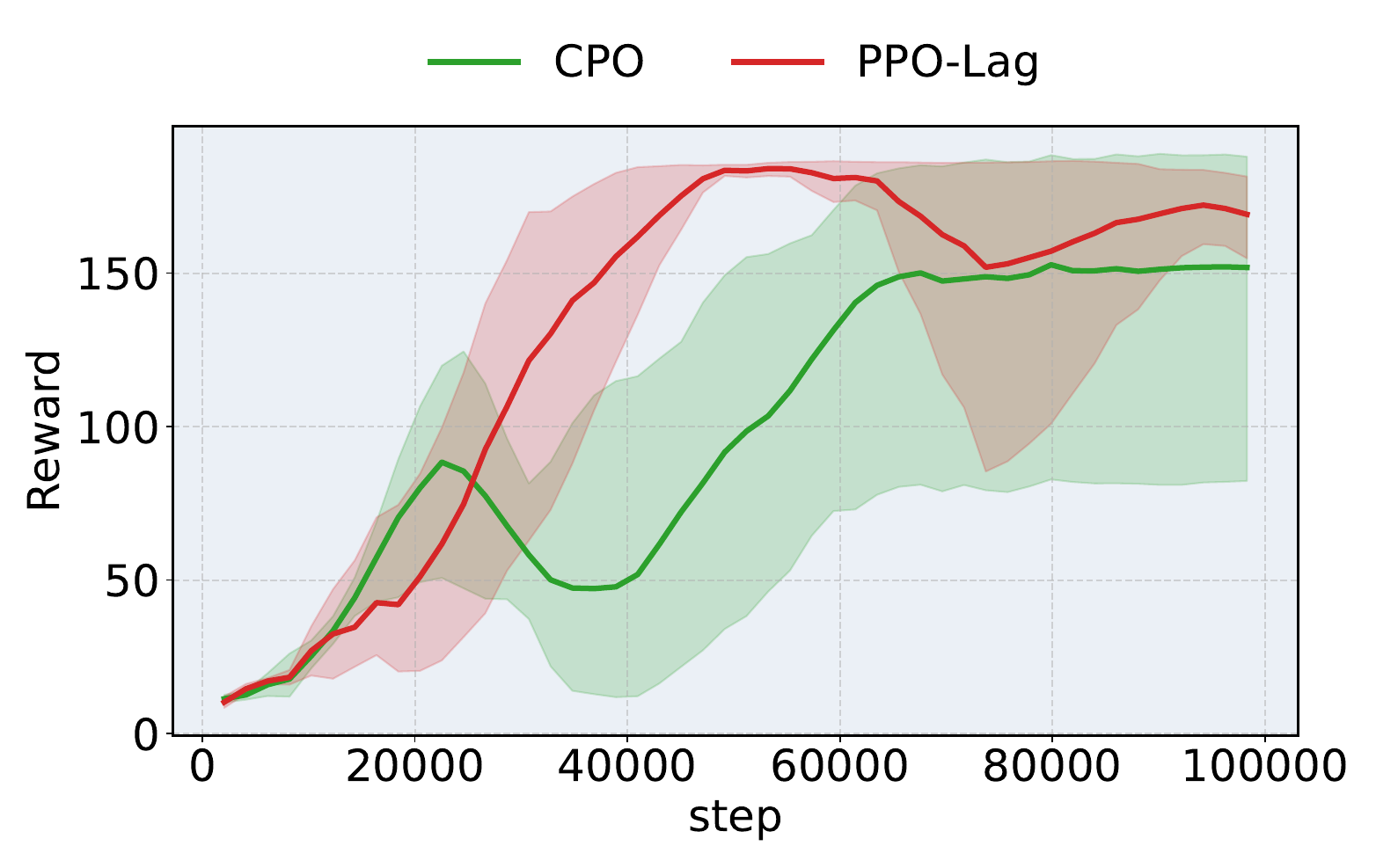}
        \includegraphics[width=0.49\textwidth]{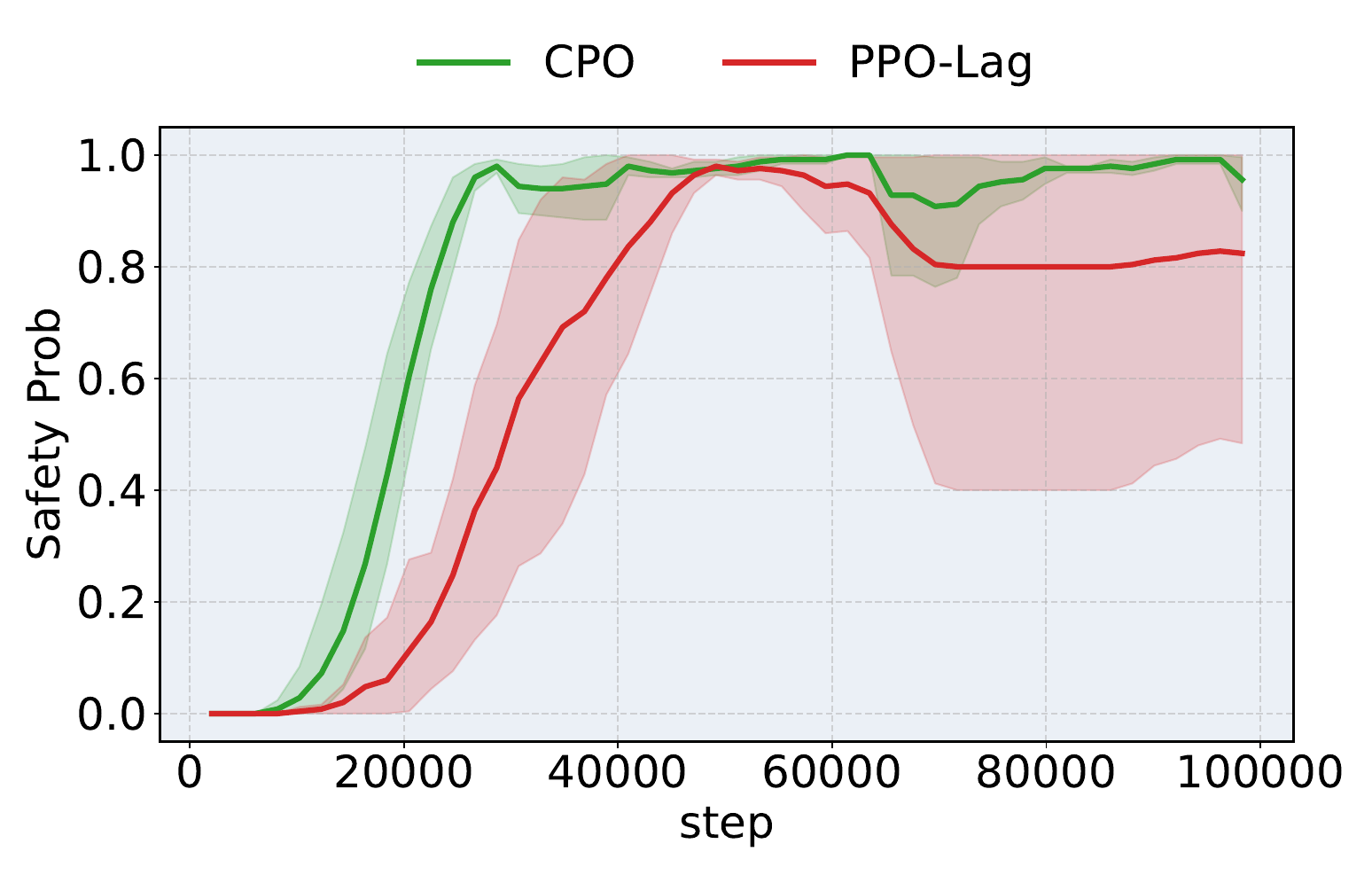}
        \caption{\texttt{cartpole (ii)}}
        
    \end{subfigure}
    \hfill
    \begin{subfigure}[t]{0.49\linewidth}
        \centering
        \includegraphics[width=0.49\textwidth]{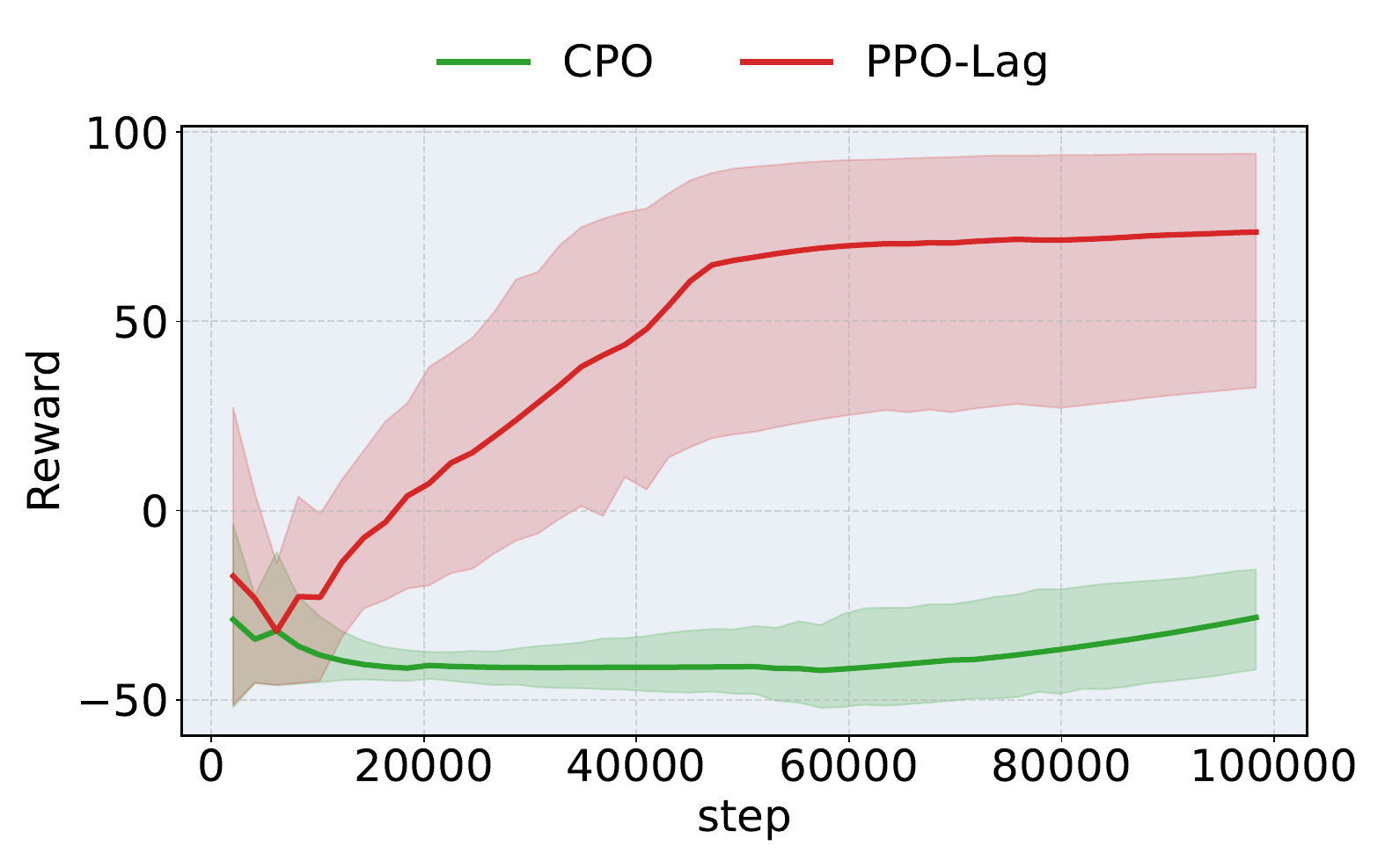}
        \includegraphics[width=0.49\textwidth]{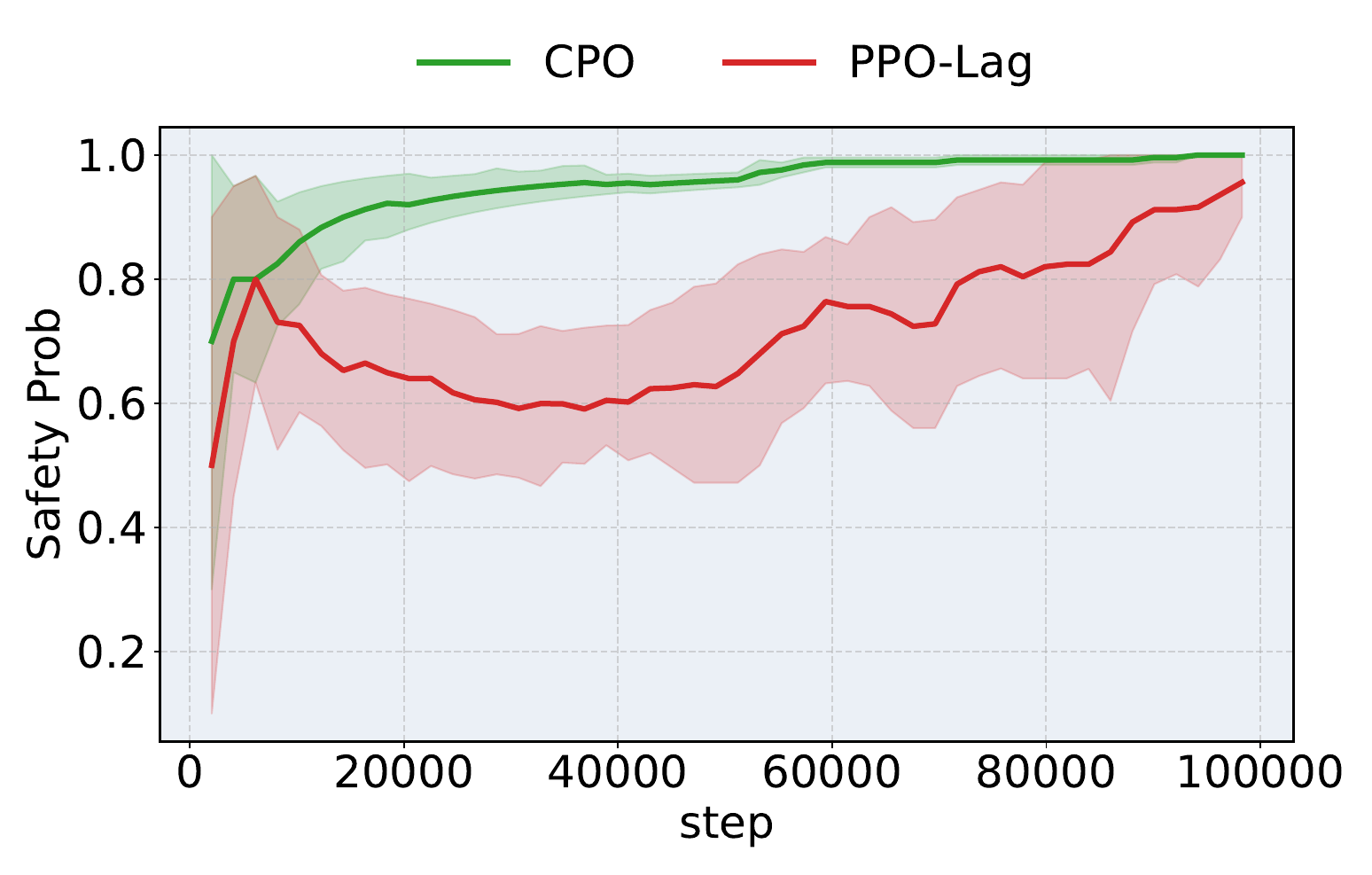}
        \caption{\texttt{mountain\_car}}
        
    \end{subfigure}
    \hfill
    \begin{subfigure}[t]{0.49\linewidth}
        \centering
        \includegraphics[width=0.49\textwidth]{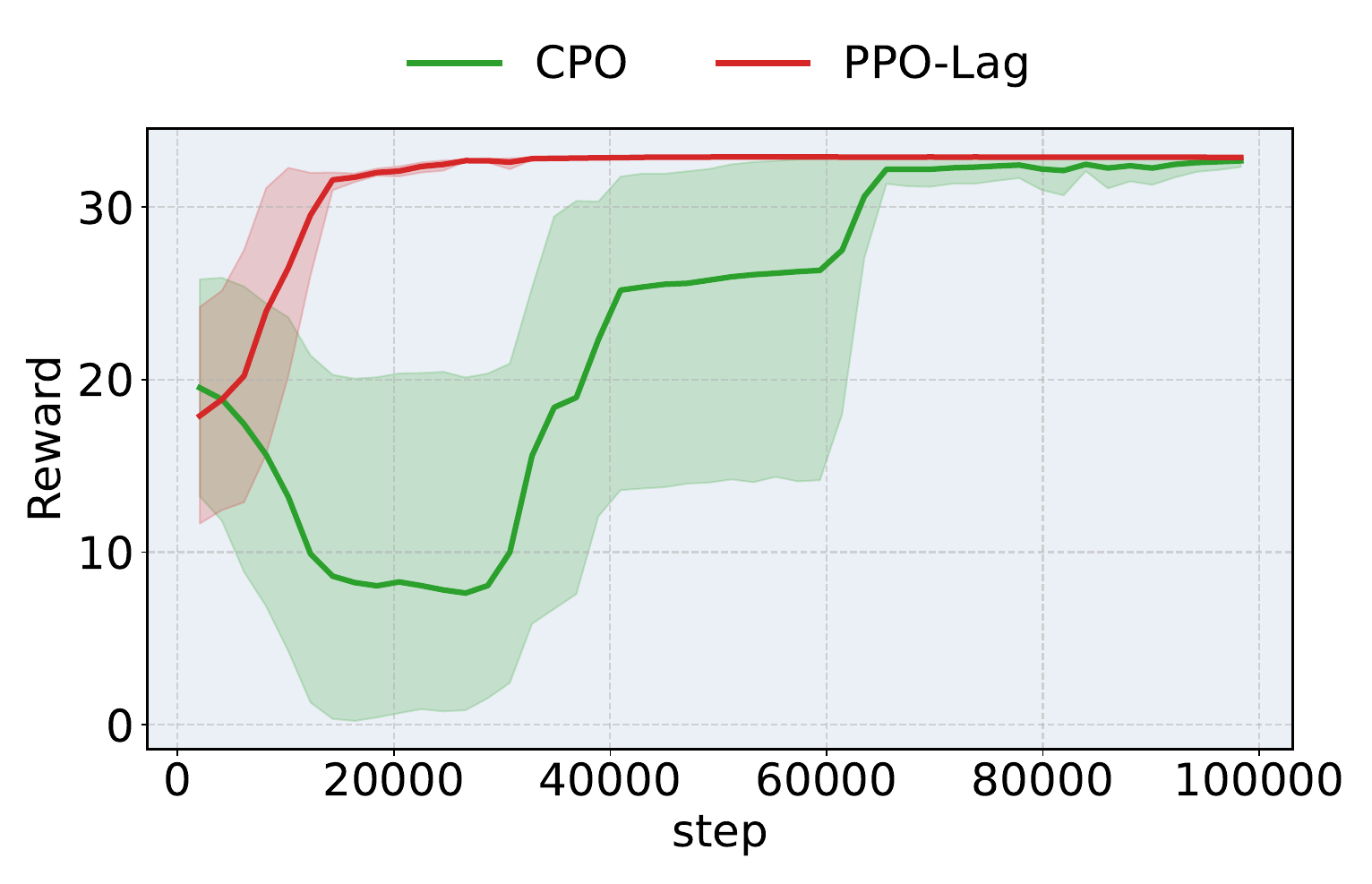}
        \includegraphics[width=0.49\textwidth]{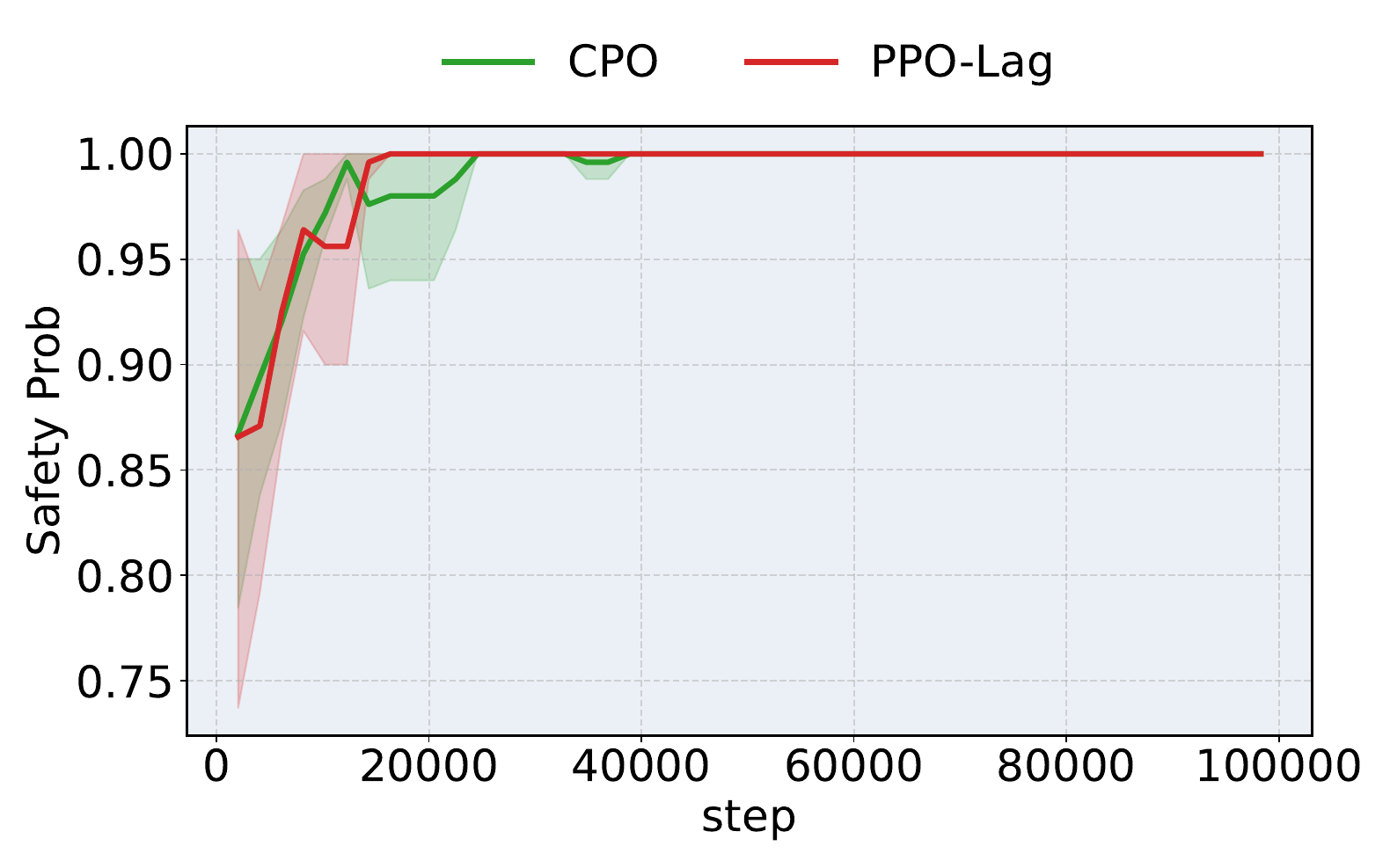}
        \caption{\texttt{obstacle}}
        
    \end{subfigure}
    \hfill
    \begin{subfigure}[t]{0.49\linewidth}
        \centering
        \includegraphics[width=0.49\textwidth]{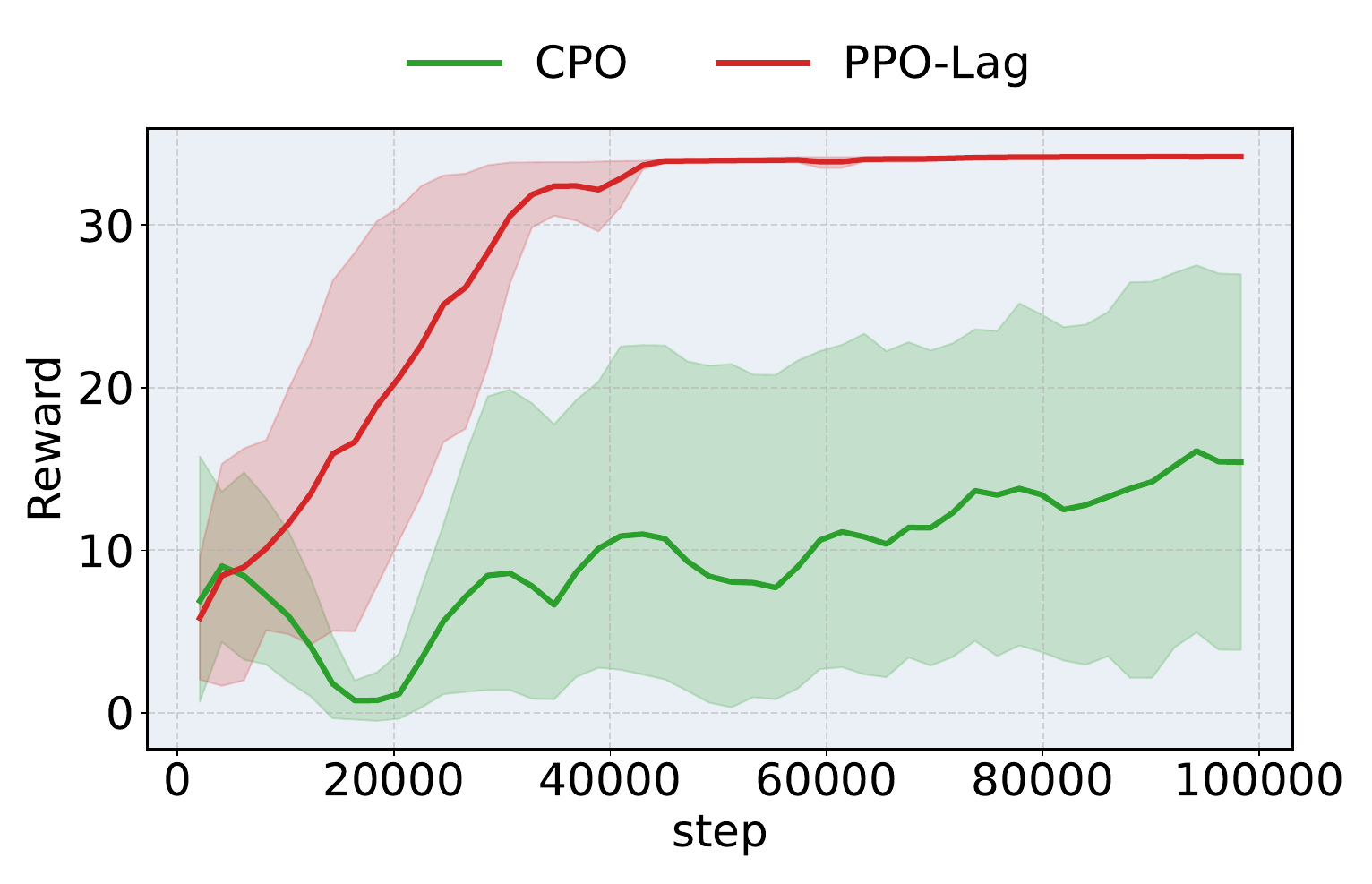}
        \includegraphics[width=0.49\textwidth]{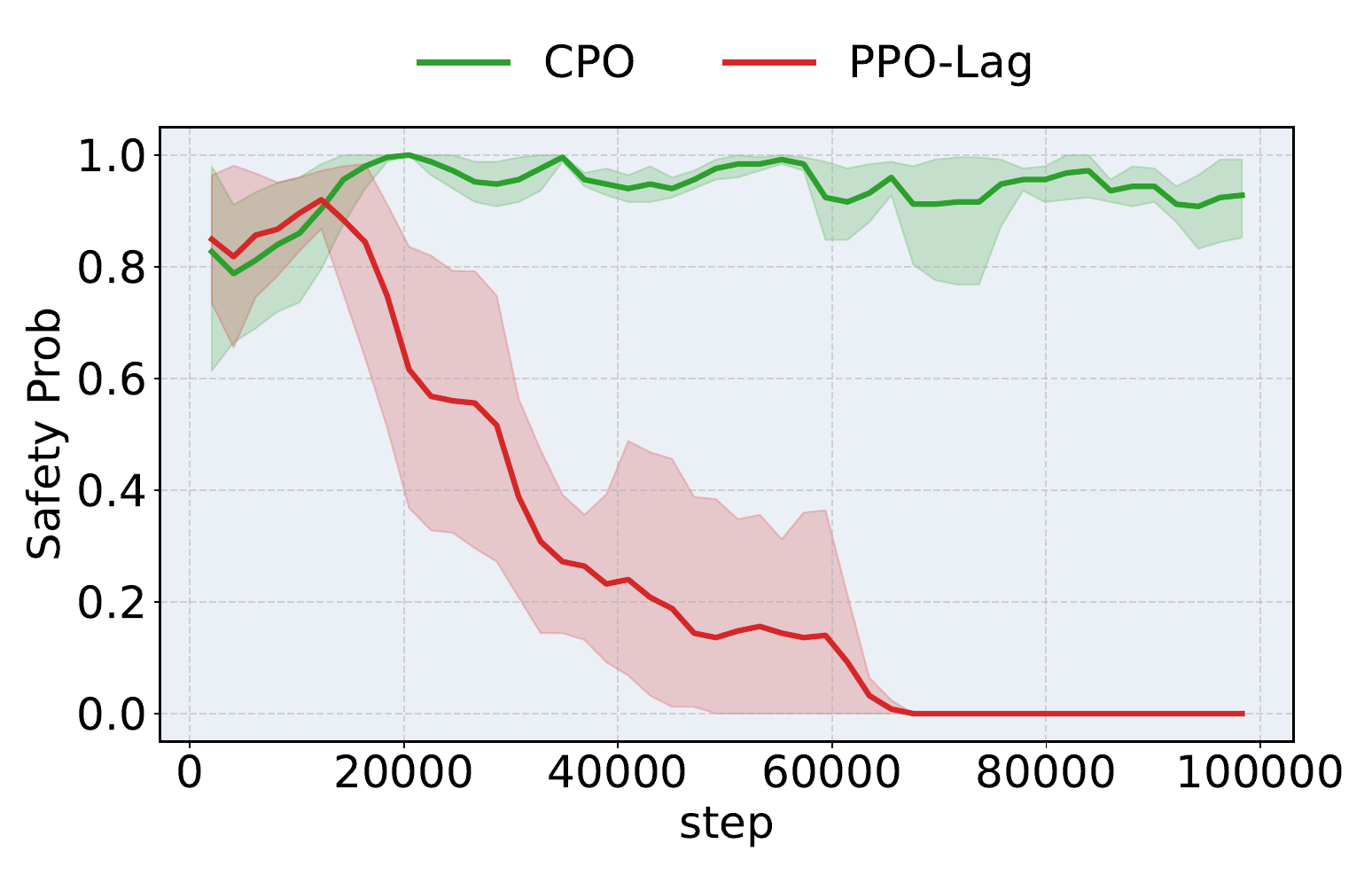}
        \caption{\texttt{obstacle2}}
    
    \end{subfigure}
    \hfill
    \begin{subfigure}[t]{0.49\linewidth}
        \centering
        \includegraphics[width=0.49\textwidth]{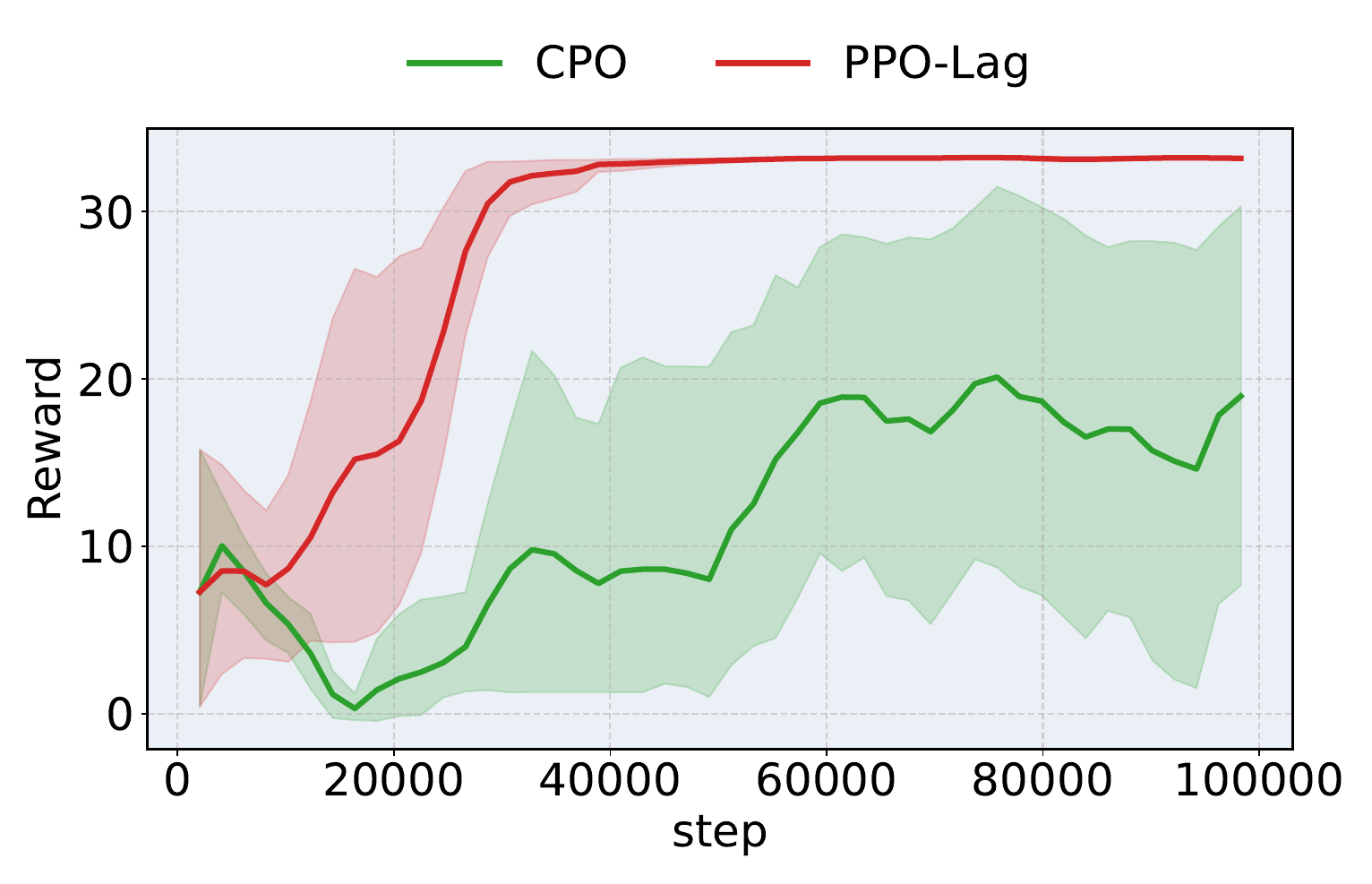}
        \includegraphics[width=0.49\textwidth]{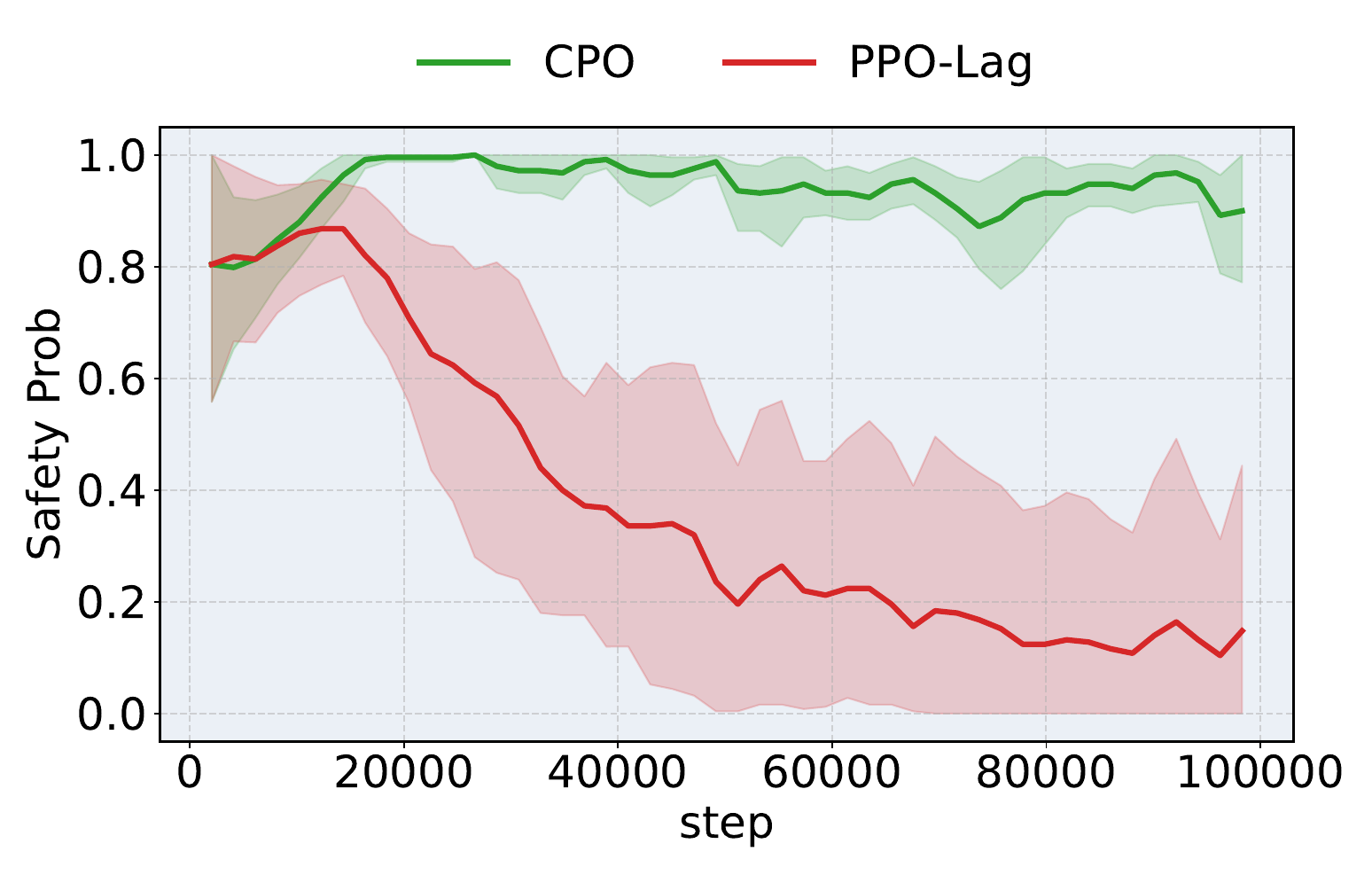}
        \caption{\texttt{obstacle3}}
    
    \end{subfigure}
    \hfill
    \begin{subfigure}[t]{0.49\linewidth}
        \centering
        \includegraphics[width=0.49\textwidth]{images/obstacle3__cpo_ppo_lag__ep_rew.pdf}
        \includegraphics[width=0.49\textwidth]{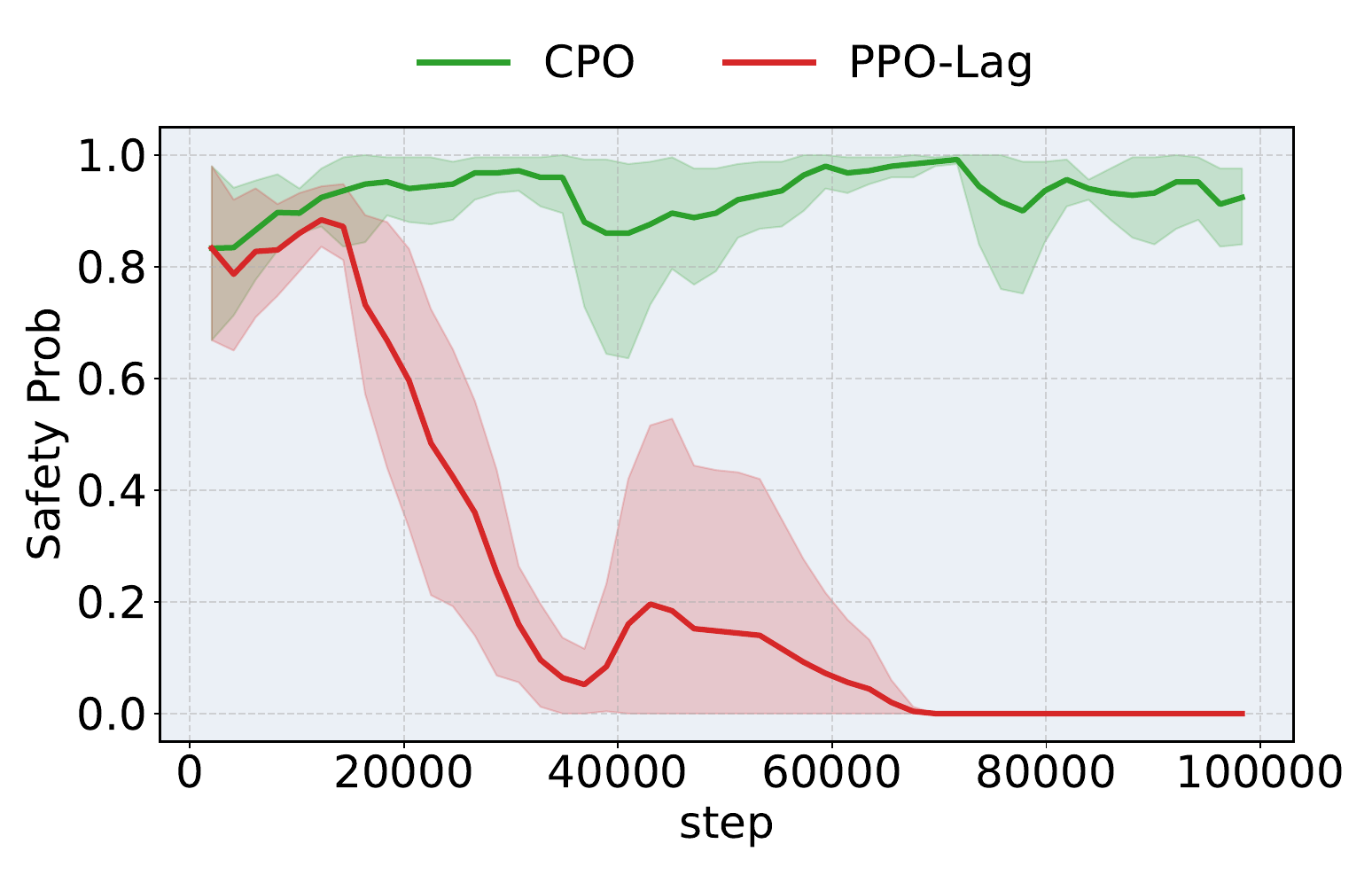}
        \caption{\texttt{obstacle4}}
    
    \end{subfigure}
    \hfill
    \begin{subfigure}[t]{0.49\linewidth}
        \centering
        \includegraphics[width=0.49\textwidth]{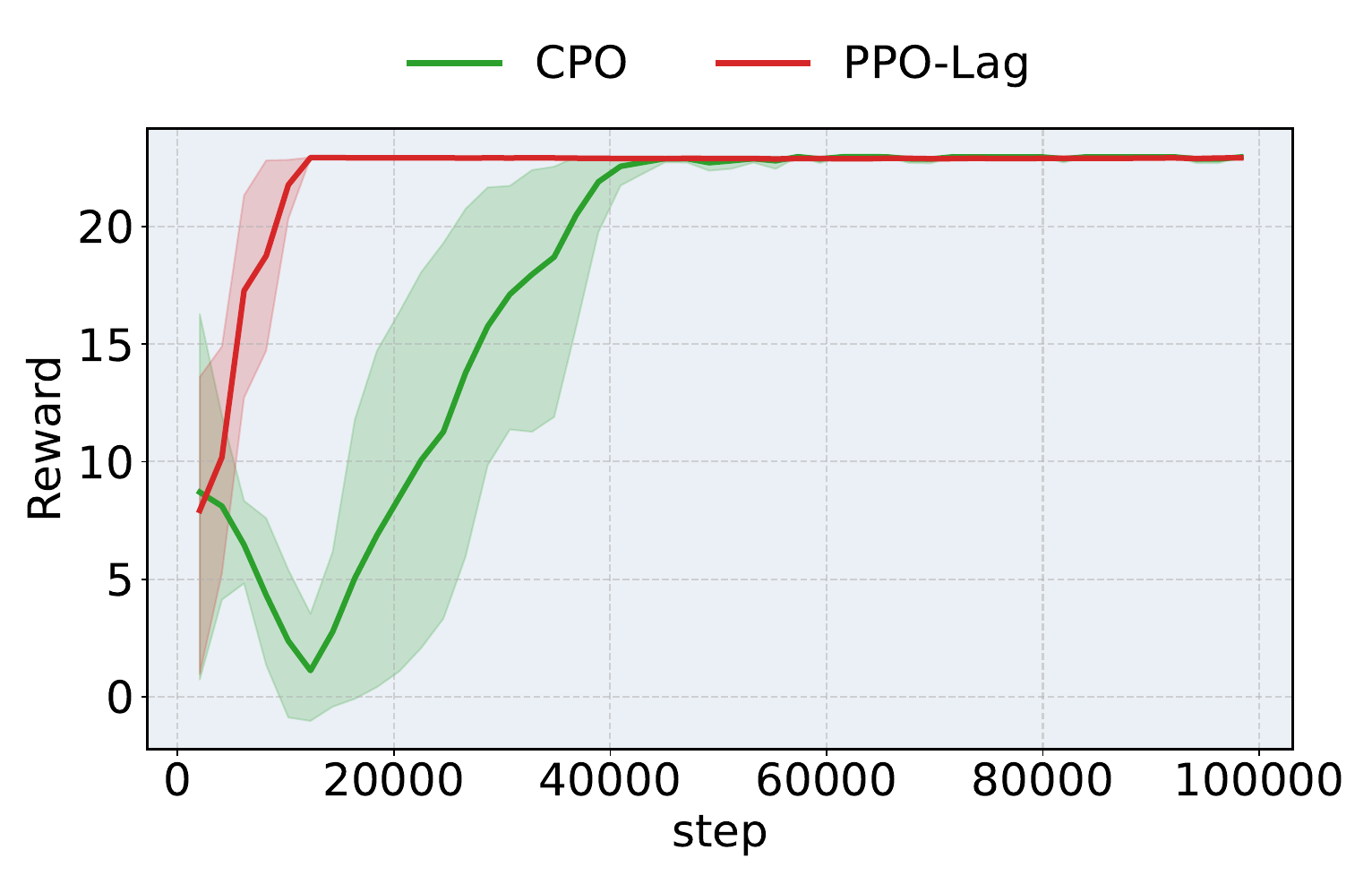}
        \includegraphics[width=0.49\textwidth]{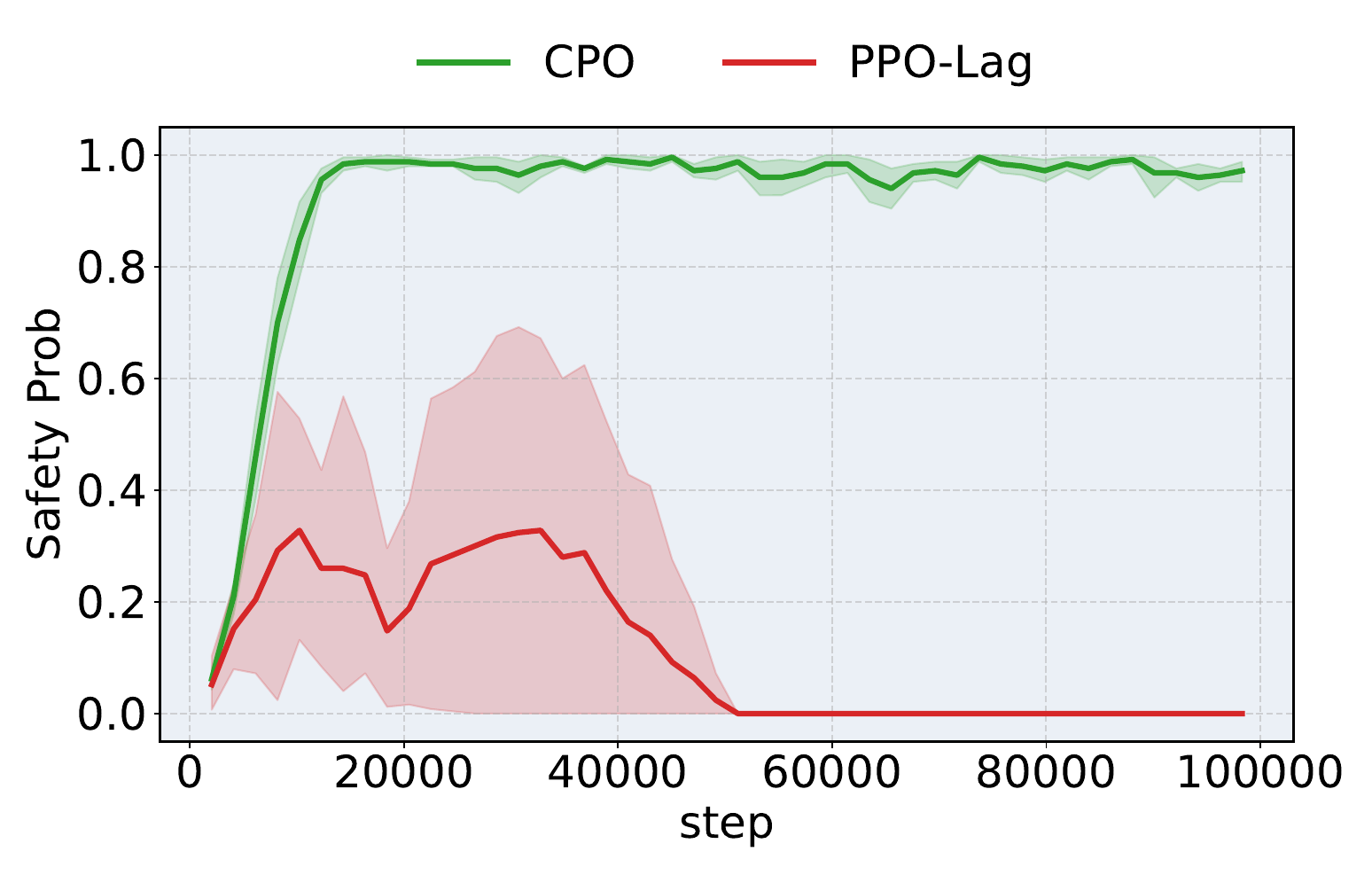}
        \caption{\texttt{road}}
        
    \end{subfigure}
    \hfill
    \begin{subfigure}[t]{0.49\linewidth}
        \centering
        \includegraphics[width=0.49\textwidth]{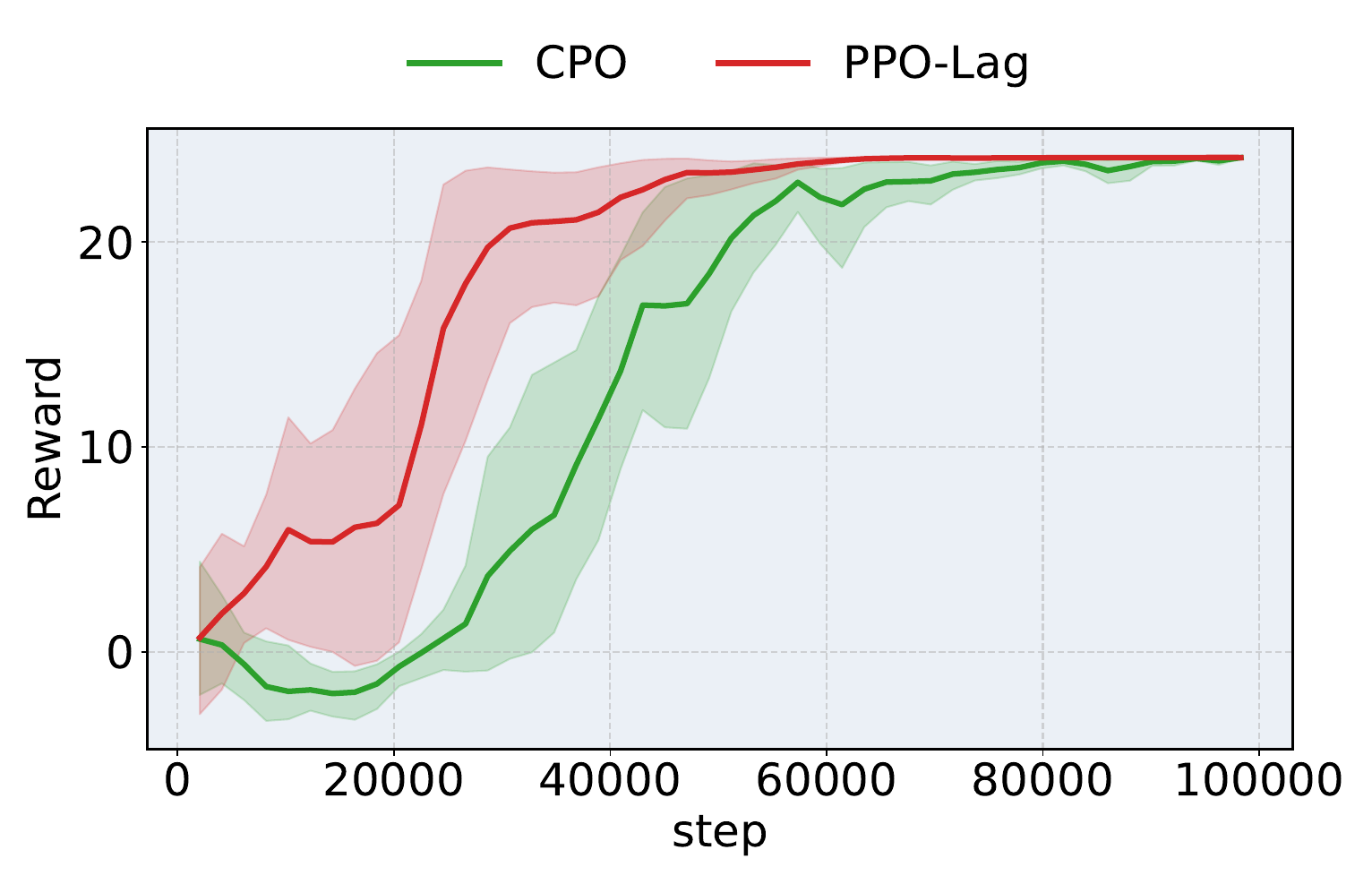}
        \includegraphics[width=0.49\textwidth]{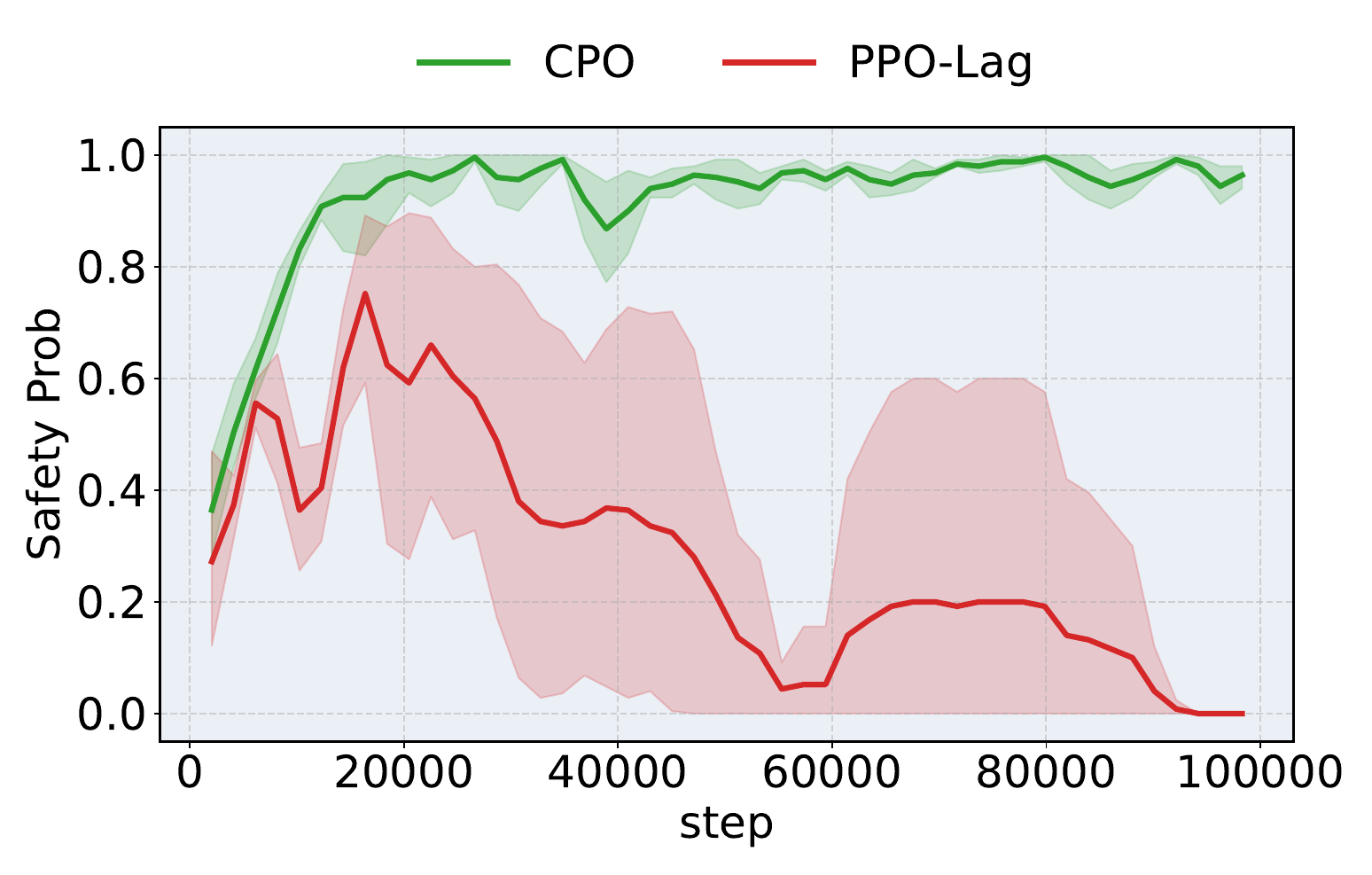}
        \caption{\texttt{road\_2d}} 
    \end{subfigure}
    \hfill
    \begin{subfigure}[t]{0.49\linewidth}
        \centering
        \includegraphics[width=0.49\textwidth]{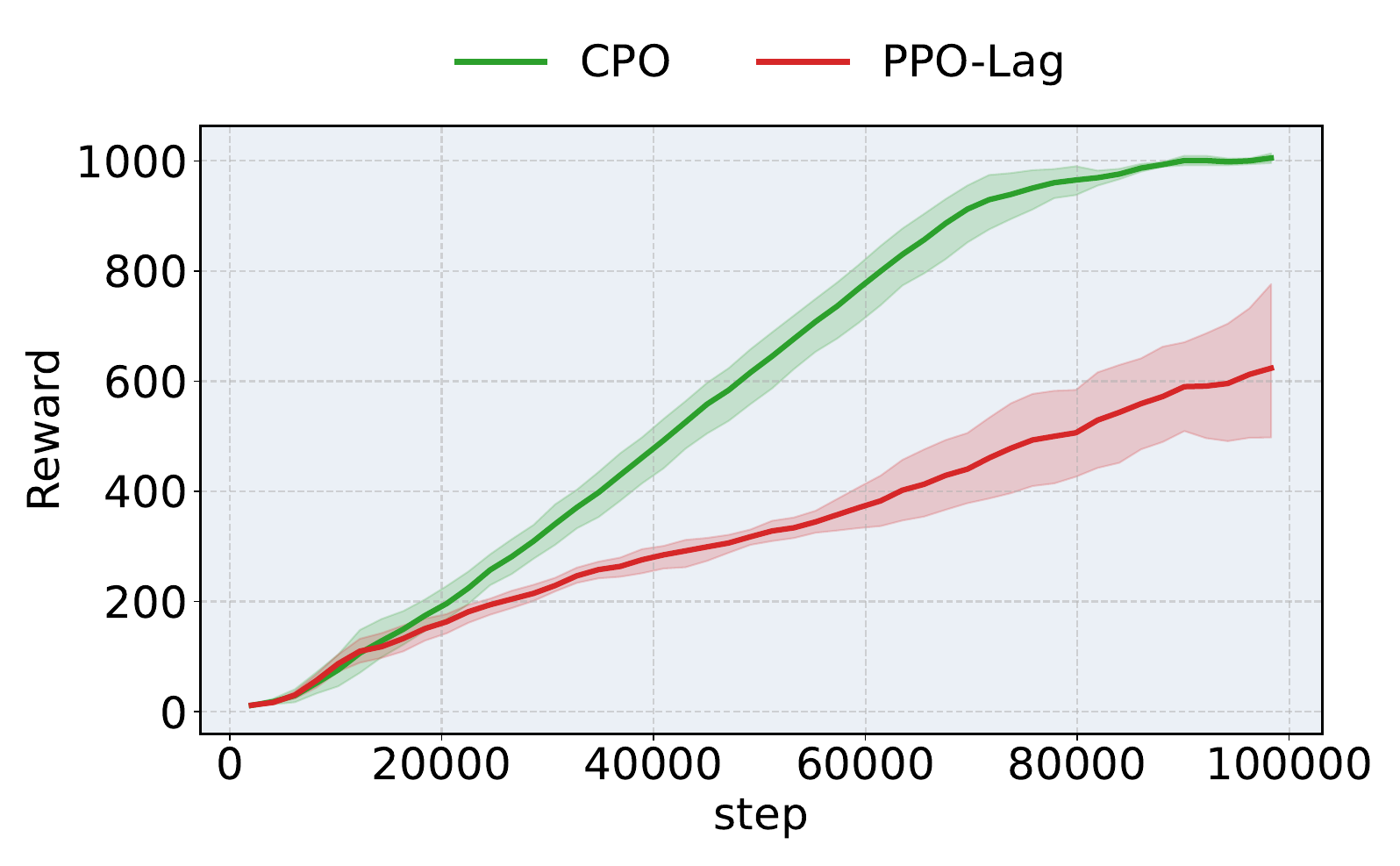}
        \includegraphics[width=0.49\textwidth]{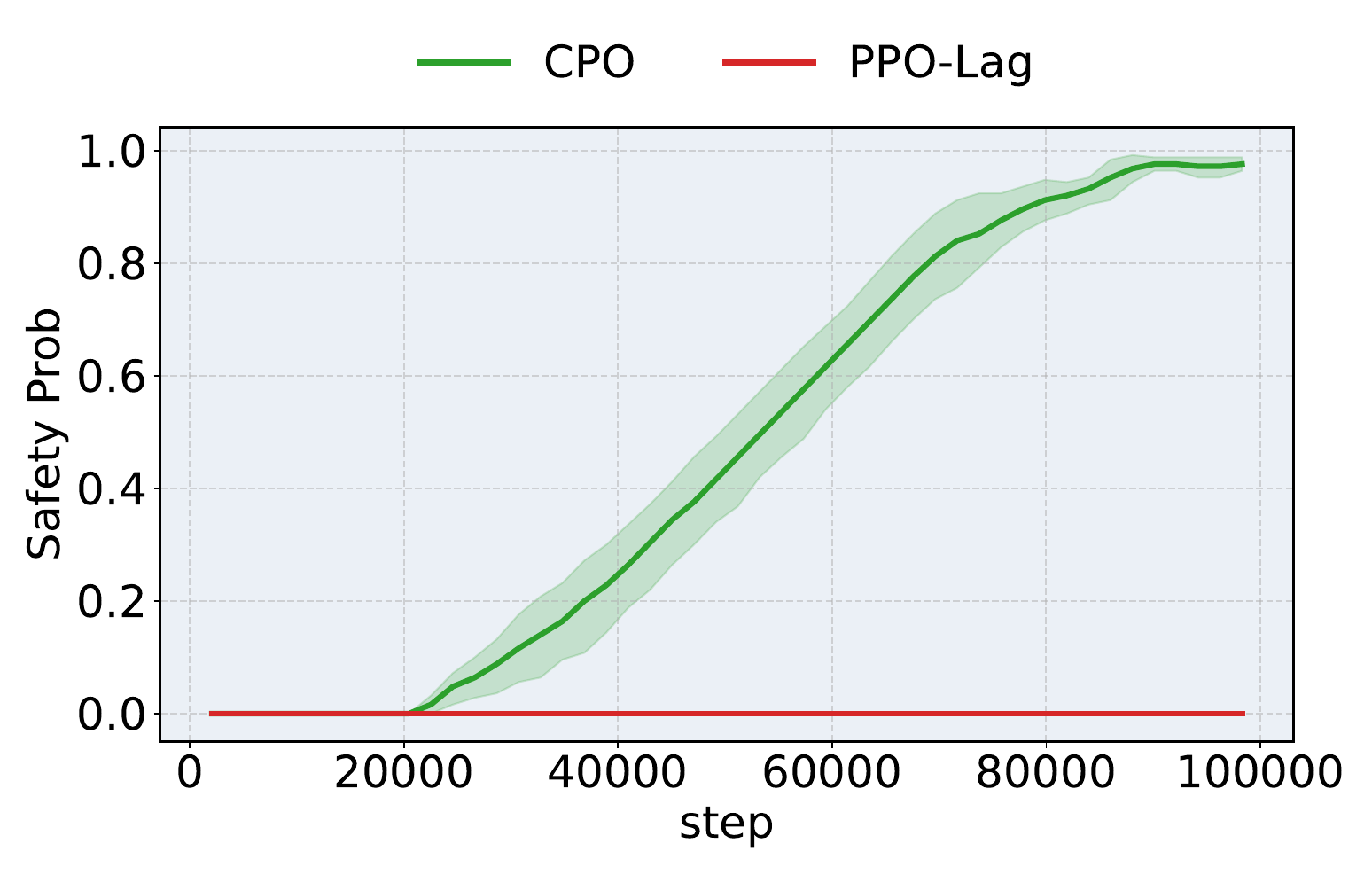}
        \caption{\texttt{Hopper-v5}} 
    \end{subfigure}
    \hfill
    \caption{Learning curves (reward and safety probability) for CPO and PPO-Lag.}
\end{figure*}
\clearpage
\newpage

\section{Additional analyses}
\label{sec:analyses}

\subsection{Comparison to Sampling}
\label{sec:samplinganalysis}

\paragraph{Effective number of samples.} We first conduct a quick analyses to identify the effective number of samples required to reach a step-wise safety guarantee of $1-\epsilon_t=0.9999$ (with high probability $1-\delta_t$), where we conservatively set $\delta_t=0.01$. \citet{bastanib2021safe} show that the number of samples required is,
\begin{equation*}
    K(\epsilon_t, \delta) \geq \frac{\log(1/\delta_t)}{\log(1/(1-\epsilon_t))} + 1
\end{equation*}
Plugging in $\epsilon_t=0.00001$ and $\delta_t=0.01$ gives $K \approx 46000$. We could not practically check this number of samples in a reasonable time, rather we tried $K \in \{5, 10, 50, 100, 500, 1000, 5000\} $, noting that $k = 5000$ gives an effective guarantee of $1-\epsilon_t\approx0.999$ (with high probability $1-\delta_t$).

\paragraph{Runtime comparison.} We now conduct a runtime comparison for action selection when using our analytic Gaussian approximation detailed in Sec.~\ref{sec:gaussianprocess}, compared to (approximately) sampling from the posterior distribution. Rather than draw exact samples from the posterior distribution (which scales cubically in the number of inputs $D$ and is far too slow) we approximate the model's kernel with a Fourier function decomposition $\hat f(x(\cdot))= \sum^{m}_{i=1} \phi_i(x(\cdot)) \cdot \theta_i$, where $\phi_i$ is the $i^{\text{th}}$ Fourier feature sampled from the model's kernel and $\theta_i$ is a sample from the standard Gaussian \cite{Pinder2022}, by default $m=500$.

Functionally, this form of the posterior distribution uses the same Fourier feature samples to evaluate all queries, meaning evaluating the posterior has a constant cost regardless of the number of posterior samples that are drawn. While the computational complexity of drawing samples is significantly reduced, the practical containment checks still scale $O(NK)$ where $N$ is the recovery horizon and $K$ is the number of samples, at some point this term dominates the complexity of drawing samples, revealing an inherent limitation with sampling-based approaches, see Tab.~\ref{tab:complexity} and Tab.~\ref{tab:complexity2}

\begin{table}[ht!]
  \caption{Runtime comparison (seconds) for \texttt{gaussian\_approx} and \texttt{posterior\_sampling\_approx} on \texttt{cartpole} ($n=4$), with GP posterior with $D=1000$ inputs.}
  \label{tab:complexity}
  \centering
  \small
  \begin{tabular}{lcccccccc}
    \toprule
    %\multicolumn{2}{c}{Part}                   \\
    %\cmidrule(r){1-2}

      \multirow{ 2}{*}{Recovery Horizon}& \texttt{gaussian\_approx} & \multicolumn{7}{c}{\texttt{posterior\_sampling\_approx}} \\
       & ($\epsilon_t = 0.9999$) & $K=5$ & $K=10$ & $K=50$ & $K=100$ & $K=500$ & $K=1000$ & $K=5000$\\
    \midrule
     $N=5$ & 0.749 & 0.537 & 0.533 & 0.648 & 0.619 & 0.683 & 0.910 & 6.890 \\
$N=10$ & 1.090 & 0.985 & 1.015 & 1.151 & 1.149 & 1.241 & 1.659 & 12.614 \\
$N=20$ & 1.652 & 1.857 & 1.911 & 2.221 & 2.159 & 2.358 & 3.155 & 35.563 \\
$N=40$ & 2.788 & 3.634 & 3.804 & 4.247 & 4.278 & 4.649 & 6.162 & 47.005 \\
$N=100$ & 6.495 & 8.857 & 8.764 & 10.566 & 10.482 & 11.294 & 15.187 & 115.771 \\
    \bottomrule
  \end{tabular}
\end{table}

\begin{table}[ht!]
  \caption{Runtime comparison (seconds) for \texttt{gaussian\_approx} and \texttt{posterior\_sampling\_approx} on \texttt{cartpole} ($n=4$), with SGPR posterior \cite{titsias2009variational} with $D=5000$ inputs and $Z=500$ inducing points.}
  \label{tab:complexity2}
  \centering
  \small
  \begin{tabular}{lcccccccc}
    \toprule
    %\multicolumn{2}{c}{Part}                   \\
    %\cmidrule(r){1-2}

      \multirow{ 2}{*}{Recovery Horizon}& \texttt{gaussian\_approx} & \multicolumn{7}{c}{\texttt{posterior\_sampling\_approx}} \\
       & ($\epsilon_t = 0.9999$) & $K=5$ & $K=10$ & $K=50$ & $K=100$ & $K=500$ & $K=1000$ & $K=5000$\\
    \midrule
     $N=5$ & 0.703 & 15.752 & 15.794 & 16.135 & 16.284 & 17.591 & 19.841 & 47.535 \\
$N=10$ & 1.230 & 28.854 & 28.943 & 29.562 & 29.845 & 32.243 & 36.369 & 87.142 \\
$N=20$ & 2.384 & 55.087 & 55.253 & 56.441 & 56.992 & 61.547 & 69.431 & 166.362 \\
$N=40$ & 4.353 & 107.614 & 107.881 & 110.180 & 111.251 & 120.161 & 135.552 & 324.794 \\
$N=100$ & 10.4313 & 264.880 & 265.655 & 271.443 & 274.051 & 296.042 & 333.899 & 800.059 \\
    \bottomrule
  \end{tabular}
\end{table}

\subsection{Skew and kurtosis}
\label{sec:skewkurtanalysis}

To verify assumption (iii) in Theorem \ref{thm:ellipsoids}, we conduct a skewness and kurtosis analysis for each of our environments used in our experimental evaluation (Sec.~\ref{sec:experiments}). In particular, we first train a GP posterior $f$ using data collected during the training dynamics of a given run. We then sample a random initial point and run the \texttt{posteror\_sampling\_approx} subroutine to generate an empirical distribution of $K$ points for each $t = 0, 1, \ldots N$. We denote $\tilde X(t) = [ \tilde x_1(t), \ldots, \tilde x_K(t)] $ as the empirical set of points for each $t = 0, 1, \ldots N$. We then compute Mardia's skewness and kurtosis, i.e., for some $t = 0, 1, \ldots N$ we have,
\begin{equation*}
    \text{skew}(t)  = \frac{1}{K^2}\sum^K_{i=1}\sum^K_{j=1} m_{ij}^3(t), \quad \text{kurt}(t)  = \frac{1}{K} \sum^K_{i=1}m_{ii}^2(t)
\end{equation*}
where,
\begin{align*}
    m_{ij}(t) &= (\tilde X_i(t) - \bar X(t)) S(t)^{-1}(\tilde X_i(t) - \bar X(t)) \quad \text{(Mahalanobis distance)}\\
    S(t) & = \frac{1}{K} \sum^K_{i=0} (\tilde x_i(t) - \bar X(t)) (\tilde x_i(t) - \bar X(t))^T\\
    \bar X(t) &= \frac{1}{K}\sum^{K}_{i=0} \tilde x_i(t)
\end{align*}
For truly multivariate distributions we should see,
\begin{align*}
    \text{skew}(t) \to 0, \quad \text{kurt}(t) \to n(n+2)
\end{align*}
as $K \to \infty$. Since we can only (approximately) sample a finite number of points $K$ we setup the usual test statistics. For Mardia's skewness if the empirical sample comes from a multivariate normal distribution then,
\begin{equation*}
    \frac{K}{6}\text{skew}(t) \sim \chi^2\left(\frac{n(n+1(n+2))}{6}\right)
\end{equation*}
the skewness is considered negligible here if the $p$-value $> 0.05$. For Mardia's kurtosis if the empirical sample comes from a multivariate normal distribution then,
\begin{equation*}
    \vert\text{kurt}(t) - n(n+2)\vert \sqrt{\frac{K}{8n(n+2)}} \sim N(0,1)
\end{equation*}
Thus if $\text{kurt}(t)$ is within 1-2 standard errors of $n(n+2)$ this is acceptable, we can setup the usual test statistic for the standard normal distribution and check if the $p$-value $> 0.05$. We present the following skewness and kurtosis analysis for each environment below in tabular format. For the most part the skewness and kurtosis is negligible across all of our environments, passing the statistical tests (or marginally failing some). We note that any discrepancies could also be down to the bias introduced in the approximate sampling procedure used (see Sec.~\ref{sec:samplinganalysis}).  

\paragraph{Tabular results.} We present all the results in tabular form, in each instance $K=5000$ samples were taken for highly robust estimates of the skewness and kurtosis. 

% TODO - fill the following tables with numbers

\begin{table}[ht!]
  \caption{Skewness and kurtosis analysis on \texttt{cartpole} ($n=4$) with GP posterior with $D=1000$ inputs.}
  \label{tab:skewkurt1}
  \centering
  \small
  \begin{tabular}{lccccccc}
    \toprule
    %\multicolumn{2}{c}{Part}                   \\
    %\cmidrule(r){1-2}

      & overall pass& $\text{skew}(t)$ & skew $p$-value & skew pass & $\text{kurt}(t)$ & kurt $p$-value & kurt pass \\
    \midrule
$t=0$ & Pass & 0.0228 & 0.5245 & Pass & 24.0808 & 0.6803 & Pass \\
$t=1$ & Pass & 0.0244 & 0.4374 & Pass & 24.0433 & 0.8252 & Pass \\
$t=2$ & Pass & 0.0313 & 0.1629 & Pass & 24.0392 & 0.8416 & Pass \\
$t=3$ & Pass & 0.0339 & 0.1038 & Pass & 24.1253 & 0.5225 & Pass \\
$t=4$ & Pass & 0.0280 & 0.2745 & Pass & 24.0372 & 0.8495 & Pass \\
$t=5$ & Pass & 0.0287 & 0.2456 & Pass & 24.2263 & 0.2482 & Pass \\
$t=6$ & Pass & 0.0367 & 0.0607 & Pass & 24.1273 & 0.5160 & Pass \\
$t=7$ & Pass & 0.0354 & 0.0786 & Pass & 24.0859 & 0.6613 & Pass \\
$t=8$ & Fail & 0.0410 & 0.0250 & Fail & 24.2315 & 0.2374 & Pass \\
$t=9$ & Fail & 0.0299 & 0.2054 & Pass & 24.4235 & 0.0307 & Fail \\
$t=10$ & Fail & 0.0232 & 0.5016 & Pass & 24.5831 & 0.0029 & Fail \\
\multicolumn{8}{c}{...}\\
$t=90$ & Pass & 0.0202 & 0.6647 & Pass & 23.8474 & 0.4361 & Pass \\
$t=91$ & Pass & 0.0244 & 0.4350 & Pass & 24.0161 & 0.9345 & Pass \\
$t=92$ & Pass & 0.0286 & 0.2499 & Pass & 24.1215 & 0.5352 & Pass \\
$t=93$ & Pass & 0.0326 & 0.1303 & Pass & 24.2817 & 0.1506 & Pass \\
$t=94$ & Pass & 0.0287 & 0.2476 & Pass & 24.2922 & 0.1359 & Pass \\
$t=95$ & Pass & 0.0247 & 0.4220 & Pass & 24.0463 & 0.8131 & Pass \\
$t=96$ & Pass & 0.0301 & 0.1972 & Pass & 23.9203 & 0.6844 & Pass \\
$t=97$ & Pass & 0.0306 & 0.1839 & Pass & 23.9490 & 0.7947 & Pass \\
$t=98$ & Pass & 0.0295 & 0.2174 & Pass & 23.9738 & 0.8934 & Pass \\
$t=99$ & Pass & 0.0260 & 0.3601 & Pass & 23.8628 & 0.4838 & Pass \\
$t=100$ & Pass & 0.0175 & 0.8001 & Pass & 23.9352 & 0.7407 & Pass \\
    \bottomrule
  \end{tabular}
\end{table}

\newpage
\begin{table}[ht!]
  \caption{Skewness and kurtosis analysis on \texttt{mountain\_car} ($n=2$) with GP posterior with $D=1000$ inputs.}
  \label{tab:skewkurt2}
  \centering
  \small
  \begin{tabular}{lccccccc}
    \toprule
    %\multicolumn{2}{c}{Part}                   \\
    %\cmidrule(r){1-2}

      & overall pass& $\text{skew}(t)$ & skew $p$-value & skew pass & $\text{kurt}(t)$ & kurt $p$-value & kurt pass \\
    \midrule
     $t=0$ & Pass & 0.0049 & 0.3999 & Pass & 8.1132 & 0.3169 & Pass \\
$t=1$ & Fail & 0.0132 & 0.0269 & Fail & 8.0025 & 0.9824 & Pass \\
$t=2$ & Fail & 0.0141 & 0.0191 & Fail & 7.8566 & 0.2050 & Pass \\
$t=3$ & Pass & 0.0095 & 0.0935 & Pass & 7.9025 & 0.3889 & Pass \\
$t=4$ & Pass & 0.0109 & 0.0595 & Pass & 7.9244 & 0.5043 & Pass \\
$t=5$ & Pass & 0.0099 & 0.0834 & Pass & 8.0009 & 0.9938 & Pass \\
$t=6$ & Pass & 0.0017 & 0.8399 & Pass & 7.9185 & 0.4714 & Pass \\
$t=7$ & Pass & 0.0013 & 0.8930 & Pass & 7.9599 & 0.7228 & Pass \\
$t=8$ & Pass & 0.0009 & 0.9413 & Pass & 7.9219 & 0.4899 & Pass \\
$t=9$ & Pass & 0.0068 & 0.2285 & Pass & 7.8764 & 0.2745 & Pass \\
$t=10$ & Pass & 0.0072 & 0.2002 & Pass & 8.0056 & 0.9608 & Pass \\
\multicolumn{8}{c}{...}\\
$t=70$ & Pass & 0.0014 & 0.8863 & Pass & 8.0283 & 0.8025 & Pass \\
$t=71$ & Pass & 0.0006 & 0.9720 & Pass & 7.9748 & 0.8235 & Pass \\
$t=72$ & Pass & 0.0007 & 0.9658 & Pass & 8.0059 & 0.9584 & Pass \\
$t=73$ & Pass & 0.0002 & 0.9978 & Pass & 7.9946 & 0.9621 & Pass \\
$t=74$ & Pass & 0.0011 & 0.9268 & Pass & 8.0131 & 0.9081 & Pass \\
$t=75$ & Pass & 0.0019 & 0.8148 & Pass & 8.0321 & 0.7764 & Pass \\
$t=76$ & Pass & 0.0029 & 0.6525 & Pass & 8.0448 & 0.6922 & Pass \\
$t=77$ & Pass & 0.0025 & 0.7221 & Pass & 8.0364 & 0.7479 & Pass \\
$t=78$ & Pass & 0.0019 & 0.8191 & Pass & 7.8971 & 0.3631 & Pass \\
$t=79$ & Pass & 0.0018 & 0.8222 & Pass & 7.9648 & 0.7560 & Pass \\
$t=80$ & Pass & 0.0026 & 0.7002 & Pass & 7.9582 & 0.7117 & Pass \\
    \bottomrule
  \end{tabular}
\end{table}

\begin{table}[ht!]
  \caption{Skewness and kurtosis analysis on \texttt{obstacle} ($n=4$) with GP posterior with $D=1000$ inputs.}
  \label{tab:skewkurt3}
  \centering
  \small
  \begin{tabular}{lccccccc}
    \toprule
    %\multicolumn{2}{c}{Part}                   \\
    %\cmidrule(r){1-2}

      & overall pass& $\text{skew}(t)$ & skew $p$-value & skew pass & $\text{kurt}(t)$ & kurt $p$-value & kurt pass \\
    \midrule
$t=0$ & Pass & 0.0222 & 0.5542 & Pass & 24.2452 & 0.2108 & Pass \\
$t=1$ & Pass & 0.0297 & 0.2123 & Pass & 23.8672 & 0.4980 & Pass \\
$t=2$ & Fail & 0.0487 & 0.0042 & Fail & 23.7801 & 0.2618 & Pass \\
$t=3$ & Pass & 0.0368 & 0.0595 & Pass & 23.8696 & 0.5057 & Pass \\
$t=4$ & Pass & 0.0361 & 0.0684 & Pass & 23.7830 & 0.2681 & Pass \\
$t=5$ & Pass & 0.0262 & 0.3484 & Pass & 23.7300 & 0.1682 & Pass \\
$t=6$ & Pass & 0.0351 & 0.0826 & Pass & 23.7324 & 0.1721 & Pass \\
$t=7$ & Fail & 0.0465 & 0.0072 & Fail & 24.1845 & 0.3465 & Pass \\
$t=8$ & Fail & 0.0591 & 0.0003 & Fail & 24.1589 & 0.4174 & Pass \\
$t=9$ & Fail & 0.0702 & 0.0000 & Fail & 23.7868 & 0.2765 & Pass \\
$t=10$ & Pass & 0.0284 & 0.2575 & Pass & 23.6276 & 0.0574 & Pass \\
$t=11$ & Pass & 0.0298 & 0.2084 & Pass & 23.8470 & 0.4349 & Pass \\
$t=12$ & Fail & 0.0497 & 0.0033 & Fail & 23.9000 & 0.6099 & Pass \\
$t=13$ & Fail & 0.0967 & 0.0000 & Fail & 24.1875 & 0.3385 & Pass \\
$t=14$ & Fail & 0.0750 & 0.0000 & Fail & 24.1454 & 0.4580 & Pass \\
$t=15$ & Fail & 0.0797 & 0.0000 & Fail & 24.0992 & 0.6128 & Pass \\
$t=16$ & Fail & 0.0880 & 0.0000 & Fail & 24.0947 & 0.6289 & Pass \\
$t=17$ & Fail & 0.1005 & 0.0000 & Fail & 24.2734 & 0.1630 & Pass \\
$t=18$ & Fail & 0.0894 & 0.0000 & Fail & 24.1385 & 0.4798 & Pass \\
$t=19$ & Fail & 0.1050 & 0.0000 & Fail & 24.5370 & 0.0061 & Fail \\
$t=20$ & Fail & 0.1182 & 0.0000 & Fail & 24.5583 & 0.0044 & Fail \\
    \bottomrule
  \end{tabular}
\end{table}

\begin{table}[ht!]
  \caption{Skewness and kurtosis analysis on \texttt{obstacle2/3/4} ($n=4$) with GP posterior with $D=1000$ inputs.}
  \label{tab:skewkurt4}
  \centering
  \small
  \begin{tabular}{lccccccc}
    \toprule
    %\multicolumn{2}{c}{Part}                   \\
    %\cmidrule(r){1-2}

      & overall pass& $\text{skew}(t)$ & skew $p$-value & skew pass & $\text{kurt}(t)$ & kurt $p$-value & kurt pass \\
    \midrule
     $t=1$ & Pass & 0.0139 & 0.9291 & Pass & 24.0364 & 0.8525 & Pass \\
$t=2$ & Pass & 0.0313 & 0.1639 & Pass & 24.0531 & 0.7862 & Pass \\
$t=3$ & Pass & 0.0352 & 0.0810 & Pass & 24.0604 & 0.7580 & Pass \\
$t=4$ & Pass & 0.0303 & 0.1912 & Pass & 24.0422 & 0.8295 & Pass \\
$t=5$ & Fail & 0.0463 & 0.0075 & Fail & 24.4101 & 0.0364 & Fail \\
$t=6$ & Fail & 0.0676 & 0.0000 & Fail & 24.2153 & 0.2719 & Pass \\
$t=7$ & Fail & 0.0485 & 0.0045 & Fail & 24.2753 & 0.1601 & Pass \\
$t=8$ & Pass & 0.0371 & 0.0567 & Pass & 24.2335 & 0.2335 & Pass \\
$t=9$ & Pass & 0.0359 & 0.0713 & Pass & 24.0308 & 0.8750 & Pass \\
$t=10$ & Pass & 0.0351 & 0.0822 & Pass & 23.8916 & 0.5803 & Pass \\
$t=11$ & Fail & 0.0517 & 0.0020 & Fail & 24.1671 & 0.3939 & Pass \\
$t=12$ & Fail & 0.0566 & 0.0006 & Fail & 24.1577 & 0.4211 & Pass \\
$t=13$ & Fail & 0.0594 & 0.0003 & Fail & 24.1114 & 0.5698 & Pass \\
$t=14$ & Fail & 0.0688 & 0.0000 & Fail & 24.2100 & 0.2838 & Pass \\
$t=15$ & Fail & 0.0789 & 0.0000 & Fail & 24.0984 & 0.6156 & Pass \\
$t=16$ & Fail & 0.0956 & 0.0000 & Fail & 24.2683 & 0.1710 & Pass \\
$t=17$ & Fail & 0.0851 & 0.0000 & Fail & 24.2325 & 0.2354 & Pass \\
$t=18$ & Fail & 0.0848 & 0.0000 & Fail & 24.2235 & 0.2540 & Pass \\
$t=19$ & Fail & 0.1003 & 0.0000 & Fail & 24.3562 & 0.0691 & Pass \\
$t=20$ & Fail & 0.1063 & 0.0000 & Fail & 24.3329 & 0.0894 & Pass \\
    \bottomrule
  \end{tabular}
\end{table}
\newpage

\begin{table}[ht!]
  \caption{Skewness and kurtosis analysis on \texttt{road} ($n=2$) with GP posterior with $D=1000$ inputs.}
  \label{tab:skewkurt5}
  \centering
  \small
  \begin{tabular}{lccccccc}
    \toprule
    %\multicolumn{2}{c}{Part}                   \\
    %\cmidrule(r){1-2}

      & overall pass& $\text{skew}(t)$ & skew $p$-value & skew pass & $\text{kurt}(t)$ & kurt $p$-value & kurt pass \\
    \midrule
     $t=0$ & Pass & 0.0047 & 0.4179 & Pass & 8.1394 & 0.2179 & Pass \\
$t=1$ & Pass & 0.0050 & 0.3840 & Pass & 8.0203 & 0.8577 & Pass \\
$t=2$ & Pass & 0.0075 & 0.1803 & Pass & 7.8849 & 0.3090 & Pass \\
$t=3$ & Pass & 0.0067 & 0.2320 & Pass & 7.8799 & 0.2885 & Pass \\
$t=4$ & Pass & 0.0105 & 0.0673 & Pass & 8.0387 & 0.7323 & Pass \\
$t=5$ & Fail & 0.0181 & 0.0045 & Fail & 8.0728 & 0.5198 & Pass \\
    \bottomrule
  \end{tabular}
\end{table}

\begin{table}[ht!]
  \caption{Skewness and kurtosis analysis on \texttt{road\_2d} ($n=4$) with GP posterior with $D=1000$ inputs.}
  \label{tab:skewkurt6}
  \centering
  \small
  \begin{tabular}{lccccccc}
    \toprule
    %\multicolumn{2}{c}{Part}                   \\
    %\cmidrule(r){1-2}

      & overall pass& $\text{skew}(t)$ & skew $p$-value & skew pass & $\text{kurt}(t)$ & kurt $p$-value & kurt pass \\
    \midrule
$t=0$ & Pass & 0.0213 & 0.6065 & Pass & 24.1454 & 0.4582 & Pass \\
$t=1$ & Pass & 0.0232 & 0.4977 & Pass & 23.6600 & 0.0827 & Pass \\
$t=2$ & Pass & 0.0202 & 0.6635 & Pass & 23.8459 & 0.4317 & Pass \\
$t=3$ & Fail & 0.0233 & 0.4936 & Pass & 24.4087 & 0.0370 & Fail \\
$t=4$ & Pass & 0.0217 & 0.5827 & Pass & 23.9153 & 0.6657 & Pass \\
$t=5$ & Pass & 0.0111 & 0.9799 & Pass & 24.0857 & 0.6618 & Pass \\
    \bottomrule
  \end{tabular}
\end{table}

% TODO - fix the table positioning etc.

\paragraph{Qualitative results.} We present a direct comparison between \texttt{gaussian\_approx} and \texttt{posterior\_sampling\_approx} in Fig.~\ref{fig:pointsandellipses}. We start from an $\epsilon$-recoverable state $[0.0, 0.0, -0.15, 0.3 ]^T \in \mathcal{X}$ in \texttt{cartpole}, computing the uncertainty sets $\mathcal{E}(0), \ldots \mathcal{E}(N)$ with \texttt{gaussian\_approx} and plotting the corresponding $1-\epsilon_t=99.99\%$ confidence intervals, in addition we (approximately) sample points with the subroutine \texttt{posterior\_sampling\_approx}. We see for all $t=0,1,\ldots, N$ the uncertainty sets $\mathcal{E}(0), \ldots \mathcal{E}(N)$ capture the empirical samples and to the naked eye the empirical samples broadly follow multivariate Gaussian distributions with correct mean and variance. 
\newpage

\begin{figure}[t]
  \centering
  \includegraphics[width=0.55\linewidth]{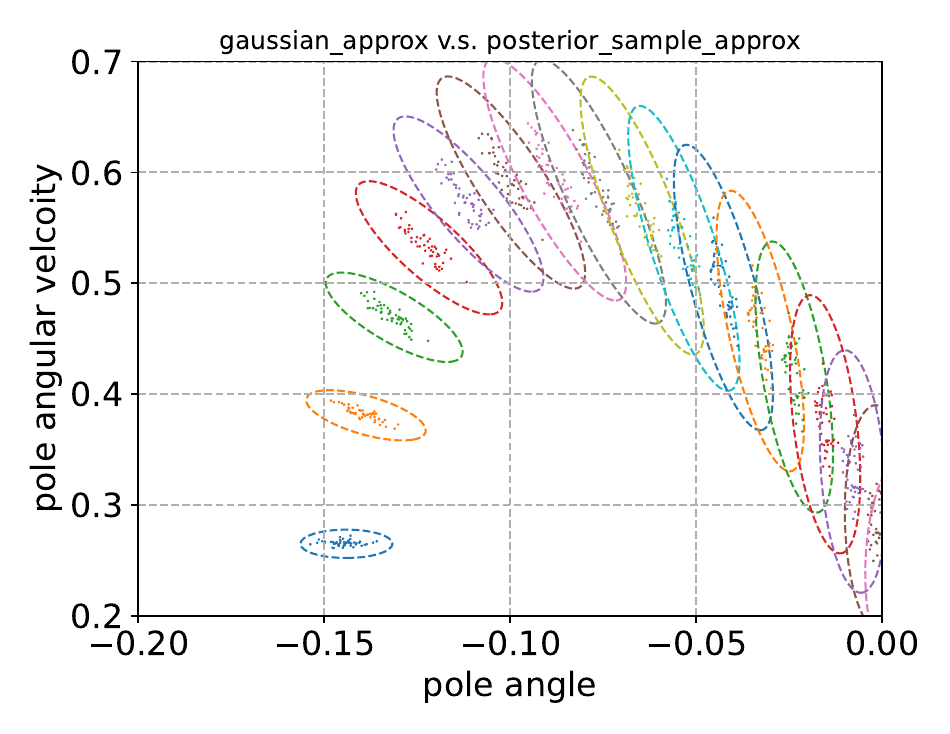}
        \caption{$K=50$ empirical samples and 99.99\%-CI uncertainty sets $\mathcal{E}(0), \ldots \mathcal{E}(N)$ for \texttt{cartpole (i)/(ii)}, with GP posterior with $D=400$ inputs.}
        \label{fig:pointsandellipses}
\end{figure}

\section{Hyperparameters and access to code}
\label{sec:hyperparameters}

\paragraph{Access to code.} Code for the paper is not currently available publicly but will soon be made public via GitHub.
%To maintain a high standard of anonymity our code we provide code for the main experiments and baselines: A2C-Shield, A2C-Eval, CPO, PPO-Lag. 
All of our environments are implemented with the OpenAI Gym interface \cite{brockman2016openai}

\paragraph{Training details.}  For collecting all sets of experiments we has access to 2 NVIDIA Tesla A40 (48GB RAM) GPU and a 24-core/48 thread Intel Xeon CPU each with 32GB of additional CPU RAM. For computation of the linear backup policies and control invariant sets we use AROC \cite{kochdumper2021aroc} which is released under the GPLv3 license. Each run can typically take several hours to a day, depending on the environment and hyperparameters used. Our own implementation of A2C is written in Python with JAX \cite{jax2018github} and inspired by Stable Baselines JAX (SBX) \cite{stable-baselines3}. For GP dynamics learning (as mentioned in the paper) we use GPJAX  \cite{Pinder2022}. For key dependencies we refer the reader to \url{https://docs.jaxgaussianprocesses.com/} (MIT License). For CPO \cite{achiam2017constrained} and PPO-Lag \cite{ray2019benchmarking}, we use the implementations provided by Omnisafe \cite{omnisafe}, the code for running these benchmarks can also be found in the supplementary material however, for setup instructions please refer to the Omnisafe code base \url{https://github.com/PKU-Alignment/omnisafe} (Apache-2.0 license).

\paragraph{Statistical significance.} We report standard error bars for each of our experiments (c.f., Tab.~\ref{tab:results}). In particular, we used 5 random initializations (seeds) for each experiment. For the learning curves (plots) the error bars are non-parametric (bootstrap) 95\% confidence intervals, provided by \texttt{seaborn.lineplot} with default parameters: \texttt{errorbar=(`ci', 95), n\_boot=1000}. The error bars capture the randomness in the initialization of the task policy parameters, the randomness of the environment (observation noise and disturbances) and any randomness in the batch sampling. 

\paragraph{Hyperparameters.} Tab.~\ref{tab:hyperparameters} details the default hyperparameters used for our approach. The hyperparameters vary across environments, in particular the recovery horizon $N \in [0, 100]$ varies depending on how quickly we expect the backup policy to recover the system state. For environment specific hyperparameters please refer to the \texttt{configs.yaml} provided in our implementation.
%For environment specific hyperparameters we refer the reader to the \texttt{configs.yaml} file provided in our code which is provided in the supplementary material. 
We also provide the hyperparameters used as input to Omnisafe \cite{omnisafe} for collecting the results for CPO \cite{achiam2017constrained} and PPO-Lag \cite{ray2019benchmarking} in Tab.~\ref{tab:hyperparams2}, anything not specified is assumed to be the default for Omnisafe. For hyperparameter for MPS and DMPS we refer the reader to \cite{banerjee2024dynamic}.
\clearpage
\newpage

\begin{table}[H]
    \caption{Default hyperparameters for A2C-Shield}
    \label{tab:hyperparameters}
    \centering
    \small
    \begin{tabular}{lcc}
        \toprule
        Name  & Symbol & Value \\
        \midrule
        \multicolumn{3}{c}{Shield}\\
        \midrule
        Recovery horizon     & $N$          & 20        \\
        Sigma coef.           & $z$ ($\epsilon_t$)          & 3.82 (0.9999)        \\
        \midrule
        \multicolumn{3}{c}{Gaussian process}\\
        \midrule
        Learning rate  & -          & 0.01        \\
        Optim.~iters. & -          & 100       \\
        Batch size          & -         & 64        \\
        Buffer size & $\vert E\vert$ or $D$ & 1000\\
        Sparse & - & False\\
        Collapsed & - & False\\
        Num.~inducing inp. & - & 500\\
        \midrule
        \multicolumn{3}{c}{Actor-critic}\\
        \midrule
        num.~layers & - & 2\\
        num.~units & - & 64\\
        actor opt. & - & adam\\
        actor lr & - & $3 \times 10^{-4}$\\
        actor clip norm & - & 0.5\\
        actor ent.~coef. & - & 0.05\\
        critic opt. & - & adam\\
        critic lr & - & $1 \times 10^{-3}$\\
        critic clip norm & - & 0.5\\
        critic reg. coef & - & 1.0\\
        discount & $\gamma$ & 0.99\\
        gae lambda & $\lambda_{\text{gae}}$ & 0.95\\
        polyak tau & $\tau_{\text{polyak}}$ & 0.02\\
        rollout horizon & $H$ & 16\\
        num.~posterior samples & - & 5\\
        num.~starting states & $\vert B \vert$ & 32\\
        (effective) batch size & - & 2560\\
        Optim.~iters & - & 10\\
        \bottomrule
    \end{tabular}
\end{table}

\begin{table}[H]
  \caption{Hyperparameter details for CPO and PPO-Lag.}
  \label{tab:hyperparams2}
  \centering
  \small
  \begin{tabular}{lcc}
    \toprule
    %\multicolumn{2}{c}{Part}                   \\
    %\cmidrule(r){1-2}
       Name & Symbol & Value  \\
    \midrule
    \multicolumn{3}{c}{Shared}\\
    \midrule
    Actor learning rate & $\eta$ & $3\times 10^{-4}$\\
    Discount factor & $\gamma$ & $\{0.99, 0.997\}$\\
    Cost coefficient & $c$ & $1.0$\\
    Cost threshold & - & $0.01$\\
    Cost gamma & $\gamma_c$ & $\{0.99, 0.997\}$ \\
    TD-lambda & $\lambda$ & $0.95$ \\
    Cost TD-lambda & $\lambda_c$ & $0.95$ \\
    Max grad norm & - & $0.5$ \\
    Entropy coefficient & - & $0.0$\\
    Steps per epoch & $n$ & $2048$\\
    Update iterations (per epoch) & $k$ & $10$ \\
    Batch size & $B$ & $64$\\
    \midrule
    \multicolumn{3}{c}{CPO}\\
    \midrule
     \multicolumn{3}{c}{...}  \\
    \midrule
    \multicolumn{3}{c}{PPO-Lag}\\
    \midrule
    Initial Lagrangian multiplier & $\lambda_{\textit{init}}$& $10.0$\\
    Epsilon clip & $\epsilon_{\textit{clip}}$ & $0.2$\\
    \bottomrule
  \end{tabular}
\end{table}

\section{Environment descriptions}
\label{sec:environmentdescriptions}

\subsection{Cartpole}

The cartpole problem (\texttt{cartpole}) \cite{6313077} is a classic benchmark for non-linear dynamical systems, where the task is to balance the pole upright towards its stable equilibrium point. Fig.~\ref{fig:cartpole} visualizes the problem.

\begin{figure}[ht!]
    \centering
    \includegraphics[width=0.45\linewidth]{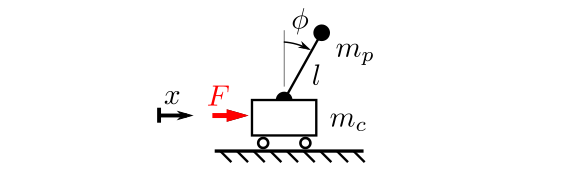}
    \caption{Visualization of the cartpole dynamical system}
    \label{fig:cartpole}
\end{figure}

\textbf{Dynamics.} The system dynamics are governed by the equations:
\begin{align}
    &x(t+1)_1 = x(t)_3 \\
    &x(t+1)_2 = x(t)_4 \\
    \begin{split}
    &x(t+1)_3 = \frac{m_p l x(t)_4^2\sin(x(t)_2)\cos(x(t)_2)^2}{(4/3)(m_c + m_p)^2l - m_p(m_c + m_p)l\cos(x(t)_2)^2} \\
    & - \frac{(m_c + m_p) g \cos(x(t)_2)\sin(x(t)_2) + \cos(x(t)_2)^2Fu(t)}{(4/3)(m_c + m_p)^2l - m_p(m_c + m_p)l\cos(x(t)_2)^2}\\
    & +\frac{Fu(t)+m_plx(t)_4^2\sin(x(t)_2)}{m_c + m_p} + w(t)_1
    \end{split} \\
    \begin{split}
        & x(t+1)_4 = \frac{(m_c +m_p)g\sin(x(t)-2)}{(4/3)(m_c + m_p)l - m_pl\cos(x(t)_2)^2}\\
        & -\frac{m_p l x(t)^2_4\sin(x(t)_2)\cos(x(t)_2) + \cos(x(t)_2) Fu(t)}{(4/3)(m_c + m_p)l - m_pl\cos(x(t)_2)^2} + w(t)_2
    \end{split}
\end{align}
where $x(t)_1$ is the position of the cart, $x(t)_3$ is the velocity of the cart, $x(t)_2$ is the angle of the pole and $x(t)_4$ is the angular velocity of the pole. In addition, $u(t)$ is the force applied, with $F=10$ being the force magnifier, $m_c=1$ is the mass of the car, $m_p=0.1$ is the mass of the pole, $l=0.5$ is the length of the pole and $g=9.8$ is gravity.

\textbf{Constraints.} The state and action constraints are $\mathcal{X} = [-4.8, 4.8] \times [-0.4190, 0.4190] \times [-\infty, \infty] \times [-\infty, \infty] \subset \mathbb{R}^4$ and $\mathcal{U} = \{[-1.0, 1.0]\} \subset \mathbb{R}$. The safety constraints are given by the box constraints, $\mathcal{X}_{\text{safe}} = [-2.4, 2.4] \times [-0.2095, 0.2095] \times [-\infty, \infty] \times [-\infty, \infty] \subset \mathbb{R}^4$. The disturbances are drawn uniformly from the set $\mathcal{W} =  [-0.001, 0.001] \times [-0.001, 0.001] \subset \mathbb{R}^2$, the observed noise variances are $\sigma^2_i=1\times 10^{-6}$ for all $i = 1, \cdots, n$.

\textbf{Initial state.} The initial state is sampled uniformly from $\mathcal{X}_0 = [-0.5, 0.5]^4$

\textbf{Reward.} We consider 3 different reward functions in this environment: (i) $r(t) = +1 \text{ if } x(t+1) \in \mathcal{X}_{\text{safe}} \text{ otherwise } 0$, (ii) $r(t) = +1 \text{ if } x(t+1)_1 \geq 0.1 \text{ otherwise } 0$, (iii) $r(t) = +1 \text{ if } x(t+1)_2 \geq 0.1 \text{ otherwise } 0$

\textbf{Termination condition.} The episode terminates either after 200 timesteps or when the current system state leaves $\mathcal{X}_{\text{safe}}$.

\textbf{Backup policy.} The backup policy $\pi_\text{backup}$ is given by the feedback matrix, $K=[-0.7488, -1.2280, -7.2758,-1.7787]^T$, computed via LQR using a linearization (matrices $A, b$) of the environment dynamics around the stable equilibrium point $x_{eq}$. The equilibrium state control is also $u_{eq} = \mathbf{0}$. 

\textbf{Control invariant set.} Computed using AROC \cite{kochdumper2021aroc}, with feedback matrix $K$ (see above) around the equilibrium point $x_{eq} = \mathbf 0$ taking into account the disturbances $\mathcal{W}$ and observation noise $\mathcal{V} = \text{diag}([\sigma_1^2, \ldots, \sigma_n^2])$ (with each $\sigma^2_i=1\times 10^{-6}$).

\subsection{Mountain car}

Mountain car (\texttt{mountain\_car}) \cite{Moore90efficientmemory-based} is another classical control problem, the goal is to escape a sinusoidal valley to the top of the `mountain', where the only possible actions that can be taken are to accelerate the car left or right. The car is under-actuated, meaning it cannot escape the valley by simply going right, it must plan a long trajectory using the valley to build up momentum before progressing to the top of the mountain. Fig.~\ref{fig:mountaincar} visualizes the problem.

\begin{figure}[ht!]
    \centering
    \includegraphics[width=0.25\linewidth]{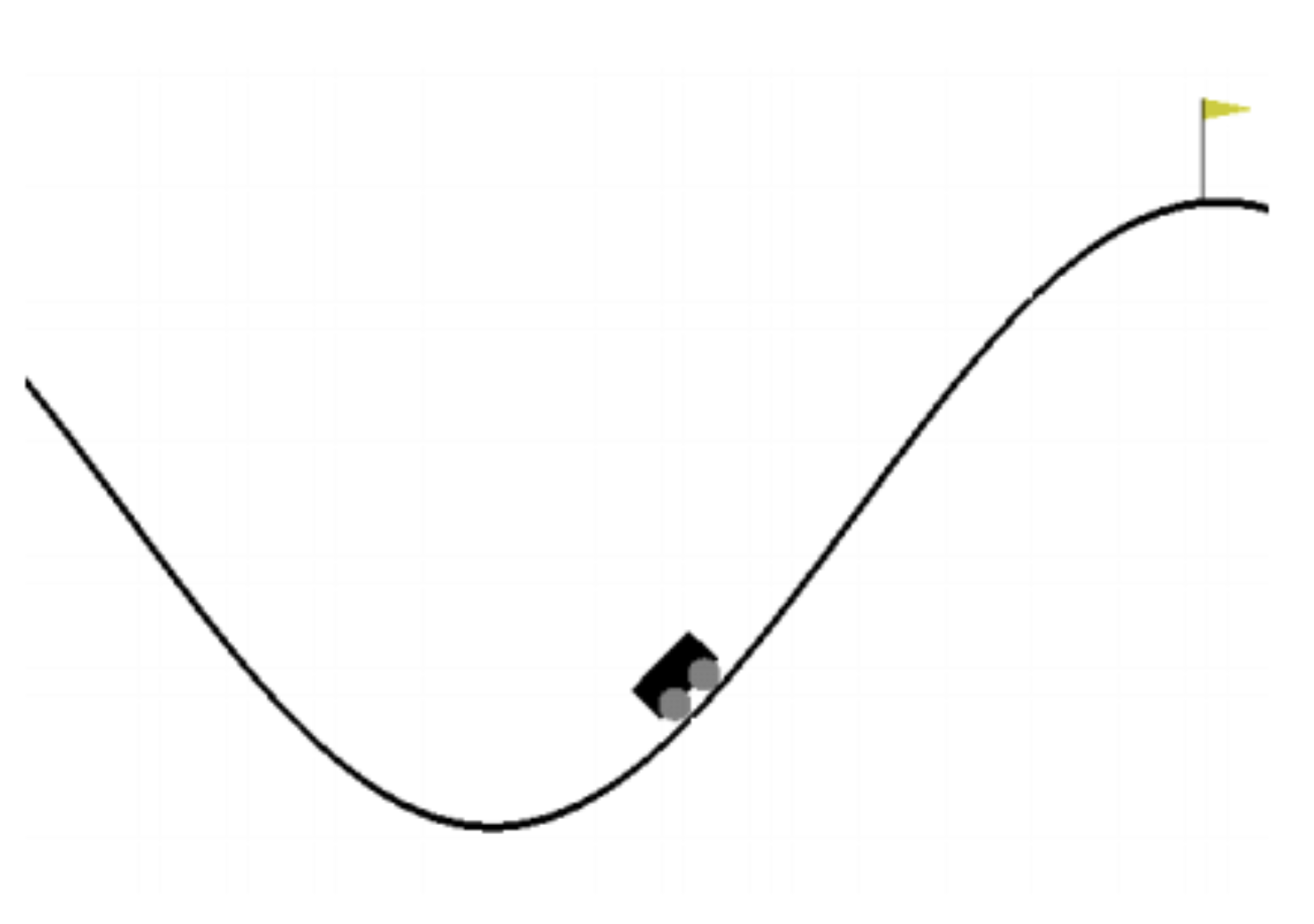}
    \caption{Visualization of the mountain car problem}
    \label{fig:mountaincar}
\end{figure}

\textbf{Dynamics.} The system dynamics are governed by the equations:
\begin{align}
    &x(t+1)_1 = x(t)_1 + x(t+1)_2 \\
    &x(t+1)_2 = x(t)_2 + F u(t) - g \cos(3x(t)_1) + w(t)
\end{align}
where, $x(t)_1$ is the position of the car and $x(t)_2$ is the velocity of the car, $u(t)$ is the force applied, with $F=0.00015$ being the force dampener, and $g=0.0025$ representing `gravity'.

\textbf{Constraints.} The state and action constraints are $\mathcal{X} = [-1.2, 0.6] \times [-0.07, 0.07] \subset \mathbb{R}^2$ and $\mathcal{U} = \{[-1.0, 1.0]\} \subset \mathbb{R}$. The safety constraints are given by the box constraints, $\mathcal{X}_{\text{safe}} = [-1.2, \infty] \times [-\infty, \infty] \subset \mathbb{R}^2$, this corresponds to not `hitting' or `falling off' the left hand side of the valley. The disturbances are drawn uniformly from the set $\mathcal{W} = [-0.001, 0.001] \subset \mathbb{R}^2$, the observed noise variances are $\sigma^2_i=1\times 10^{-6}$ for all $i = 1, \cdots, n$.

\textbf{Initial state.} The initial velocity is always $0.0$, but the initial car position state is sampled uniformly from $[-0.6, -0.4]$, i.e., $\mathcal{X}_0 = [0.0, 0.0] \times [-0.6, -0.4]$

\textbf{Reward.} A reward of $+100$ is achieved by reaching the goal position $0.45$ at the top of the mountain, otherwise the agent is penalized for the magnitude of its actions. Formally, $r(t) = 100 \text{ if } x(t+1)_1 \geq 0.45 \text{ otherwise } - 0.1 \cdot \lVert u(t)\rVert$.

\textbf{Termination condition.} The episode terminates either after 1000 timesteps or after reaching the target position $0.45$.

\textbf{Backup policy.} The backup policy $\pi_\text{backup}$ is given by the feedback matrix, $K = [0.7625, 34.5971]^T$, computed via LQR using a linearization (matrices $A, b$) of the environment dynamics around the stable equilibrium point $x_{eq} = [0.0, -\pi/6]^T$. The equilibrium state control here is $u_{eq} = \mathbf{0}$. 

\textbf{Control invariant set.} Computed using AROC \cite{kochdumper2021aroc}, with feedback matrix $K$ (see above) around the equilibrium point $x_{eq}$ taking into account the disturbances $\mathcal{W}$ and observation noise $\mathcal{V} = \text{diag}([\sigma_1^2, \ldots, \sigma_n^2])$ (with each $\sigma^2_i=1\times 10^{-6}$).

\subsection{Obstacle (2/3/4)}

We consider two simple obstacle navigation scenarios (\texttt{obstacle} and \texttt{obstacle2/3/4}). The agent is point that must navigate a 2D plane to reach a goal position, while avoiding an unsafe zone which corresponds on an obstacle. The agent can apply force to accelerate (or decelerate) along each of the two axes. 

\textbf{Dynamics.} The system dynamics for \texttt{obstacle} are governed by the equations:
\begin{align}
    &x(t+1)_1 = x(t)_1 + 2x(t+1)_3 \\
    &x(t+1)_2 = x(t)_2 + 2x(t+1)_4 \\
    &x(t+1)_3 = x(t)_3 + 5Fu(t)_1 + w(t)_1\\
    &x(t+1)_4 = x(t)_4 + 5Fu(t)_2 + w(t)_2\\
\end{align}
where, $x(t)_1$ and $x(t)_2$ are the x/y position of the agent, $x(t)_3$ and $x(t)_4$ are the x/y velocity of the agent, $u(t)_1$ and $u(t)_2$ are the x/y force applied to the agent, $F=0.001$ is the force dampener. 

The system dynamics for \texttt{obstacle2/3/4} are governed by the equations:
\begin{align}
    &x(t+1)_1 = x(t)_1 + x(t+1)_3 \\
    &x(t+1)_2 = x(t)_2 + x(t+1)_4 \\
    &x(t+1)_3 = x(t)_3 + 2Fu(t)_1 + w(t)_1 \\
    &x(t+1)_4 = x(t)_4 + 2Fu(t)_2 + w(t)_2\\
\end{align}
where, $x(t)_1$ and $x(t)_2$ are the x/y position of the agent, $x(t)_3$ and $x(t)_4$ are the x/y velocity of the agent, $u(t)_1$ and $u(t)_2$ are the x/y force applied to the agent, $F=0.001$ is the force dampener. 

\textbf{Constraints.} For both \texttt{obstacle} and \texttt{obstacle2} the state and action constraints are $\mathcal{X} = [-0.5, 3.5]^2 \times [-0.05, 0.05]^2 \subset \mathbb{R}^4$ and $\mathcal{U} = [-2.0, 2.0]^2 \subset \mathbb{R}^2$. For \texttt{obstacle} the obstacle (unsafe zone) is in the position $[0.0, 1.0]\times[2.0, 3.0]$, for \texttt{obstacle2} the obstacle (unsafe zone) is in the position $[1.0, 2.0]\times[1.0, 2.0]$. The disturbances are drawn uniformly from the set $\mathcal{W} = [-0.001, 0.001]^2 \subset \mathbb{R}^2$, the observed noise variances are $\sigma^2_i=1\times 10^6$ for all $i = 1, \cdots, n$. \texttt{obstacle3} includes a non-convex combination of one elongated obstacle (unsafe zone) in position $[1.5, 2.0]\times[0.5, 2.0]$ and one boundary condition $x_4(t) < 2.5$. \texttt{obstacle4} is more complex still, it includes a non-convex combination of two obstacles at positions $[1.5, 2.0]\times[0.5, 2.0]$

\textbf{Initial state.} The initial is fixed at $\mathbf{0} \in \mathbb{R}^4$

\textbf{Reward.} Each environment has different goal regions:
\begin{itemize}
    \item For \texttt{obstacle} the goal is to reach $x(t)_1\geq 3.0$ and $x(t)_2 \geq 0.0$, where the agent receives a reward $+30$.
    \item For \texttt{obstacle2} the goal is to reach $x(t)_1\geq 3.0$ and $x(t)_2 \geq 3.0$, where the agent receives a reward $+30$. 
    \item For \texttt{obstacle3} the goal is to reach $x(t)_1\geq 3.0$ and $x(t)_2 \geq 1.5$, where the agent receives a reward $+30$. 
    \item For \texttt{obstacle4} the goal is to reach $x(t)_1\geq 3.0$ and $x(t)_2 \geq 2.0$, where the agent receives a reward $+30$. 
\end{itemize}
In addition, the agent is provided with a dense reward e.g., $\lVert x(t+1)_1 - 3.0 \rVert - \lVert x(t)_1 - 3.0 \rVert$  (\texttt{obstacle}) to incentivise the agent to seek out the goal position. 

\textbf{Termination condition.} The episode terminates after 200 timesteps.

\textbf{Backup policy.} The backup policy $\pi_\text{backup}$ is given by the feedback matrix,
\[
K = 
\begin{bmatrix}
0.0 & 0.0 \\
0.0 & 0.0 \\
30.6386 & 0.0 \\
0.0 & 30.6386
\end{bmatrix}
\]
computed via LQR using a linearization (matrices $A, b$) of the environment dynamics around the stable equilibrium point $x_{eq} = \mathbf{0}$. The equilibrium state control here is $u_{eq} = \mathbf{0}$. 

\textbf{Control invariant set.} Is the trivially the full state space. If the point mass can slow to a stationary position without intersecting with an unsafe zone (obstacle) or boundary condition within $N$ timesteps, then this is sufficient for safety checks. $N$ just needs to be large enough so that the agent halts in this time to stationary even from its top speed, for \texttt{obstacle} $N=20$ is sufficient for \texttt{obstacle2/3/4} $N=40$ is sufficient.
\subsection{Road}

We consider a simple road environment, in both along both the 1D line (\texttt{road}) and in the 2D plane (\texttt{road\_2d}). The agent must navigate to a goal position in both of these environments, while adhering to a given speed limit. The agent can accelerate (or decelerate) along each of the axes. 

\textbf{Dynamics.} The system dynamics for \texttt{road} are governed by the equations:
\begin{align}
    &x(t+1)_1 = x(t)_1 + 10x(t+1)_2 \\
    &x(t+1)_2 = x(t)_2 + 10Fu(t) + w(t)\\
\end{align}
where, $x(t)_1$ is the position of the agent and $x(t)_2$ is the velocity of the agent, $u(t)$ is the force applied to the agent and  $F=0.0001$ is the force dampener. 

The system dynamics for \texttt{road\_2d} are governed by the equations:
\begin{align}
    &x(t+1)_1 = x(t)_1 + 10x(t+1)_3 \\
    &x(t+1)_2 = x(t)_2 + 10x(t+1)_4 \\
    &x(t+1)_3 = x(t)_3 + 5Fu(t)_1 + w(t)_1 \\
    &x(t+1)_4 = x(t)_4 + 5Fu(t)_2 + w(t)_2\\
\end{align}
where, $x(t)_1$ and $x(t)_2$ are the x/y position of the agent, $x(t)_3$ and $x(t)_4$ are the x/y velocity of the agent, $u(t)_1$ and $u(t)_2$ are the x/y force applied to the agent, $F=0.0001$ is the force dampener. 

\textbf{Constraints.} For \texttt{road} the state and action constraints are $\mathcal{X} = [-4.0, 4.0] \times [-0.1, 0.1] \subset \mathbb{R}$ and $\mathcal{U} = [-2.0, 2.0] \subset \mathbb{R}$. Similarly, for \texttt{road\_2d} the state and action constraints are $\mathcal{X} = [-4.0, 4.0]^2 \times [-0.1, 0.1]^2 \subset \mathbb{R}^2$ and $\mathcal{U} = [-2.0, 2.0]^2 \subset \mathbb{R}^2$. The safety constraints are to ensure the absolute velocity in either direction is bounded by $0.01$, in particular, for \texttt{road} we have $\mathcal{X}_{\text{safe}}= [-\infty, \infty] \times [-0.01, 0.01] \subset \mathbb{R}^2$ and for \texttt{road\_2d} we have $\mathcal{X}_{\text{safe}}= [-\infty, \infty]^2 \times [-0.01, 0.01]^2 \subset \mathbb{R}^2$. The disturbances are drawn uniformly from the set $\mathcal{W} = [-0.001, 0.001] \subset \mathbb{R}$ or $\mathcal{W} = [-0.001, 0.001]^2 \subset \mathbb{R}^2$, the observed noise variances are $\sigma^2_i=1\times 10^{-6}$ for all $i = 1, \cdots, n$.

\textbf{Initial state.} The initial is fixed at $\mathbf{0} \in \mathbb{R}^2$ or $\mathbf{0} \in \mathbb{R}^4$

\textbf{Reward.} For \texttt{road} the goal is to reach $x(t)_1\geq 3.0$, where the agent receives a reward $+20$. For \texttt{road\_2d} the goal is to reach $x(t)_1\geq 3.0$ and $x(t)_2 \geq 3.0$, where the agent receives a reward $+20$. In addition, for \texttt{road} the agent is provided with a dense reward $\lVert x(t+1)_1 - 3.0 \rVert - \lVert x(t)_1 - 3.0 \rVert$ to incentivise the agent to seek out the goal position. Similarly, for \texttt{road\_2d} the agent is provided with the dense reward $\lVert x(t+1)_{1:2} - [3.0, 3.0] \rVert - \lVert x(t)_{1:2} - [3.0, 3.0] \rVert$.

\textbf{Termination condition.} The episode terminates either after 200 timesteps or after reaching the goal position.

\textbf{Backup policy.} The backup policy $\pi_\text{backup}$ for \texttt{road} is given by the feedback matrix,
\[
K =
[
0.0,14.0425]^\top
\]
and for \texttt{road2d} analogously,
\[
K =
\begin{bmatrix}
0.0 & 0.0 \\
0.0 & 0.0 \\
14.0425 & 0.0 \\
0.0 & 14.0425
\end{bmatrix}
\]
computed via LQR using a linearization (matrices $A, b$) of the environment dynamics around the stable equilibrium point $x_{eq} = \mathbf{0}$. The equilibrium state control here is $u_{eq} = \mathbf{0}$. 

\textbf{Control invariant set.} Here it is the full safe set $\mathcal{X}_{\text{safe}}$. This can be easily seen as for any velocity within the safe set $\mathcal{X}_{\text{safe}}$ there exists some control $u^*$ that can decelerate the point mass.

\subsection{Hopper-v5: Environment setup and invariant set computation}
\label{sec:hopper}

The \texttt{Hopper-v5} environment simulates a 2D single-leg robot in MuJoCo with an $11$-dimensional observation vector (joint angles and velocities excluding the horizontal root position) and a $3$-dimensional torque control input (c.f., Fig.~\ref{fig:hopper}). 
\begin{figure}[H]
    \centering
    \includegraphics[width=0.25\linewidth]{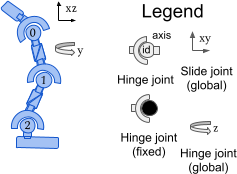}
    \caption{Visualization of \texttt{Hopper-v5}}
    \label{fig:hopper}
\end{figure}

\textbf{Dynamics.} See \cite{towers2024gymnasium}.

\textbf{Constraints.} The state and action constraints are $\mathcal{X} = [-\infty, \infty]^{11} $ and $\mathcal{U}= [-1.0, 1.0]^3$ respectively. The safety constraints $\mathcal{X}_{\text{safe}}$ are used to ensure the hopper stays ``healthy'' and upright and no extreme values. In particular, all values of the state vector must remain in the range $[-100, 100]$, the z-coordinate of the torse must lie in the range $[0.7, \infty]$ and the angle of the torse must lie in the range $[-0.2, 0.2]$. Disturbances are not considered here as the problem is complex, although the usual observation noise with assumed noise variances $\sigma^2_i=1\times 10^{-6}$ for all $i = 1, \cdots, n$ is present.
 
\textbf{Initial state.} The initial state is sampled uniformly from $\mathcal{X}_0 = [-0.005, 0.005]^{11}$ plus the $1.25$ for the torso z-coordinate.

\textbf{Reward.} The agent receives a reward of $+1$ for remaining healthy (safe), in addition they receive a bonus for moving forward, see \cite{towers2024gymnasium} for precise details.

\textbf{Terminal condition.} The episode terminates after either the agent is unhealthy or after 1000 timesteps.

\textbf{Backup policy.} To construct the required \emph{control invariant set} $\mathcal{X}_{\text{inv}}$ and backup controller $\pi_{\text{backup}}$, we first linearized the MuJoCo dynamics about the nominal equilibrium $x_{eq}$, obtaining $A, b$ matrices, the backup policy $\pi_\text{backup}$ is then obtained via LQR and is defined by the feedback matrix $K$ and further improved with Newton refinement to obtain $u_{eq}$ and maintain the equilibrium state ($f(x_{eq}, u_{eq}) \approx x_{eq}$). Thus,  the linear feedback gain $K$ defines the local stabilizing law $u = u_{eq} -K(x - x_{eq})$.

\textbf{Control invariant set.}
We then estimated $\mathcal{X}_{\text{inv}}$ as the largest verified ellipsoidal level set
\[
\mathcal{E}(P, c) = \{x : (x-x_{eq})^\top P (x-x_{eq}) \le c \}
\]
such that trajectories under $u=u_{eq}-K(x-x_{eq})$ remain within $\mathcal{E}(P,c)$. The set radius parameter $c$ was expanded geometrically by a factor of two each iteration, up to $\text{MAX\_GROW\_STEPS}=8$, using $256$ Monte Carlo rollouts per candidate. Expansion was halted after three successful iterations, beyond which verification failed, indicating that the controller could no longer guarantee invariance. The final verified ellipsoid was converted to an inscribed axis-aligned box and convex hull to form the invariant polytope $\mathcal{X}_{\text{inv}}$ used by the shield.

Although the computed $\mathcal{X}_{\text{inv}}$ sufficed for local stabilization, it was extremely small relative to the Hopper’s operational range. Consequently, the shield classified nearly all encountered states as unrecoverable, leading to frequent interventions that prevented the learned policy from acting. This illustrates a key dependency of our framework: while only $\pi_{\text{backup}}$ and $\mathcal{X}_{\text{inv}}$ are needed for safety certification, the practical performance and permissiveness of the shield depend critically on the quality of these components. 

%In highly non-linear systems like Hopper—featuring contact dynamics, actuator limits, and non-smooth transitions—linear controllers are often insufficient to yield a large enough invariant region, motivating the use of non-linear or learned backup controllers in future work.

\end{document}